\documentclass{article}

\usepackage{iclr/fancyhdr}
\usepackage{iclr/iclr2027_conference}
\usepackage{times}
\usepackage{thmtools}
\usepackage{thm-restate}
\usepackage{wrapfig}
\usepackage{algorithm}
\usepackage{booktabs}
\usepackage{algpseudocode}
\usepackage{amsmath,amssymb}
\newcommand{\X}{\mathcal{X}}
\newcommand{\W}{\mathcal{W}}

\newcommand{\mP}{\mathcal{P}}
\newcommand{\R}{\mathbb{R}}
\usepackage{amsthm}
\usepackage{algorithm}
\usepackage{algpseudocode}
\newtheorem{theorem}{Theorem}

\newtheorem{lemma}{Lemma}

\usepackage{booktabs}

\usepackage{multirow}
\usepackage{subcaption}
\usepackage{graphicx}
\usepackage{xcolor}

\usepackage[table]{xcolor}

\definecolor{bestgreen}{RGB}{220,245,220}
\definecolor{secondblue}{RGB}{222,235,255}

\newcommand{\best}[1]{\cellcolor{bestgreen}\textbf{#1}}
\newcommand{\second}[1]{\cellcolor{secondblue}#1}
\newcommand{\ourmha}{\our}

\usepackage{hyperref}
\usepackage{todonotes}
\usepackage{url}
\usepackage{iclr/defs}

\allowdisplaybreaks

\author{%
  {\fontsize{9}{10.5}\selectfont
  \mbox{Soutrik Sarangi$^{1}$\thanks{Equal contribution.}\hspace{0.55em}%
  Yonatan Sverdlov$^{2}$\footnotemark[1]\hspace{0.55em}%
  Adir Dayan$^{3}$\hspace{0.55em}%
  Haggai Maron$^{3,4}$\hspace{0.55em}%
  Nadav Dym$^{2}$}}\\[0.6em]
  $^1$Microsoft \\
  $^2$Faculty of Mathematics, Technion -- Israel Institute of Technology \\
  $^3$Faculty of Electrical and Computer Engineering, Technion -- Israel Institute of Technology \\
  $^4$NVIDIA \\[0.6em]
  \texttt{soutriksarangi14@gmail.com} \quad \texttt{yonatans@campus.technion.ac.il} \\
  \texttt{adir.dayan@campus.technion.ac.il} \quad \texttt{haggaimaron@technion.ac.il} \\
  \texttt{nadavdym@technion.ac.il}
}

\iclrfinalcopy
\let\originalmaketitle\maketitle
\renewcommand{\maketitle}{%
  \originalmaketitle
  \lhead{Preprint}%
}

\begin{document}
\bibliographystyle{iclr/iclr2027_conference}

\newcommand{\RR}{\mathbb{R}}
\newcommand{\dmax}{d_{\text{max}}}
\newcommand{\probe}{p}
\newcommand{\probeseq}{P}
\newcommand{\paramdim}{q}
\newcommand{\learningparam}{\phi}
\newcommand{\mhaalgo}{\textsc{RecoverMHA}\xspace}
\newcommand{\ys}[1]{\textcolor{green}{[YS: #1]}}
\newcommand{\nd}[1]{\textcolor{red}{[ND: #1]}}
\renewcommand{\ss}[1]{\textcolor{purple}{[SS: #1]}}
\newcommand{\fix}{\marginpar{FIX}}
\newcommand{\new}{\marginpar{NEW}}
\newtheorem{proposition}[theorem]{Proposition}

\title{Finite Probes Suffice: Identifiability and Universality for Weight-Space Learning}

\maketitle

\begin{abstract}
Learning properties of neural networks has recently attracted growing interest, with existing approaches operating either directly on network parameters or through probe-based representations of network behavior. While probing methods have shown strong empirical performance, their theoretical foundations remain limited.

In this work, we study when finite probe-based representations are sufficient for learning neural functionals. We establish general identification and universality results for probing, and show that using intermediate hidden representations can provide significantly more informative representations than relying only on final outputs. Motivated by these results, we introduce \our, a simple architecture for learning from hidden probe responses. Across a range of neural functional benchmarks, including both MLPs and Transformers, \our consistently improves over existing probing methods and achieves state-of-the-art performance. Our code is publicly available on \href{https://github.com/yonatansverdlov/HiddenProbe}{GitHub}.

\end{abstract}
\allowdisplaybreaks
\section{Introduction}
In recent years, there has been increasing interest in the topic of learning neural functionals (also known as weight space learning~\cite{haggaisurvey}). A neural functional is a mapping $\psi$ taking a neural network $f_\theta$  to a vector $\psi(\theta)\in\RR^m$. For example, one could consider a neural functional predicting the test accuracy of a given neural network. Another example is where $f_\theta$ is an INR representing an image, and then standard image tasks like image classification become neural functionals. 

A popular approach to learning neural functionals $\psi(f_\theta)$ is via \emph{weight space methods}, which think of the functionals as a function of the parameters $\tilde \psi(\param)=\psi(f_\theta)$, and then apply neural architectures directly to the parameter weights to learn $\tilde \psi$. Several recent works have adopted this approach with some success \cite{dws_paper, zhou2023neural,kofinas, zhou2023permutation}.
However, weight-space learning must contend with the fact that many distinct parameter vectors can represent the same function. Accounting for these parameter symmetries is often important for performance, but typically requires symmetry-aware architectures that are more complex to design and implement.

Probing provides an alternative way to process neural models that avoids the parameterization ambiguities of weight-space methods. Rather than representing $f_\theta$ through its parameters $\theta$, probing represents it through its evaluations on a set of inputs, called probes, $p_1,\ldots,p_N$. The learning problem is then to find a function $\hat{\psi}\bigl(f_\theta(p_1),\ldots,f_\theta(p_N)\bigr)$
that approximates the desired neural functional $\psi(f_\theta)$.
Probe-based methods, such as \cite{kahana}, have achieved strong empirical performance on several benchmarks, but currently lack a theoretical justification. In particular, while weight space methods like \cite{dws_paper} can provably approximate all neural functionals \cite{dayan2026on}, analogous universality guarantees are not known for probe-based methods. Moreover, whereas the parameters $\theta$ fully specify $f_\theta$, a finite probe set reveals only the values of $f_\theta$ at finitely many inputs and therefore appears, a priori, to provide only partial information about the realized function. This raises two closely related questions:


\textbf{Main Questions:} \begin{enumerate}
\item Can a function $f_\theta$ be uniquely reconstructed from a finite number of probes? If so, how many probes are needed? 
\item Can any neural functional $\Lambda(f_\theta) $ be approximated by a probe-based model?
\end{enumerate}

\textbf{Main Results:}
As we shall see, our answer to both questions will be positive. Under the assumption that $f_\theta$ is analytic, we show that the function $f_\theta$ can be uniquely recovered from $2q+1 $ probes, where $q$ denotes the dimension of the parameter space. As a result, we will show that any neural functions $\Lambda(f_\theta) $ can be approximated by a probe-based model with $2q+1 $ probes.

In addition, we will discuss probe based methods which represent $f_\theta$ not only through the final probe value $f_\theta(p) $, but also through the value of all intermediate layers on the probes. For both MLP and self-attention, we will show that this representation requires fewer probes to obtain the same guarantees. For fully connected neural networks, this brings down the sufficient probe count from $O(q)$ to $O(d_{\max})$, where $d_{\max}$ is the maximal width, and for self-attention blocks this brings down the count to $O(d_{in}h)$, with $d_{in}$ and $h$ being the input dimension and number of heads, respectively. 
Motivated by this we suggest a specialized model to learn properties of neural networks with such probes, which we name \our (see Figure \ref{fig:model}). Empirically, we will show that \our attains state of the art results on several standard neural functional datasets.

\begin{figure}[h]
    \centering
    \includegraphics[width=0.8\linewidth]{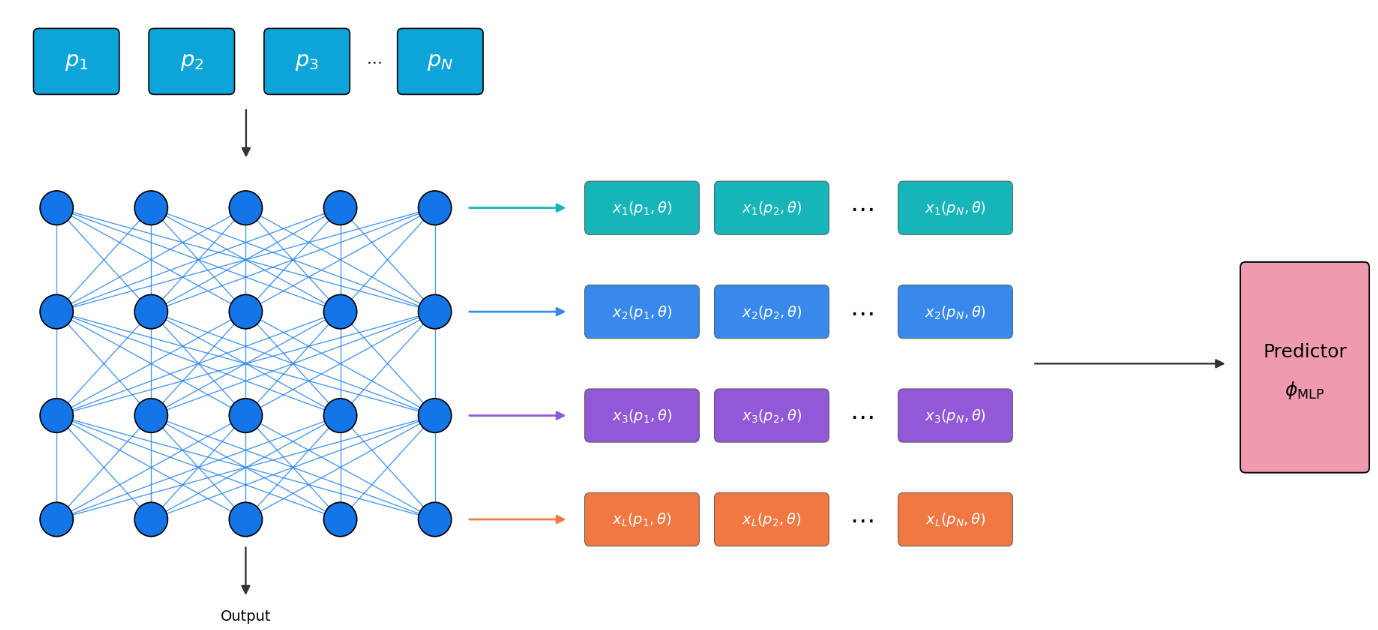}
    \caption{\textbf{Overview of HiddenProbe.}
    A set of learnable probes $p_1,\ldots,p_N$ is evaluated on a target neural network $f_\theta$, producing intermediate activations $x_i(p_j,\theta)$ and final outputs $x_L(p_j,\theta)$, which are processed by an invariant model and an MLP to predict a property of the target network.}
    \label{fig:model}
\end{figure}

\vspace{-10pt}
\subsection{Previous Work}

\paragraph{Weight-space learning.}
Weight-space learning studies models that process the parameters of other neural networks directly; we refer to \citet{haggaisurvey} for a recent survey. A central challenge in this setting is that different parameterizations can represent the same function, motivating architectures that explicitly respect weight-space symmetries. DWSNets~\citep{dws_paper} and Neural Functional Networks~\citep{zhou2023permutation} construct linear layers equivariant to hidden-neuron permutations, while Neural Functional Transformers~\citep{zhou2023neural} incorporate the same symmetry into attention-based architectures. More recent graph-based approaches, including Graph Metanetworks~\citep{lim2023graph} and Neural Graphs~\citep{kofinas}, represent networks as computational graphs, enabling weight-space processing across a broader range of architectures.
Beyond permutations, subsequent methods incorporate additional symmetries, such as ScaleGMN~\citep{kalogeropoulos2024scale} and Monomial-NFN~\citep{tran2024monomial}, which account for scaling and sign symmetries induced by particular activation functions.

\paragraph{Identification and learning from probes}
Prior work in weight-space learning has explored combining network weights with responses to input probes to predict network properties~\citep{rnnprobes,kofinas,Horwitz_2025_CVPR}. Complementary approaches rely solely on final outputs evaluated at learned probes~\citep{kahana}. Beyond weight-space learning, interpolation and identification of neural-network from input--output observations has also been studied in learning theory~\citep{vardi2021optimalmemorizationpowerrelu}. More recently, \citet{bhattamishra2026,kimkim2026} have investigated recovering the weights of frozen transformers from query outputs. Beyond final outputs, hidden activations provide additional information about a network's internal representations. Prior work has used these activations to study representation properties~\citep{alain2018understandingintermediatelayersusing,dynamo} and, more recently, to investigate model behavior through mechanistic interpretability~\citep{neelnanda}.
In natural language processing, probing classifiers similarly predict
linguistic properties from a model's hidden representations to study
the information they encode~\citep{belinkov2022probing}.
Within weight-space learning, \citet{heo2026linearprobesmissmultiview} similarly probe selected hidden layers to predict network properties.

To the best of our knowledge, prior work has not systematically studied property prediction from complete activation traces across all hidden layers, collected by evaluating a network on input probes. We address this gap by studying when neural functions can be identified from the activations induced by input queries, and how universal approximators can be designed to predict network properties from final outputs and hidden activation traces.

\subsection{Preliminaries}
Consider a parametric family
$f:\Xcal\times\paramspace\to\R^m$,
where $\Xcal\subseteq\R^d$ is the input domain and
$\paramspace\subseteq\R^q$ is the parameter domain.
Every fixed parameter $\param\in\paramspace$ defines a function
$f_\param(x)=f(x;\param)$.
We use $f(\cdot;\param)$ and $f_\param$ interchangeably.
We denote the collection of all such functions by
$\mathcal F_\paramspace$.

Our motivating application is the problem of learning neural functionals\footnotemark $\Psi: \mathcal{F}_\paramspace \to \RR^m$, and our focus is on analyzing methods which rely on probing. We will now define these methods, and focus on two variants: final-layer probing and hidden-layer probing.

\footnotetext{Our focus is on neural functionals which depend only on the function $f_\theta$ and not on the specific parameterization $\theta$. In the terminology of \cite{dayan2026on} these are called  function-space functionals}

\paragraph{Final-layer probing}
Final-layer probing represents a function $f_\theta$ by considering its output on a number of input points, called probes. Formally, given an $N$ tuple of points $\Pcal\coloneqq(p_1,\cdots, p_N)\in\Xcal^N$, the model $f_\theta$ is mapped to an $m\times N$ dimensional matrix
$$\final(\Pcal;\theta)\coloneqq(f_{\theta}(p_1),\cdots,f_{\theta}(p_N))\in\R^{m\times N}.$$

This matrix is processed to learn properties of the neural network, e.g., by applying an MLP $\phi_{\text{MLP}}\circ \final(\Pcal;\theta) $ to the output, to obtain a neural model for learning the functional $\Lambda(f_\theta) $. A representative example of this approach is the ProbeGen method of \cite{kahana}, which uses this approach precisely with a carefully designed method for learning the probes. 

\paragraph{Hidden-Layer Probing} \cite{heo2026linearprobesmissmultiview,Horwitz_2025_CVPR}, 
In applications, the parametric functions $f_\theta$ are typically constructed by a composition of simpler functions (e.g. layers), that is 
 $f_\theta=f_{L,\theta}\circ\ldots\circ f_{2,\theta}\circ f_{1,\theta} $. When this compositional structure is present, hidden layer probing relates to the practice of retaining the output of all intermediate computations on the probes, and not only the final output. Formally, for every input $p\in \X$ and parameter $\theta \in \Theta$ we define
$$x_0(p,\theta)=p, \, x_1(p,\theta)=f_{1,\theta}\left(x_0(p,\theta)\right), \ldots,  x_L(p,\theta)=f_{L,\theta}\left(x_{L-1}(p,\theta)\right).$$

The hidden-probe representation of $f_\theta$ is defined via the mapping
$$\all(\theta,\P):=\left(x_i(p_j,\theta), 0\leq i \leq L, \, 1\leq j \leq N  \right) $$


\subsection{Overview} In the remainder of the paper, we will prove that analytic neural networks of any kind can be identified by a number of final-layer probes proportional to the network's parameters (Section \ref{sec:general}), and will show, for both MLPs and transformers, that the required number of probes can be greatly reduced when allowing Hidden-Layer probing (Sections \ref{sec:mlp}-\ref{sec:mha}). We then describe \our, a method for learning using hidden probes (Section \ref{sec:hiddenprobe-master}), prove the universality of probe-based methods (Section \ref{sec:approx}), and finally demonstrate the effectiveness of \our on neural functional learning (Section \ref{sec:expt-main}).
\section{Identifying Analytic Neural Networks from Probe evaluations}\label{sec:general}
Our first result is that final-layer probe evaluations are enough to identify \emph{analytic} parametric functions, and the number of probes is proportional to the dimension of the parameter space. This theorem applies to all analytic parametric families including MLPs and Transformers, provided that the activations used in these models are analytic. Examples of such activations are tanh, sigmoid, SiLU and sine activations.



\begin{restatable}[Unique reconstruction from finite samples]{theorem}{thmfinalprobes}
\label{thm:finalprobes}
Let $f:\RR^d \times \RR^q \to \RR^m $ be an analytic function. Then, for almost any choice of probes $p_1,\ldots,p_{2q+1} \in \RR^d$,  we have that for all
$\param_1,\param_2 \in \paramspace$, 
$$f(p_j,\theta_1)=f(p_j,\theta_2), \forall j=1,\ldots,2q+1 \Longleftrightarrow\ f(\cdot,\param_1)\equiv f(\cdot,\param_2)\ \text{on }\RR^d$$
\end{restatable}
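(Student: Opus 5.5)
The direction ``$\Leftarrow$'' is trivial, so the work is in ``$\Rightarrow$''. The plan is a dimension-counting argument in the spirit of Whitney/Takens embedding theorems, adapted to analytic families.

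\emph{The bad set.} Consider the set of ``bad pairs''
\[
B=\{(\theta_1,\theta_2)\in\RR^q\times\RR^q:\ f(\cdot,\theta_1)\not\equiv f(\cdot,\theta_2)\},
\]
an open subset of $\RR^{2q}$. For a pair in $B$, the function $g_{\theta_1,\theta_2}(x)=f(x,\theta_1)-f(x,\theta_2)$ is analytic in $x$ and not identically zero. Its zero set $Z_{\theta_1,\theta_2}\subset\RR^d$ is therefore a proper analytic subset, of Lebesgue measure zero and of dimension at most $d-1$.

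\emph{The incidence variety.} We must show that the set $E$ of probe tuples $(p_1,\dots,p_N)\in(\RR^d)^N$, $N=2q+1$, for which some $(\theta_1,\theta_2)\in B$ satisfies $g_{\theta_1,\theta_2}(p_j)=0$ for all $j$, has measure zero. Define
\[
I=\{(\theta_1,\theta_2,p_1,\dots,p_N): (\theta_1,\theta_2)\in B,\ g_{\theta_1,\theta_2}(p_j)=0\ \forall j\}.
\]
This is an analytic subset of the open set $B\times(\RR^d)^N$. The fibre over each $(\theta_1,\theta_2)\in B$ is $Z_{\theta_1,\theta_2}^N$, which has dimension at most $N(d-1)$. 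Hence $\dim I\le 2q+N(d-1)$. The set $E$ is the image of $I$ under the analytic projection to $(\RR^d)^N$, so it is contained in a countable union of images of analytic manifolds of dimension at most $2q+Nd-N$. With $N=2q+1$ this bound equals $Nd-1<Nd$. Since images of lower-dimensional smooth manifolds under smooth maps are Lebesgue null (Sard / a Lipschitz image argument), $E$ is null.

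\emph{Main obstacle: making the dimension bound rigorous.} $I$ is only a real-analytic set, possibly singular and semianalytic rather than algebraic, and $B$ is not closed. The plan is to work with real-analytic subsets of open sets and use their stratification into countably many analytic submanifolds, possibly via Łojasiewicz stratification or a locally finite decomposition. The fibre-dimension inequality $\dim I\le \dim(\text{base})+\max(\text{fibre dim})$ then applies stratum by stratum.

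If this becomes technical, a cleaner alternative is to cover $B$ by countably many compact boxes $K$. On each box, Fubini and induction over the probes give the result. Fix $(\theta_1,\theta_2)$; the probability that a random $p$ lands in $Z_{\theta_1,\theta_2}$ is zero. To get uniformity over the $2q$-dimensional family, one argues that each additional generic probe cuts the dimension of the surviving parameter set $\{(\theta_1,\theta_2)\in K: g(p_1)=\dots=g(p_k)=0\}$ by at least one, until it becomes empty after $2q+1$ steps. This requires handling components where $g(p)$ vanishes identically in $p$, which are exactly excluded by $B$.

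I expect this uniform dimension-drop step to be the crux. A countable union of null sets then finishes the proof. Because $\RR^q\times\RR^q$ has dimension $2q$, the count $2q+1$ arises exactly as in Whitney's embedding theorem.
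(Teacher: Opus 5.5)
Your argument is correct, but it takes a different route from the paper's. The paper does no dimension counting of its own. It sets $G(\theta_1,\theta_2;p)=\|f(p;\theta_1)-f(p;\theta_2)\|_2^2$, a scalar analytic function on $\mathbb{R}^{2q}\times\mathbb{R}^d$. It then applies the Finite Witness Theorem of Amir et al.\ with the pair $(\theta_1,\theta_2)$ as the $2q$-dimensional variable and $p$ as the witness. So $2q+1$ generic probes cut out the same common zero set as all of $\mathbb{R}^d$.

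You instead prove this instance directly by a one-shot incidence count, and the count is sound:
\begin{itemize}
\item $B$ is open, since its complement is an intersection of closed sets.
\item $I$ is cut out of $B\times(\mathbb{R}^d)^N$ by globally analytic equations. It is therefore an honest analytic set in an open set, not merely semianalytic.
\item A \L{}ojasiewicz stratification gives countably many connected analytic strata $S$.
\item At a point of maximal rank of the projection $S\to\mathbb{R}^{2q}$, the constant rank theorem yields a local fibre that is a submanifold of $\{(\theta_1,\theta_2)\}\times Z_{\theta_1,\theta_2}^N$. This gives $\dim S\le 2q+N(d-1)=Nd-1$.
\item The mini-Sard lemma makes the projection $E$ null. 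It is null uniformly over all pairs, as the statement requires.
\end{itemize}

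Your route buys self-containment and makes the origin of $2q+1$ transparent: $2q+N(d-1)<Nd$ iff $N>2q$, exactly as in Whitney's count. The paper's route buys brevity and a single reusable black box. It applies the same tool again for the layerwise bound $q+q_{\max}+1$ in Theorem~\ref{thm:general-hidden}, for the effective-dimension count in Corollary~\ref{prop:att-generic}, and for the multiplicity witnesses in Lemma~\ref{lemma:att-universal}. In its general form, that tool also handles lower-dimensional parameter sets, such as the rank-constrained effective attention parameters, without redoing the stratification.

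Your fallback ``dimension drop per probe'' is just the inductive form of the same count. Since the one-shot bound already closes the argument, you do not need it.
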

\begin{proof}[Proof Idea]
Intuitively we can interpret this result as follows: although the family of functions $\{f_\theta| \quad \theta\in \RR^q \} $ contains infinitely many functions, it has a finite  dimensional  parameterization in $\RR^q$, and so it is reasonable that a finite number of $\sim q$ evaluations can separate the realized functions. This is similar to the well-known result that a degree $n$ polynomial can be identified by $n+1$ samples. However, this rule of thumb is not always true. For example, for ReLU networks, no finite number of probes suffices to uniquely identify the function (see Subsection~\ref{subsec:relu}). The proof that such a result does hold for analytic functions is based on the Finite Witness Theorem by \cite{amir2023neural}.
\end{proof}

In Section \ref{sec:approx} we will show that this result implies universal approximation:  probe based methods with $2q+1$ probes can approximate all neural functionals. This result is substantial, since probe based methods typically use a small number of probes, while apriori it would seem that universal approximation would require an exhaustive sampling of the input domain. At the same time, the number of probes suggested by the theorem is still much larger than what is used in practice. To alleviate this disadvantage, we will next focus on hidden probes, where function identification is possible with a much lower probe cardinality.

\section{Probing MLPs}\label{sec:mlp}
In this section we specialize to the case where $f(x;\theta) $ is an MLP. The main message of this  section is that hidden probes substantially reduce the number of probes necessary to identify an MLP.


We begin our discussion by  setting our notation for the MLP model class. An MLP $f(p;\param) $ is determined by a choice of activation $\sigma$ and a width vector 
$\Db\coloneqq(d_0, d_1, \cdots,d_{L-1}, d_L)$. An MLP $f(p;\param) $ is then defined via
\begin{equation}
    x_0 \coloneqq p,
    \,
    x_i \coloneqq \sigma\left(W_i x_{i-1} + b_i\right)
   ,
    \, \forall i\in [L],
    \,
    \label{eq:mlp}
\end{equation}
Here, $W_i\in\R^{d_i\times d_{i-1}}$ and
$b_i\in\R^{d_i}$, for $i\in[L]$, are the weights and biases, which together
define the parameters
$\param\coloneqq[W_i,b_i]_{i\in[L]}$. The intermediate features $x_i$ depend on $p,\theta$ and so we denote them by $x_i(p,\theta) $. The final function is defined by the last feature $f(p,\theta)=x_L(p,\theta) $. We denote the total dimension of the parameter space by $q_\Db $.

\subsection{Hidden probes provide useful information:}\label{sub:hidden_MLP_an}
For MLPs with analytic activations, Theorem \ref{thm:finalprobes} implies that a finite number of probes, proportional to $q_\Db$, is sufficient. For hidden probes, the number of necessary probes depends only on the maximal width:

\begin{restatable}[Reconstruction from hidden layer activations]{theorem}{thmhiddenweaker}
    \label{thm:hidden-weaker}
    Let $\sigma:\RR\to \RR$ be an  analytic, injective, non-polynomial activation.  Let $f(p,\theta) $ denote an MLP with activation $\sigma$  and widths $\Db\coloneqq(d_0, d_1, \cdots,d_{L-1}, d_L)$. Assume that $N>\max_i d_i $. Then for almost every $\mP\in\X^{N}$, we have that for almost every  $\theta_1,\theta_2$,
         $$\all(\param_1,\mP)=\all(\param_2,\mP) \implies\,\param_1=\param_2,\,\implies f(\cdot, \theta_1)=f(\cdot, \theta_2).$$ 
 
\end{restatable}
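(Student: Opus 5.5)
The plan is to reconstruct the parameters layer by layer from the hidden-probe data. Since $\sigma$ is injective, each layer can be inverted. Knowing $x_{i-1}(p_j,\theta)$ and $x_i(p_j,\theta)=\sigma(W_ix_{i-1}(p_j,\theta)+b_i)$ for all $j$ determines the pre-activations
$$z_i(p_j)=\sigma^{-1}\bigl(x_i(p_j,\theta)\bigr)=W_i x_{i-1}(p_j,\theta)+b_i .$$
These are linear equations in $(W_i,b_i)$. Define the augmented matrix $X_{i-1}(\theta,\mP)\in\RR^{(d_{i-1}+1)\times N}$ whose $j$-th column is $(x_{i-1}(p_j,\theta),1)$. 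The equations read $[W_i\ b_i]\,X_{i-1}=Z_i$, so $(W_i,b_i)$ is uniquely determined as soon as $X_{i-1}$ has full row rank $d_{i-1}+1$. This needs $N\ge d_{i-1}+1$, which follows from $N>\max_i d_i$.

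Now suppose $\all(\theta_1,\mP)=\all(\theta_2,\mP)$. Then the features $x_{i-1}(p_j,\theta_1)=x_{i-1}(p_j,\theta_2)$ coincide, so both parameter vectors see the same matrix $X_{i-1}$ and the same $Z_i$. Hence $(W_i^{(1)},b_i^{(1)})=(W_i^{(2)},b_i^{(2)})$ for every $i$ at which $X_{i-1}(\theta_1,\mP)$ has full row rank. This gives $\theta_1=\theta_2$, and trivially $f(\cdot,\theta_1)=f(\cdot,\theta_2)$. It therefore suffices to show that the following set has full measure: the set of $(\mP,\theta)$ such that $X_{i-1}(\theta,\mP)$ has full row rank for all $i\in[L]$. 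Fubini then gives that for almost every $\mP$ the rank condition holds for almost every $\theta_1$, which is what we need. Only the condition for $\theta_1$ is required, since it is the same matrix for $\theta_2$.

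For the rank condition, fix $i$ and a set $S$ of $d_{i-1}+1$ probe indices. The function $(\mP,\theta)\mapsto\det X_{i-1}(\theta,\mP)_{S}$ is analytic in $(\mP,\theta)$, because $\sigma$ is analytic. The zero set of a nonzero real-analytic function on a connected domain has measure zero. So it suffices to exhibit one point where this determinant is nonzero. For $i=1$ the columns are $(p_j,1)$, and generic affinely independent probes work. For $i\ge2$, I would choose $\theta$ with all earlier layers "generic" and argue inductively. Suppose some nonzero vector $(a,c)$ satisfied $a^\top x_{i-1}(p,\theta)+c=0$ for all $p$ in an open set. By analyticity this would hold for all $p$. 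The expression $a^\top\sigma(W_{i-1}x_{i-2}(p)+b_{i-1})+c$ has this form, so it suffices to show that the functions $1,\sigma(w_1^\top y+\beta_1),\dots,\sigma(w_{d}^\top y+\beta_d)$ are linearly independent as functions of $y$ for generic distinct $(w_k,\beta_k)$. Here $y$ ranges over the image of the previous layer, which contains an open set when widths permit; otherwise restrict to a line. With linear independence in hand, we can pick $d_{i-1}+1$ probes making the evaluation matrix nonsingular. This follows from the standard fact that linearly independent functions admit a nonsingular evaluation matrix at some points.

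The main obstacle is this linear independence step. Restricting to a single line $y=y_0+tv$ reduces it to the one-variable functions $t\mapsto\sigma(\alpha_k t+\gamma_k)$ with generic distinct $(\alpha_k,\gamma_k)$. I would first try to use the non-polynomiality of $\sigma$: a nontrivial relation $\sum_k a_k\sigma(\alpha_kt+\gamma_k)=-c$, with distinct $\alpha_k$, can be differentiated and expanded in Taylor series at $0$. This gives $\sum_k a_k\alpha_k^n\sigma^{(n)}(\gamma_k)=0$ for all $n\ge1$. Since $\sigma$ is non-polynomial, infinitely many derivatives are nonzero, and one can choose $\gamma_k=\gamma$ equal with $\sigma^{(n)}(\gamma)\ne0$ for infinitely many $n$. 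Then a Vandermonde argument in the distinct $\alpha_k$ forces $a=0$. Because one good parameter suffices, we are free to choose these $\theta$ coordinates conveniently, and the rest follows by analyticity.
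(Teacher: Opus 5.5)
Most of your skeleton is sound. Inverting $\sigma$ gives $[W_i\ b_i]X_{i-1}=Z_i$. Only $\theta_1$'s rank condition matters. It then suffices to show each $(\mP,\theta)\mapsto\det X_{i-1}(\theta,\mP)_S$ is a nonzero analytic function and apply Fubini. The paper argues differently here: it fixes any pairwise distinct, affinely spanning probe tuple and proves full rank for almost every $\theta$ by induction over layers with Tonelli, which also identifies the good probes explicitly.

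The gap is in producing the witness for $i\ge3$. You need $1,\sigma(w_1^\top y+\beta_1),\ldots,\sigma(w_{d_{i-1}}^\top y+\beta_{d_{i-1}})$ to be linearly independent on $\Gamma=\{x_{i-2}(p,\theta):p\in\X\}$, and you propose restricting to a line $y_0+tv$ inside $\Gamma$. But $\Gamma$ is the smooth image of a $d_0$-dimensional set. Whenever $d_{i-2}>d_0$ (the typical case, e.g.\ the INR setting with $d_0=2$ and width $32$), $\Gamma$ is Lebesgue-null in $\RR^{d_{i-2}}$ and contains no open set. It need not contain a segment either. For $d_0=1$ and generic earlier layers, a segment in $\Gamma$ would force an affine relation $\lambda^\top x_{i-2}(p,\theta)+\mu=0$ with $\lambda\ne0$ on uncountably many $p$, hence identically. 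That contradicts exactly the independence you propagated from the previous layer. So choosing the earlier layers generic works against the line restriction, and Taylor-expanding along a curve in $\Gamma$ gives no Vandermonde system.

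The missing idea is to put the one-dimensional structure in weight space rather than in input space. The paper keeps the probes fixed and shows the sampled features $y_n=x_i(p_n;\theta)$ stay pairwise distinct for generic weights, using injectivity of $\sigma$ and $W_i(y_n-y_{n'})\ne0$. Lemma~\ref{lem:finite-point-ridge-span} then shows that for any $K$ distinct points, generic ridge rows make the matrix with a row of ones and rows $(\sigma(w_k^\top y_n+b_k))_n$ invertible. Its proof chooses $v$ with the $v^\top y_n$ distinct, sets $w=av$, and expands in the scale $a$.

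The same repair works inside your framework. Take all rows parallel, $w_k=\alpha_k v$, with a common bias, so only the scalar $s(p)=v^\top x_{i-2}(p,\theta)$ enters. Once $s$ is nonconstant it maps a ball in $\X$ onto a nondegenerate interval, and your one-variable argument applies.

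That one-variable argument also needs correcting. Nonvanishing of infinitely many $\sigma^{(n)}(\gamma)$ is not enough: for $\sigma=\tanh$, $\gamma=0$, $\alpha=\pm1$ one has $\tanh(t)+\tanh(-t)\equiv0$. Instead choose $\gamma$ with $\sigma^{(n)}(\gamma)\ne0$ for $n=1,\ldots,d_{i-1}$. This is possible because each $\sigma^{(n)}$ is a nonzero analytic function with discrete zeros. Then take the $\alpha_k$ distinct and nonzero, as in the paper's choice of $b_0$.
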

In this result we add the assumption that the  activation is non-polynomial and injective (this is the case for e.g., tanh and sigmoid, but not for sine activation). Moreover, identifiability is not on the whole domain, but rather it holds for parameters $\theta_1,\theta_2$ outside an exceptional Lebesgue-null set $\mathcal E$. The proof in fact shows a stronger statement: the  set $\mathcal E$ is a zero set of an analytic function, and hence is of strictly lower dimension than the parameter space, and has  empty interior. In general, whenever we talk about "almost everywhere" in this paper, this stronger notion applies.

\begin{proof}[Proof Idea]
The idea of the proof is based on a straightforward algorithm described in detail in in Algorithm \ref{alg:weights} of Appendix~\ref{app:proofs:mlphidden}. Intuitively, the parameters $W_i,b_i$ of the $i$-th layer satisfy the linear equations
$$\sigma^{-1}\left(x_{i}(p_j,\theta)\right)=W_ix_{i-1}(p_j,\theta)+b_i, \quad  j=1,\ldots,N $$
The proof shows that for generic probes and parameters, these linear equations  have a unique solution.
\end{proof}

\paragraph{Tightness of results}
in Appendix~\ref{app:proofs:lower} we give examples of analytic MLPs that cannot be identified with $d_{\max}$ hidden probes (exactly matching the bound of Theorem~\ref{thm:hidden-weaker}) For final-layer probes, we construct, for every
$N\le 2q_D-2$, a nonempty open set of probe tuples that
fail to separate all realized functions. (Theorem~\ref{thm:finalprobes} gives a sufficient bound of $2q_\Db+1$ generic probes).

\subsection{Identifying ReLU Networks from Probe evaluations}
\label{subsec:relu}
\begin{wrapfigure}{r}{0.35\textwidth} 
     \vspace{-20pt}
    \centering
    \includegraphics[width=\linewidth]{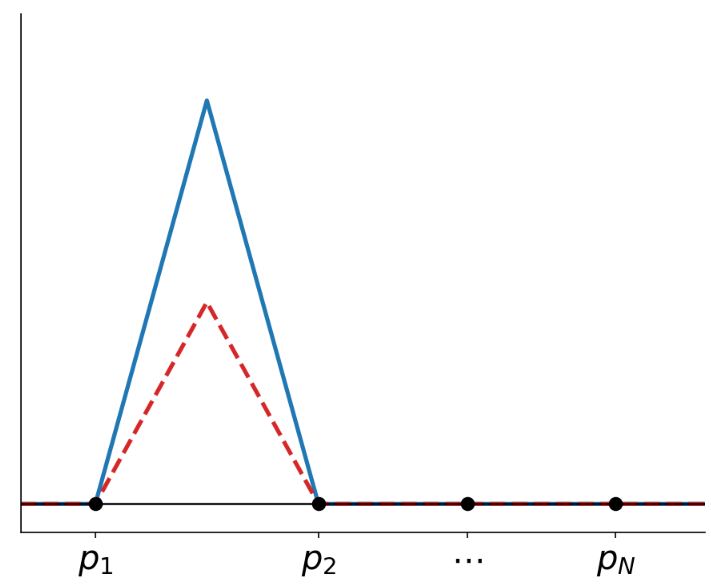} 
    \caption{For any finite set of probes, a nonzero ReLU `hat' function can be constructed whose support avoids all probes.}
    \label{fig:hat}
    \vspace{-20pt}
\end{wrapfigure}
Our results on identifiability of an MLP from a finite number of probes require analytic activations, and thus do not apply to the common ReLU activation. In fact, such a result is not possible for ReLU activations:

\begin{restatable}[Finite probes do not identify ReLU networks]{proposition}{proprelunon}
\label{prop:relu-nonidentifiability}
For any finite set of probes $p_1,\ldots,p_N\in[0,1]$, there exist two distinct ReLU networks, each with three hidden neurons, that agree on all the probes. Consequently, no finite set of probes can uniquely identify all ReLU networks.
\end{restatable}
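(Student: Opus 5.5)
Since the probes are finite, the set $\{p_1,\ldots,p_N\}\subset[0,1]$ omits some nonempty open interval inside $[0,1]$ (or just beyond its edge), and the plan is to place a ReLU ``hat'' bump there, as in Figure~\ref{fig:hat}. First, sort the probes and choose an interval $(a,c)$, with $a<c$, that contains none of them. For instance, take the largest gap between consecutive points of $\{0,p_1,\ldots,p_N,1\}$; if all probes coincide with each other or with an endpoint, an interval such as $(1,2)$ can be used instead. Set $b=(a+c)/2$.

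The construction is a one-hidden-layer scalar ReLU network with three hidden neurons,
\[
h(x)=\sigma(x-a)-2\,\sigma(x-b)+\sigma(x-c),\qquad \sigma(t)=\max(t,0).
\]
A direct piecewise check on the four regions $x\le a$, $a\le x\le b$, $b\le x\le c$ and $x\ge c$ shows that $h(x)=0$ for $x\le a$ and for $x\ge c$. On $[a,b]$ it equals $x-a$, and on $[b,c]$ it equals $c-x$. So $h$ is a continuous hat with peak $h(b)=(c-a)/2>0$, supported exactly on $[a,c]$.

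For the two networks, take $f_1=h$, with hidden weights $(1,1,1)$, biases $(-a,-b,-c)$ and output weights $(1,-2,1)$, and take $f_2$ to be the same architecture with all output weights (and output bias) set to zero, so $f_2\equiv 0$. Both have exactly three hidden neurons. Every probe $p_j$ lies outside $(a,c)$, so $f_1(p_j)=0=f_2(p_j)$ for all $j$. On the other hand, $f_1(b)\neq f_2(b)$, so the two realized functions are distinct; they are not merely different parameterizations of one function. If one insists that $f_2$ also be a nonzero function, use $f_2=2h$ instead, which again vanishes on all probes and differs from $h$ at $b$.

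Since $N$ and the probe locations were arbitrary, no finite probe set separates all ReLU networks, even within the fixed three-neuron architecture; this is the ``consequently'' clause. The only step needing any care is choosing the gap $(a,c)$ in the degenerate cases, for example when probes sit at $0$ and $1$ or there is a single probe. Choosing the gap from the finite set $\{0,1\}\cup\{p_j\}$, or moving outside $[0,1]$ if the statement allows, handles these cases uniformly. Nothing from the earlier results is needed. The construction also explains why Theorem~\ref{thm:finalprobes} fails here: $h$ is not analytic, so vanishing on an open set does not force it to vanish everywhere.
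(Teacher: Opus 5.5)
Your proposal is correct and follows essentially the paper's argument: the paper also places the three-neuron hat $\max\{\epsilon-|x-c|,0\}$ in a probe-free window inside $(0,1)$. The only difference is that the paper compares two hats of widths $\epsilon_1<\epsilon_2$ at the same center, whereas you compare $h$ with $0$ or with $2h$; also, your fallback to $(1,2)$ is never needed, since the gaps of $\{0,1\}\cup\{p_j\}$ always include one of length at least $1/(N+1)$ inside $[0,1]$.
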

\begin{proof}[Proof Idea]
For given probes $p_1,\ldots,p_N$, it is possible to construct two different hat functions which are zero on the probes: for example, the functions plotted in blue and orange in \autoref{fig:hat}. Full proof is in Appendix~\ref{app:proofs:relu}. 
\end{proof}

Since exact identification of ReLU networks from finitely many probes is impossible, we instead study approximate identification in $L^2(\mu)$. We show that, with probability at least $1-\delta$ over the sampled probes, any network that agrees with the target network on those probes is within $\epsilon$ of the target in $L^2(\mu)$. This is similar in spirit to results from PAC learning. The following theorem gives a semi-formal description of these results, which are described and proved in more detail in Theorem \ref{thm:relu-pac} of Appendix~\ref{app:proofs:relu}.
\begin{restatable}[semi-formal-PAC identification of $\relu$ networks]{theorem}{relupacinformal}
\label{thm:relu-pac-informal}
Consider the family of $\relu$ MLPs with widths $\Db\coloneqq(d_0, d_1, \cdots,d_{L-1}, d_L)$ and $q$ parameters. Let $\mu$ be a compactly supported probability distribution on $\RR^{d_0}$, and assume that the parameter set is compact.  Fix $\epsilon,\delta\in (0,1)$ and a parameter vector $\param_0$. Then there exist constants $C_1(\epsilon,\delta),\,C_2(\epsilon,\delta)>0$ such that 
\begin{itemize}
\item For $N_1=C_1\cdot L q \ln(q)$ final layer samples, with probability of at least $1-\delta$ on the samples, any $\param$ which agrees with $\param_0$ on the samples will satisfy $\|f(\cdot;\param)-f(\cdot;\param_0)\|_{L^2(\mu)}\le\epsilon.$
\item When hidden layer samples are used, only $N_2=C_2\cdot q\ln(q) $ samples are sufficient.
\end{itemize}
\end{restatable}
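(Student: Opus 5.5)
Our plan is to reduce both bullets to a uniform convergence statement for a suitable class of functions with bounded pseudo-dimension (or VC dimension), and then apply the standard ``consistent learner'' PAC argument. For a fixed target $\param_0$, consider the loss class
$$\mathcal{G}=\Bigl\{\, g_\param(x)=\bigl\|f(x;\param)-f(x;\param_0)\bigr\|^2 \;:\; \param\in\paramspace\Bigr\}.$$
By compactness of $\paramspace$ and of $\mathrm{supp}(\mu)$, and continuity of ReLU networks, $\mathcal{G}$ is uniformly bounded by some $B$. Every $\param$ that agrees with $\param_0$ on the samples has empirical risk $\frac1N\sum_j g_\param(p_j)=0$. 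It therefore suffices to show that, with probability $1-\delta$, every $g\in\mathcal{G}$ with zero empirical risk has $\mathbb{E}_\mu g\le\epsilon^2$. Since the realizable case applies, we will use the relative-deviation or realizable bound for $[0,B]$-valued classes (e.g.\ via the VC dimension of the class of superlevel sets $\{x: g_\param(x)>t\}$). This gives $N=O\bigl(\tfrac{B}{\epsilon^2}(\mathrm{Pdim}(\mathcal{G})\ln\tfrac{B}{\epsilon^2}+\ln\tfrac1\delta)\bigr)$. Constants depending on $\epsilon,\delta,B$ are absorbed into $C_1,C_2$.

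\textbf{Final-layer case.} We bound $\mathrm{Pdim}(\mathcal{G})$. Each $g_\param(x)-t$ is a piecewise polynomial in $(\param,t)$. We treat the coordinates of $\param_0$ as fixed and $x$ as the input, and each output coordinate is handled separately or combined by a sum of squares, which raises the degree by only a constant. We will invoke the Bartlett--Harvey--Liaw--Mehrabian bound for piecewise-linear networks: the sign patterns over $N$ points are counted by refining the parameter space layer by layer. At layer $i$ the pre-activations are polynomials in $\param$ of degree $\le i$, and Warren's theorem bounds the number of regions. This yields $\mathrm{VCdim}=O(Lq\ln q)$, and the same bound holds for the squared-difference class, whose degree is only doubled. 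This gives $N_1=C_1 Lq\ln q$.

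\textbf{Hidden-layer case.} Agreement now means $x_i(p_j,\param)=x_i(p_j,\param_0)$ for all $i,j$. The key point is that layer $i$'s output on the sample depends on $\param$ only through $(W_i,b_i)$ applied to the \emph{known} input $x_{i-1}(p_j,\param_0)$. We therefore decouple: for each layer $i$, consider the class $h_i(z)=\|\sigma(W_iz+b_i)-\sigma(W_i^0z+b_i^0)\|^2$ on inputs $z=x_{i-1}(x,\param_0)$, where $z$ is distributed as the pushforward $\mu_i$ of $\mu$. This is a depth-one network with $q_i$ parameters, so its pseudo-dimension is $O(q_i\ln q_i)$, with no factor $L$. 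Applying the realizable bound per layer with accuracy $\epsilon_i$ and confidence $\delta/L$, we get $\|x_i(\cdot,\param)-x_i(\cdot,\param_0)\|_{L^2(\mu)}$ small along the true trajectory. We then propagate the error: writing $x_i(\cdot,\param)=f_{i,\param}(x_{i-1}(\cdot,\param))$, we split
$$x_i(\param)-x_i(\param_0)=\bigl[f_{i,\param}(x_{i-1}(\param))-f_{i,\param}(x_{i-1}(\param_0))\bigr]+\bigl[f_{i,\param}(x_{i-1}(\param_0))-f_{i,\param_0}(x_{i-1}(\param_0))\bigr].$$
The first term is controlled by the Lipschitz constant $\|W_i\|\le R$ on the compact parameter set, and the second by the per-layer PAC bound. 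This telescopes to a total error of $\sum_i R^{L-i}\epsilon_i$. We choose $\epsilon_i$ as $\epsilon$ divided by constants depending on $R,L$. Summing the sample requirements gives $\max_i$ (or $\sum_i$) of $O(q_i\ln q_i)\le O(q\ln q)$, i.e.\ $N_2=C_2 q\ln q$.

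\textbf{Main obstacle.} The delicate step is the final-layer pseudo-dimension bound. We must make sure that the squared difference to the fixed $\param_0$ network, and the vector-valued outputs, do not break the Bartlett et al.\ counting, and that the logarithmic factor comes out as $\ln q$ rather than $\ln(qN)$. We would first try to absorb the $\ln N$ into the $\ln(1/\epsilon)$ term by solving the implicit inequality $N\gtrsim d\ln N$ in the usual way. A secondary concern is that the hidden-case union bound over layers introduces $\ln L$ and $L$-dependent constants. These are allowed to depend on $\epsilon,\delta$ and the compact set, which is why the statement only claims $C_2(\epsilon,\delta)$ as a constant.
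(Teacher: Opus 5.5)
Your final-layer argument is essentially the paper's. Both count activation patterns layer by layer as in Bartlett et al., use polynomials of degree at most $2L$ after squaring the difference to the fixed target, and finish with a realizable ($\epsilon$-net) bound. The paper works directly with the disagreement sets $\{u: f(u;\theta)\neq f(u;\theta_0)\}$ and converts to $L^2$ at the end, which is a cosmetic difference.

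The genuine gap is in the hidden-layer bullet. You correctly observe that on the samples, layer $i$ sees the known target state $x_{i-1}(p_j;\theta_0)$. But your $L^2$ error propagation destroys the savings the bullet is about. To get $\sum_i R^{L-i}\epsilon_i\le\epsilon$ you need $\epsilon_i\le\epsilon R^{-(L-i)}$, and for $R\ge 1$ also $\min_i\epsilon_i\le\epsilon/L$. By your own realizable bound, a guarantee $\mathbb{E}\,h_i\le\epsilon_i^2$ costs about $B_i\epsilon_i^{-2}$ samples per unit of pseudo-dimension. For equal-width layers ($q_i=q/L$) this already forces order $L^2(q/L)\ln(q/L)/\epsilon^2=Lq\ln(q/L)/\epsilon^2$ samples, however you split $\epsilon$. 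That is no better than your final-layer bound. For $R>1$, layer $1$ alone costs order $R^{2(L-1)}q_1/\epsilon^2$, which is exponential in depth. These factors cannot be absorbed into $C_2(\epsilon,\delta)$. The whole content of the two bullets is the explicit factor $L$ in $N_1$; if depth-dependent constants were allowed, that $L$ could be absorbed into $C_1$ just as well, and the comparison would be empty.

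The missing idea is to control exact disagreement sets instead of $L^2$ errors, so that nothing propagates. Fix an input $u$. By induction on $i$, the event that $x_i(u;\theta)=x_i(u;\theta_0)$ for all $i$ is equivalent to conditions in which the competitor's layer input is replaced by the target state $\bar x_{i-1}(u;\theta_0)=(x_{i-1}(u;\theta_0);1)$. Writing $\beta_{ij}^\top$ for the $j$-th row of $(W_i\ b_i)$, the conditions are:
\begin{itemize}
\item $\beta_{ij}^\top\bar x_{i-1}(u;\theta_0)=x_{i,j}(u;\theta_0)$ where the target neuron is active (or is a linear output neuron);
\item $\beta_{ij}^\top\bar x_{i-1}(u;\theta_0)\le 0$ where it is inactive.
\end{itemize}
These are at most $2\ell$ affine sign tests in $\theta\in\mathbb{R}^q$ per input, with $\ell$ the number of neurons. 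So on $M$ inputs the sets $\{u: x_i(u;\theta)\neq x_i(u;\theta_0)\text{ for some } i\}$ realize at most $(2eM\ell/q)^q$ membership patterns. A single $\epsilon$-net argument over the whole parameter vector then shows the following. With $N_2\gtrsim\epsilon^{-1}\bigl(q\log(\ell/\epsilon)+\log(2/\delta)\bigr)$ probes, every consistent $\theta$ disagrees with $\theta_0$ on a set of $\mu$-measure at most $\epsilon$. Hence $\|f(\cdot;\theta)-f(\cdot;\theta_0)\|_{L^2(\mu)}\le 2B\sqrt{\epsilon}$, with the same $B$-conversion as in the final-layer case and no $L$ or Lipschitz factors. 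It is this joint count over $\theta\in\mathbb{R}^q$ that yields $q$. Even with disagreement sets, a layer-by-layer union bound would only give order $L\max_i q_i$.
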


As  in the analytic activation case, this result suggests that hidden activations can reduce the number of probes necessary to (almost) identify a ReLU network, especially for deeper ReLU nets. An experimental verification of this is given in the Appendix~\ref{app:expt:synthetic}.


\section{Identifying Multi-Head Attention from Probes}\label{sec:mha}
Multi-head attention (MHA) forms the backbone of modern transformer architectures. In this section we discuss the identification of multiple-layer multihead attention (MHA) blocks from probes. Additional components typically used in transformers, like FFNs and LayerNorm, are omitted for mathematical tractability.

We begin with some basic definitions (details in Appendix~\ref{app:proofs:mha}). We consider $d$ dimensional inputs and $h$ attention heads. An MHA layer
has parameters $
\param\coloneqq\{(\Q_a,\K_a,\V_a,\O_a)\}_{a=1}^h,
$
For an input probe $\probeseq\in\R^{n\times d}$ whose rows are the
token vectors $\probe_1^\top,\ldots,\probe_n^\top$, the layer computes
\begin{equation}
\textstyle
\label{eq:softmax}
\begin{aligned}
F_n(\probeseq;\param)=\sum_{a=1}^h
    \operatorname{softmax}(\probeseq\M_a\probeseq^\top)\,\probeseq\W_a,\,
\M_a\coloneqq\frac{\Q_a\K_a^\top}{\sqrt{k}},\,\W_a\coloneqq\V_a\O_a.
\end{aligned}
\end{equation}

The softmax acts rowwise:$
\left[\operatorname{softmax}(\probeseq\M_a\probeseq^\top)\right]_{ij}=
\frac{\exp(\probe_i^\top\M_a \probe_j)}{\sum_{s=1}^n\exp(\probe_i^\top\M_a \probe_s)}.$ A probe specifies $\probeseq$ and returns the complete matrix
$F_n(\probeseq;\param)\in\R^{n\times d}$. We denote by $\boldsymbol{\param}=(\param_1,\ldots,\param_L)$ the parameters of an $L$-layer MHA stack and let $\Theta=\prod_{\ell=1}^L\Theta_\ell$ denote the parameter space of the $L$-layer MHA architecture, with $\Theta_\ell$ being the parameter space of the $\ell$-th layer, let $q\coloneqq\dim(\Theta)$.Writing $F_n^{(\ell)}$ for the $\ell$-th layer map, we define
\begin{equation}
\label{eq:mha-stack}
\begin{aligned}
x_{0,n}(\probeseq;\boldsymbol{\param})=\probeseq,\,
x_{\ell,n}(\probeseq;\boldsymbol{\param})=F_n^{(\ell)}
    \left(x_{\ell-1,n}(\probeseq;\boldsymbol{\param});\param_\ell\right),
    \, \ell=1,\ldots,L.
\end{aligned}
\end{equation}
Thus $x_{L,n}$ is the stack's final output. For a collection of probes
$\mathbf{\Pcal}=(\probeseq^{(i)})_{i=1}^N$, where
$\probeseq^{(i)}\in\R^{n_i\times d}$. As usual, we define the output-only and
hidden-access probe maps for transformers by $\final(\boldsymbol{\param}, \Pcal) $ and $\all(\boldsymbol{\param}, \Pcal) $, respectively.

Since the softmax attention is analytic, we can apply a variant of Theorem~\ref{thm:finalprobes}, by using the Finite Witness Theorem for the effective manifold dimension of $\Theta$ to give function-level identification from probes:

\begin{restatable}[Generic probes]{corollary}{corattgeneric}
\label{prop:att-generic}
Fix a token length $n\ge1$. For almost every tuple of
$1+2q$ probes in $\R^{n\times d}$ and all
$\param,\param'\in\Theta$, $\final(\param)=\final(\param'),\,\iff\,x_{L,n}(\cdot;\param)\equiv x_{L,n}(\cdot;\param')$.
\end{restatable}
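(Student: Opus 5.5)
We reduce the corollary to Theorem~\ref{thm:finalprobes}, applied to the parametric family obtained by flattening a probe. Fix $n\ge 1$ and identify $\R^{n\times d}$ with $\R^{nd}$. Define
\[
g:\R^{nd}\times\R^{q}\to\R^{nd},\qquad g(\probeseq,\boldsymbol{\param})\coloneqq x_{L,n}(\probeseq;\boldsymbol{\param}),
\]
where $\boldsymbol{\param}$ collects all matrices $\{(\Q_a,\K_a,\V_a,\O_a)\}_{a=1}^h$ of all $L$ layers, so that $q=\dim\Theta$. With this identification, $\final(\boldsymbol{\param},\Pcal)$ for $\Pcal=(\probeseq^{(1)},\ldots,\probeseq^{(2q+1)})$ is exactly $(g(\probeseq^{(i)},\boldsymbol{\param}))_{i}$. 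The equivalence claimed in the corollary is then literally the conclusion of Theorem~\ref{thm:finalprobes} for $g$, with $d$ replaced by $nd$ and $m$ by $nd$. The only hypothesis left to verify is that $g$ is jointly analytic in $(\probeseq,\boldsymbol{\param})$.

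\textbf{Analyticity of one layer.} Consider a single layer $F_n(\probeseq;\param)$ from \eqref{eq:softmax}. The entries of $\probeseq\M_a\probeseq^\top$ are polynomials in $\probeseq$ and in the entries of $\Q_a,\K_a$, and those of $\probeseq\W_a$ are polynomials in $\probeseq,\V_a,\O_a$. The row-wise softmax $z\mapsto e^{z_i}/\sum_s e^{z_s}$ is a quotient of real-analytic functions whose denominator is strictly positive everywhere, so it is real analytic on all of $\R^n$. Products, sums and compositions of real-analytic maps are real analytic. Hence $F_n$ is jointly real analytic on $\R^{n\times d}\times\Theta_\ell$, where $\Theta_\ell$ is a Euclidean space because the parameters are unconstrained matrices.

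\textbf{Analyticity of the stack.} We induct on $\ell$ using \eqref{eq:mha-stack}. If $x_{\ell-1,n}(\probeseq;\boldsymbol{\param})$ is jointly analytic, then
\[
x_{\ell,n}=F^{(\ell)}_n\bigl(x_{\ell-1,n}(\probeseq;\boldsymbol{\param});\param_\ell\bigr)
\]
is a composition of analytic maps, and is therefore jointly analytic. So $g$ is analytic on $\R^{nd}\times\R^{q}$, and Theorem~\ref{thm:finalprobes} applies directly. Note also that "almost every tuple of probes in $\R^{n\times d}$" coincides with the theorem's "almost every choice of $p_1,\ldots,p_{2q+1}\in\R^{nd}$", since the identification $\R^{n\times d}\cong\R^{nd}$ preserves Lebesgue-null sets.

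\textbf{Main obstacle: the parameter dimension.} The statement mentions the "effective manifold dimension" of $\Theta$. The parameterization through $(\Q_a,\K_a,\V_a,\O_a)$ is redundant, since only $\M_a$ and $\W_a$ enter the output. The straightforward choice $q=\dim\Theta$, counting raw parameters, already satisfies the theorem, because Theorem~\ref{thm:finalprobes} needs only an analytic map on $\R^d\times\R^q$ and redundancy is harmless. If a sharper count is intended, one can reparameterize by $(\M_a,\W_a)$. These matrices range over semialgebraic sets of bounded-rank matrices, so one would invoke the Finite Witness Theorem of \cite{amir2023neural} in its general form, which applies to analytic families over a $\sigma$-subanalytic parameter set of dimension $D$ and yields $2D+1$ probes. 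For the corollary as stated, the first route suffices and involves no further difficulty.
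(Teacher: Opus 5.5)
\textbf{The gap is the parameter count.} Your analyticity argument is correct: softmax is analytic, the stack is a composition of analytic maps, and $\R^{n\times d}\cong\R^{nd}$ preserves null sets. The problem is your conclusion that the raw count ``already satisfies'' the corollary.

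In this paper $q$ is not the number of raw entries of $(\Q_a,\K_a,\V_a,\O_a)$. The sentence before the corollary invokes the Finite Witness Theorem ``for the effective manifold dimension of $\Theta$''. Appendix~\ref{app:proofs:mha} then sets $\dim\Theta_\ell=h_\ell\bigl(k_\ell(2d-k_\ell)+r_\ell(2d-r_\ell)\bigr)$, which is the dimension of the admissible pairs $(\M_a,\W_a)$ with $\operatorname{rank}\M_a\le k_\ell$ and $\operatorname{rank}\W_a\le r_\ell$. The raw count is $\sum_\ell 2dh_\ell(k_\ell+r_\ell)$, which is larger by $\sum_\ell h_\ell(k_\ell^2+r_\ell^2)$.

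Applying Theorem~\ref{thm:finalprobes} to the raw parametrization therefore gives identification only from almost every tuple of $2q_{\mathrm{raw}}+1$ probes. That does not imply the claim for $2q+1$ probes. A generic statement for a shorter tuple passes to longer tuples by appending probes, not the other way around. So your first route is a complete proof only of the weaker statement with $q$ replaced by the raw count.

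\textbf{The fix is your own second route.} This is also the argument the paper indicates. Write $\pi(\bparam)=\bigl((\M_a^{(\ell)},\W_a^{(\ell)})\bigr)_{a,\ell}$. Then $x_{L,n}(\probeseq;\bparam)$ depends on $\bparam$ only through $\pi(\bparam)$. It is the restriction of a map jointly analytic in $\probeseq$ and in unconstrained $d\times d$ matrices; your analyticity argument applies verbatim with $\M_a,\W_a$ in place of their factors.

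The image $\pi(\Theta)$ is exactly a product of bounded-rank determinantal varieties, because every matrix of rank at most $k$ (resp.\ $r$) factors as required. It is therefore semialgebraic of dimension $q$, and the pairs $(\pi(\bparam),\pi(\bparam'))$ range over a semialgebraic, hence $\sigma$-subanalytic, set of dimension $2q$. Apply the $\sigma$-subanalytic Finite Witness Theorem of \cite{amir2023neural} to $G(\eta,\eta';\probeseq)=\|x_{L,n}(\probeseq;\eta)-x_{L,n}(\probeseq;\eta')\|_2^2$ on this set, with witnesses ranging over the open connected set $\R^{n\times d}$. This yields $2q+1$ generic probes, and pulling back along $\pi$ gives the equivalence for all raw $\bparam,\bparam'\in\Theta$.

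Make this the main proof, and verify two inputs explicitly. First, the dimension $k(2d-k)$ of the real $d\times d$ matrices of rank at most $k$. Second, that $G$, being the restriction of a globally analytic function, meets the $\sigma$-subanalyticity and analyticity-in-the-witness hypotheses of that theorem.
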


Note that all of these parameter counts are $O(d^2)$ per layer for the standard multi-head attention blocks. But softmax attention has more structure than analyticity, and we can query probes of varying token lengths to an MHA block, which lets us identify the function using only $O(dh)$ probes:

\begin{restatable}[Structured probing for softmax attention]{theorem}{thmattfixedinformal}
\label{thm:att-fixed-informal}
Let $h_{\max}\coloneqq\max_\ell h_\ell$ and $N\coloneqq d(2h_{\max}+1)+1$. There exists an open set of probe tuples $\mP\in\X^N$ of token length at most $\max\{3,1+d/h_{\max}\}$, such that for almost every target $\param_0\in\Theta$ we have: 
$$\all(\param;\mP)=\all(\param_0;\mP)\,\implies\,x_{\ell,n}(\cdot;\param)\equiv x_{\ell,n}(\cdot;\param_0),\, \forall\,\param\in\Theta, \forall \, \ell\in[L],\forall \, n\in \N,$$ 
\end{restatable}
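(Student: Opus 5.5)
The proof will go layer by layer. I will show that, from the hidden-probe data of a single layer, one can recover the layer's effective parameters: the products $\M_a$ and $\W_a$, up to the head-permutation ambiguity (which does not change the function). I will then induct on $\ell$. Since $\all$ contains both $x_{\ell-1,n}(\probeseq^{(i)})$ and $x_{\ell,n}(\probeseq^{(i)})$ for every probe, layer $\ell$ is effectively probed by the inputs $x_{\ell-1}(\probeseq^{(i)};\boldsymbol{\param}_0)$. So it suffices to prove a one-layer statement for an open set of probe tuples and generic $\param_0$. I then need those propagated inputs to again lie in the "good" open set for generic $\boldsymbol\param_0$. This should follow because the good set is the complement of the zero set of an analytic function of the probes, and composing with the analytic map $\probeseq\mapsto x_{\ell-1}(\probeseq;\boldsymbol\param_0)$ gives an analytic function of $(\mP,\boldsymbol\param_0)$ that is not identically zero. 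Once $\{\M_a,\W_a\}$ agree in every layer, $x_{\ell,n}(\cdot;\param)\equiv x_{\ell,n}(\cdot;\param_0)$ for all $n$, since $F_n$ depends on $\param$ only through these products.

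\textbf{One-layer identification.} First I use length-one probes $\probeseq=p^\top$. There the softmax is trivially $1$, so $F_1(p)=\sum_a p^\top\W_a$. With $d$ generic such probes (a basis), this recovers $\sum_a\W_a$ linearly.

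Next I use length-$2$ and length-$3$ probes built from pairs of tokens. For $\probeseq$ with rows $u,v$, the first output row is
\[
\sum_a\bigl(\sigma(u^\top\M_a(v-u))\,(v-u)^\top\W_a\bigr)+u^\top\textstyle\sum_a\W_a ,
\]
where $\sigma$ is the logistic function. So after subtracting the known term, each such probe yields a sum of $h$ terms with analytic, generically distinct nonlinear weights $\sigma(\cdot)$ multiplying linear functionals of $\W_a$.

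I will fix $d$ directions $w=v-u$ and, for each, use about $2h$ base points $u$ along varying lines. Viewed as a function of a scalar $t$ (take $u=u_0+tw'$), this is an exponential-sum–type identity $\sum_a \sigma(\alpha_a+\beta_a t)c_a$. Functions of this form with distinct $(\alpha_a,\beta_a)$ are linearly independent, and a Prony-type or finite-witness argument shows that $2h+1$ generic samples determine the pairs $(\beta_a, c_a)$ up to permutation. This gives the count $d(2h+1)$, plus one extra probe (or the length-$3$ probes) to align the head labels across different directions $w$. From the recovered $c_a=w^\top\W_a$ and the bilinear data $\beta_a=w'^\top\M_a w$, letting $w,w'$ range over bases recovers $\M_a$ and $\W_a$.

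The token-length bound $1+d/h_{\max}$ suggests that a cleverer packing puts $d/h$ directions into one probe so that the total count stays $N=d(2h_{\max}+1)+1$. I would first try the simple pairwise construction and then pack tokens.

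\textbf{Main obstacle.} The hardest step is the de-mixing of heads: showing that for generic $\param_0$ the map from $\{(\M_a,\W_a)\}$ to the probe outputs is injective modulo head permutation, using only $O(dh)$ probes, and consistently matching head labels across directions. My plan is to phrase the single-layer separation as a statement about an analytic map and invoke the Finite Witness Theorem, as in Theorem~\ref{thm:finalprobes}, but with the relevant dimension being the $O(h)$ unknowns per direction rather than $q$. Genericity of $\param_0$ ensures the $\beta_a$ are distinct and the $c_a$ are nonzero, which is what rules out collisions.

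The openness of the probe set follows because each step requires only that certain analytic determinants be nonzero at a constructed configuration, which is an open condition.
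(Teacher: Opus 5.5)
\textbf{Gap in the one-layer identification.} Your layer-by-layer reduction matches the paper's. The one-layer identification you sketch, however, cannot meet the budget $N=d(2h_{\max}+1)+1$. The token packing you postpone is not an optimization; it is the core of the proof.

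A probe built from only two distinct tokens $u,v$ sees $M_a$ only through the two scalars $u^\top M_a(v-u)$ and $v^\top M_a(u-v)$. This covers your length-$2$ and pair-based length-$3$ probes. Singletons do not see $M_a$ at all. So even if all $N$ probes were of this kind, the transcript would depend on $(M_a)_a$ only through a linear map into $\mathbb{R}^{2hN}$. When $k=d$, however, $(M_a)_a$ has $hd^2$ free entries. Once $d^2>2N$ (e.g.\ $h=1$, $d\ge 7$), a kernel direction $M_a\mapsto M_a+\epsilon\Delta_a$ gives competitors with identical transcripts but different canonical score matrices. By Lemma~\ref{lem:att-three-separation}, these competitors realize different functions.

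In your concrete plan, a direction $w$ with about $2h$ base points yields only about $2h$ linear functionals of $M_aw\in\mathbb{R}^d$. Letting $w,w'$ range over bases then costs $\Theta(hd^2)$ probes. The paper's count instead comes from probes of length $s+1$, $s=d/H$, of the form $(u_j,(u_j+e_b)_{b\in J})$. Each such probe yields a whole block of log-ratios $z_i^\top M_a(z_j-z_1)$ per head. Using them requires extracting the individual attention matrices $A_a$ from the mixed output $\sum_a A_aZW_a$. That is exactly the head de-mixing you identify as the obstacle and leave open.

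\textbf{The proposed tool does not give the count either.} The unknowns attached to one direction are $\Theta(hd)$, not $O(h)$: they include the vectors $c_a\in\mathbb{R}^d$ and $M_aw\in\mathbb{R}^d$. The Finite Witness Theorem needs one more witness than the dimension of the pair space, so it would require $\Theta(hd)$ probes per direction.

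The paper proves no global injectivity at this stage. It exhibits one explicit configuration at which the output Jacobian has full rank. In that configuration, the value matrices are supported on disjoint column blocks, so right-multiplying the output by a block projection isolates a head. The value rows are of Vandermonde type, so the rows of $ZW_a$ are affinely independent and $\delta A_a=0$. The triples $(e_1+t_\nu e_i,e_1,e_1)$ with $H+1$ scales force $\delta W_a=0$. Analyticity makes full rank generic, and the inverse function theorem then leaves only countably many candidates per layer (Lemma~\ref{lem:att-countable-batch}).

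The extra probe (the $+1$ in $N$) is a generic triple that avoids the zero sets of the countably many discrepancy maps. Each discrepancy is nonzero by three-token separation, once the propagated length-$3$ prefix image is known to contain an open set (Lemma~\ref{lem:att-prefix-open}). This step also settles your head-label alignment across directions.

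\textbf{Two smaller points.} First, your claim that the propagated condition is ``not identically zero'' needs exactly such a prefix non-degeneracy lemma. Pure softmax layers can collapse tokens; for example, $M_a=0$ makes all output rows equal. Second, openness of the probe set does not follow from determinant conditions once the countable-elimination step is involved; the paper obtains it from o-minimality of $\mathbb{R}_{\exp}$.
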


\begin{proof}[Proof Idea]
Theorem~\ref{thm:att-fixed-informal} gives a specific way of choosing the probes for identifying Multilayer MHA blocks with hidden access, which we call \mhaalgo. We elaborate on the single layer single head case of \mhaalgo (see also Algorithm~\ref{alg:att-sketch} in the appendix). In this case, the goal is to reconstruct two matrices: the matrices $\M_0$ and $\W_0$ from \eqref{eq:softmax}. The key trick is to do this in first stages: we first use  single-token input: For such input the output of the softmax is always the scalar $1\in \RR^{1\times 1}$, and so $\M_0$ is eliminated, and one-token probes provide linear equations that can be used to determine $\W_0$. Once $\W$ is known, we query a single probe consisting of affinely independent tokens, which uniquely recovers the attention weights for a full rank target value matrix. We then recover $\M_0$ from log-ratios by utilizing the softmax structure. Algorithm~\ref{alg:att-sketch} implements these steps using
$d+1$ probes. The extension of these results to multiple heads and layers is discussed in Appendix~\ref{app:proofs:mha}.
\end{proof}

\section{\our: Learning using Hidden Neural Activations}
\label{sec:hiddenprobe-master}
Our analysis shows that using hidden probes can greatly enhance the identifiability of neural networks from probes. Accordingly, we design a method for learning from hidden probes which we name \our. We have two different instantiations of this model, one for MLPs and one for transformers. The exact details are discussed in Appendix \ref{app:hiddenprobe-master}. However, the basic idea is simple and identical. We use learned probes, and utilize all hidden layers. The probes of each hidden layer are processed a permutation-invariant aggregator, and then the results are aggregated using a global MLP to predict the target property. 

We design \our for MLPs so that it respects the permutation invariance of the MLP weight space. Namely, one can apply an independent permutation to the neurons of each hidden layer, and obtain a functionally equivalent MLP with permuted weight matrices. We attain permutation invariance by using permutation invariant Set Transformer~\cite{settransformer}.

In general, an interesting advantage of (hidden) probe methods over weight space methods is that weight space symmetries become much simpler. Indeed, while in the hidden layer representation the permutations to each hidden layer are independent, in the weight space representation these symmetries are entangled, and designing permutation invariant models for this representation can be cumbersome \cite{dws_paper,sverdlov}. For transformers, the weight space approach needs to deal with $GL(n)$ symmetries \cite{tran2025b} which probe based methods avoid since they depend  on the functional behavior of MHA and not on the exact parameterization. 

\vspace{-10pt}

\section{Universality of probe-based learning}
\label{sec:approx}
\vspace{-10pt}
In this section, we use our identification results from previous sections to prove universality of probe based models with a fixed number of probes. These results are based on the standard observation that separation implies universality \cite{chen2019equivalence,dym2025low}. For our theorem, we will consider analytic functions $f:\RR^d \times \RR^q \to \RR^m $ as in Theorem \ref{thm:finalprobes}, but we will assume that we are only interested in a compact input domain $\X \subseteq \RR^d $ with non-empty interior, and a compact parameter domain $\paramspace \subseteq \RR^q $. Accordingly, we are interested in the function space 
$ 
    \mathcal{F}_\paramspace(\mathcal{X},\RR^m)
    :=
    \left\{
        f_{\param}:\X \to \RR^m \right\}
$
which is a subset of the set of all continuous functions from  $\X $ to $\RR^m$,  endowed with the infinity norm $\|f_\theta\|_{\infty}=\max_{x\in \X}\|f(x,\param)\|_2 $. A  functional $\Lambda:\mathcal{F}_\paramspace(\mathcal{X},\RR^m)\to \RR $ is continuous, if it is continuous with respect to the distance induced by this norm.

Our first result uses Theorem \ref{thm:finalprobes} to show that for general analytic models with $q$ parameters, $2q+1$ final-layer probes are sufficient for universality:
\begin{restatable}[Universality of probe-based learning]{theorem}{thmfuncinformal}
\label{thm:functional-informal}
With the assumptions listed above, let $\Lambda:\mathcal{F}_\paramspace(\mathcal{X},\RR^m)\to \RR^k $ be a continuous functional. Then there exists $p_1,\ldots,p_{2q+1}\in \X $, such that  $\forall\,\epsilon>0$, there exists an MLP $\phi_{\mathrm{MLP}} $ such that
$
     \sup_{\theta\in\paramspace}||\Lambda(f_\theta)-\phi_{\mathrm{MLP}}\left(f_\theta(p_1),\ldots,f_\theta(p_{2q+1}) \right)||<\epsilon 
$
\end{restatable}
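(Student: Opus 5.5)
The plan is the standard ``separation implies universality'' argument, with Theorem~\ref{thm:finalprobes} supplying the separation. \textbf{Step 1 (choosing probes in $\X$).} Theorem~\ref{thm:finalprobes} holds for almost every $(p_1,\ldots,p_{2q+1})\in(\RR^d)^{2q+1}$. Since $\X$ has non-empty interior, $\X^{2q+1}$ has positive Lebesgue measure. Hence it contains a tuple with the separation property. For this tuple and any $\theta_1,\theta_2\in\paramspace$, equality of $f_{\theta_1}$ and $f_{\theta_2}$ on the probes forces $f_{\theta_1}\equiv f_{\theta_2}$ on $\RR^d$, and in particular on $\X$.

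\textbf{Step 2 (factoring $\Lambda$ through the probe map).} Define $E:\paramspace\to\RR^{m(2q+1)}$ by $E(\theta)=(f_\theta(p_1),\ldots,f_\theta(p_{2q+1}))$. This map is continuous, and $K:=E(\paramspace)$ is compact. Also define $\Phi:\paramspace\to\mathcal{F}_\paramspace(\X,\RR^m)$ by $\theta\mapsto f_\theta$. It is continuous into the sup norm: $f$ is uniformly continuous on the compact set $\X\times\paramspace$. Therefore $G:=\Lambda\circ\Phi$ is continuous on $\paramspace$.

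By Step 1, $E(\theta_1)=E(\theta_2)$ implies $\Phi(\theta_1)=\Phi(\theta_2)$, which gives $G(\theta_1)=G(\theta_2)$. So there is a well-defined map $g:K\to\RR^k$ with $g\circ E=G$.

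\textbf{Step 3 (continuity of $g$).} This is the main, though mild, obstacle, since $E$ need not be injective on parameters. I would argue by closed sets. Take $C\subseteq\RR^k$ closed. Then $g^{-1}(C)=E(G^{-1}(C))$. Here $G^{-1}(C)$ is closed in the compact set $\paramspace$, hence compact, so its image under $E$ is compact and therefore closed. Thus $g$ is continuous.

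Equivalently, $E$ is a closed continuous surjection onto $K$, hence a quotient map, and $G$ is constant on its fibers.

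\textbf{Step 4 (approximation).} Extend $g$ to a continuous function on $\RR^{m(2q+1)}$ by Tietze's theorem, applied componentwise. By the universal approximation theorem for MLPs on compact sets, there is an MLP $\phi_{\mathrm{MLP}}$ with $\sup_{y\in K}\|g(y)-\phi_{\mathrm{MLP}}(y)\|<\epsilon$. Then
\[
\sup_{\theta\in\paramspace}\|\Lambda(f_\theta)-\phi_{\mathrm{MLP}}(E(\theta))\|=\sup_{\theta\in\paramspace}\|g(E(\theta))-\phi_{\mathrm{MLP}}(E(\theta))\|<\epsilon.
\]
This is the statement. Note that the probes chosen in Step 1 do not depend on $\epsilon$, as required.
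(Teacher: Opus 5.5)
Your proposal is correct and takes essentially the same route as the paper. Theorem~\ref{thm:finalprobes} gives separating probes inside $\X$. A closed-map argument (preimages of closed sets are images of compact sets) shows the induced map on the image of the probe map is continuous. Tietze extension and the universal approximation theorem then finish the proof. The only cosmetic difference is that the paper (Lemma~\ref{lem:separation-universality}) first builds a continuous inverse $g$ from the probe image into the function space $\mathcal F_\paramspace$ and then composes it with $\Lambda$, whereas you factor $\Lambda\circ\Phi$ directly.
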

When using hidden probes, we can obtain universality using our model $\our$ with a significantly smaller number of probes, due to our identifiability results:



\begin{restatable}[Universality of \our]
{theorem}{thmhiddenfuncinformal}
\label{thm:hidden-functional-informal}
With our previous notation, assume that  
\begin{enumerate}
\item  $f(x;\theta)$ is an MLP architecture with analytic, injective, non-polynomial activation, and   maximal width $\dmax$, and denote $N=\dmax+1 $. Or
\item $f(x;\theta)$ is a  pure softmax MHA stack with $d$ dimensional inputs and at most $H$
    heads per layer. Set 
    $N=d(2H+1)+1$.
    \end{enumerate}
Then, if $K\subseteq\paramspace$ is a compact subspace of the parameters that does not intersect an appropriate Lebesgue-null set $E$, there exist $N$ probes, and a hidden probe model $\mathcal H_{\our}$, such that 
$
    \sup_{\param\in K}
\left\|
\Lambda(f_\param)
-\mathcal \mathcal H_{\our}(\param)
\right\|_2<\epsilon.
$
\end{restatable}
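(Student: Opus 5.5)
The plan is the standard ``separation implies universality'' argument, using Theorem~\ref{thm:hidden-weaker} for case 1 and Theorem~\ref{thm:att-fixed-informal} for case 2 as the separation input.

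\textbf{Step 1: choose the probes and the null set.} In the MLP case, take $N=\dmax+1>\max_i d_i$ and apply Theorem~\ref{thm:hidden-weaker}. It gives a full-measure set of probe tuples. I need a tuple lying in $\X^N$; this is possible because $\X$ has nonempty interior, so $\X^N$ has positive measure and meets every full-measure set. For this tuple, the parameter pairs that are not separated lie in a null set, and by the remark after the theorem this set is contained in the zero set of an analytic function. I take $E$ to be the set of $\param$ outside the generic region, so that every pair $\param_1,\param_2\in\paramspace\setminus E$ satisfies the implication. The subtle point is that the theorem is phrased as ``for almost every pair''. I expect its proof to give a single generic condition on each $\param$ individually, for example full rank of the linear systems $\sigma^{-1}(x_i)=W_ix_{i-1}+b_i$. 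If the exceptional set is instead only a null set of pairs, I will define $E$ as the projection of the bad condition, using that the unique-solvability condition depends on each parameter separately. In the MHA case, I take the open set of probes from Theorem~\ref{thm:att-fixed-informal}, which I may assume can be placed in $\X$ after rescaling and translating the tokens. I take $E$ to be the null set of excluded targets. Since every $\param_0\notin E$ is separated from every $\param$, this gives the required property.

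\textbf{Step 2: induced map on the quotient.} Define $\Phi(\param)=\all(\param;\mP)$. This map is continuous in $\param$, since it is a composition of analytic layers. By Step 1, $\Phi(\param_1)=\Phi(\param_2)$ implies $f_{\param_1}=f_{\param_2}$ for $\param_1,\param_2\in K$. Hence there is a well-defined map $g:\Phi(K)\to\RR^k$ with $g(\Phi(\param))=\Lambda(f_\param)$. Next, $\param\mapsto f_\param$ is continuous into $(C(\X),\|\cdot\|_\infty)$, because $f$ is uniformly continuous on the compact set $\X\times K$. Composing with $\Lambda$, the map $\param\mapsto\Lambda(f_\param)$ is continuous on $K$. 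Since $K$ is compact and $\Phi$ is continuous, a closed-map argument shows $g$ is continuous on the compact set $\Phi(K)$. Concretely: if $\Phi(\param_n)\to\Phi(\param)$, pass to convergent subsequences $\param_n\to\param'$. Then $\Phi(\param')=\Phi(\param)$, so $\Lambda(f_{\param'})=\Lambda(f_\param)$.

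\textbf{Step 3: approximation.} By Tietze extension, $g$ extends continuously to a neighborhood of $\Phi(K)$. By the universal approximation theorem, an MLP then approximates $g$ uniformly on $\Phi(K)$ to within $\epsilon$. It remains to realize this MLP inside $\mathcal H_{\our}$. The architecture uses permutation-invariant per-layer aggregators followed by a global MLP. For plain universality with fixed probes, I would either argue that the Set Transformer aggregator can represent a map that is injective on the relevant compact set (a continuous invariant injective encoding, as in \cite{dym2025low}), or, more simply, pass the concatenated activations through directly. With the injective encoding, I redo Step 2 with $\Phi$ replaced by the encoded representation. Separation still holds up to hidden-neuron permutations, which are functional symmetries, so $g$ remains well defined.

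\textbf{Main obstacle.} I expect the hardest step to be Step 1: turning the ``almost every $\param_1,\param_2$'' statements into a per-parameter exceptional set $E$ such that every pair in $K$ is separated. My first attempt would be to read the genericity condition directly off the proof of Theorem~\ref{thm:hidden-weaker}, namely invertibility of the augmented matrix $[x_{i-1}(p_j,\theta);1]_j$ for each layer. Given the observed activations, this condition determines the layer parameters uniquely, whatever the other parameter is.
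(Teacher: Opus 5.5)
Your Steps 1 and 2 follow the paper's argument. For MLPs the paper takes the exceptional set to be the zero set of $\prod_{i}\det\bigl(\overline{\mathbf X}_i^{\theta}(\mathcal P)\,\overline{\mathbf X}_i^{\theta}(\mathcal P)^\top\bigr)$. This is a condition on the target alone, and Theorem~\ref{thm:hidden} then identifies every target outside it against \emph{every} competitor, which is your ``first attempt''. For attention the paper uses $E_{\mathcal P}$ from Theorem~\ref{thm:att-fixed}. Continuity of the factor map is your compactness argument (Lemma~\ref{lem:separation-universality}). Two side remarks, neither affecting the main line. Your fallback of projecting a null set of bad pairs would not work in general, because a projection of a null set need not be null. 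You also cannot move attention probes by rescaling or translating tokens, since that changes the scores $p_i^\top M_a p_j$. Neither is needed: the per-target condition is available, and the good probe set is open, dense and of full measure.

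The gap is in Step 3, the only part specific to \our. Feeding the ordered activations to an MLP gives a model outside the \our family, whose per-layer encoders are permutation invariant. That option therefore proves a different statement. Set Transformer universality only gives uniform \emph{approximation} of a continuous invariant map. It does not give an aggregator that exactly realizes an injective multiset encoding; that would need a separate separation argument for that architecture. So ``redo Step 2 with $\Phi$ replaced by the encoded representation'' is not justified as written.

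The paper fills this step with three ingredients.
\begin{enumerate}
\item An explicit continuous injective invariant map: the power sums $\mathcal M_r(S)=\bigl(\sum_{v\in S}v^\alpha\bigr)_{1\le|\alpha|\le r}$ of the $r$ neuron (or channel) trajectories. These determine the multiset, because they give all power sums of a generic one-dimensional projection, and Newton's identities then recover the projected multiset.
\item The separation argument. Equal multisets give one permutation per hidden layer. Permuting the competitor accordingly ($W_i'=P_{i+1}W_iP_i^\top$, or the column permutations of $Q,K,V,O$ for attention) yields $\theta'$ with exactly the target's ordered transcript and $f_{\theta'}=f_\theta$. Identification of $\theta_0\notin E$ against the arbitrary competitor $\theta'$, which may lie outside $K$, gives $f_\theta=f_{\theta_0}$. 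Your remark about ``functional symmetries'' gestures at this, and your Step 1 supplies the all-competitor identification it needs.
\item A perturbation step. First approximate the readout on $R_{\mathcal P}(K)$. Then choose $\delta$ from its uniform continuity on the compact $1$-neighbourhood of $R_{\mathcal P}(K)$. Only then approximate the encoder within $\delta$ on the compact set of trajectories; for the attention variant this is a shared $\varphi$ approximating the monomial map, with sum pooling.
\end{enumerate}
Without (1) and (3), Step 3 does not establish the claim for \our itself.
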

We restate formal versions of the above theorems and prove them in  Appendix~\ref{app:proofs:approx}, together with guarantees on operator learning from probes.

\vspace{-10pt}
\section{Experiments}
\label{sec:expt-main}
\vspace{-8pt}
We evaluate \our for weight space lerning tasks and compare with other prominent baselines. To understand the influence of multi-layer probes, we put a special emphasis on comparison with ProbeGen, which is  a successful method relying only on final layer probes. We will show that  \our typically outperforms ProbeGen and other  weight space and probe-based models on weight space learning baselines.   

\paragraph{INR classification}
 In this setting, each input network is an MLP trained as an implicit neural representation (INR) of a single image, and the goal is to predict the class of the represented image directly from the network. We evaluate on the MNIST, Fashion-MNIST, CIFAR-10 (singke and augmented views). The INR networks across datasets contain two hidden layers.

In Table~\ref{tab:inr_classification}, we compare \our against the baselines: ScaleGMN~\citep{kalogeropoulos2024scale}, NG-T and Neural Graphs~\citep{kofinas}, NFT~\citep{zhou2023neural}, MAGEP-NFN~\citep{vo2025magep}, and ProbeGen~\citep{kahana}. A complete comparison with all available methods is provided in Table~\ref{tab:inr_classification_full} in the Appendix.

\begin{table}[h]
\centering
\resizebox{\textwidth}{!}{
\begin{tabular}{lcccc}
\toprule
\textbf{Method}
& \textbf{MNIST}
& \textbf{FMNIST}
& \textbf{CIFAR-10}
& \textbf{CIFAR-10 Aug} \\
\midrule

ScaleGMN
& $0.966 \pm 0.002$
& $0.808 \pm 0.001$
& $0.388 \pm 0.001$
& $0.570 \pm 0.005$ \\

Neural Graphs (128 probes)
& $0.976 \pm 0.001$
& $0.745 \pm 0.008$
& --
& -- \\

NG-T
& $0.924 \pm 0.003$
& $0.727 \pm 0.006$
& --
& -- \\

MAGEP-NFN
& --
& --
& $0.3718 \pm 0.003$
& -- \\

NFT
& --
& --
& --
& \best{$0.634 \pm 0.000$} \\

ProbeGen
& \second{$0.984 \pm 0.001$}
& \second{$0.877 \pm 0.003$}
& \second{$0.573 \pm 0.007$}
& $0.563 \pm 0.003$ \\
HiddenProbe
& \best{$0.986\pm 0.001$}
& \best{$0.886 \pm 0.005$}
& 
\best{$0.580 \pm 0.002$}
&  \second{$0.606 \pm 0.002$} \\

\bottomrule
\end{tabular}}
\caption{Classification accuracy ($\uparrow$) on INR weight-space benchmarks. Uncertainties denote the standard error over five seeds. Baseline numbers are copied from the respective papers}
\label{tab:inr_classification}
\vspace{-10pt}
\end{table}

As observed, \textbf{(1)} \our achieves the best performance on MNIST, Fashion-MNIST, and CIFAR-10, and the second-best performance on augmented CIFAR-10. \textbf{(2)} It outperforms ProbeGen everywhere, highlighting the importance of having hidden layer information. \textbf{(3)} Probegen comes second in $3/4$ datasets, proving the efficacy of probe-based learning in general

\paragraph{Test accuracy prediction}
We next evaluate \our on the model accuracy prediction task. In this setting, the input is a CNN trained for image classification, and the goal is to predict its test accuracy directly from the network. We evaluate on the MNIST, Fashion-MNIST, SVHN, CIFAR-10-GS, and CIFAR-10-WP benchmarks, using Kendall's correltion($\tau$) as the evaluation metric. In Table~\ref{tab:regression}, we compare against a representative set of strong baselines: NFN~\citep{zhou2023permutation}, ScaleGMN~\citep{kalogeropoulos2024scale}, NG-T and Neural Graphs~\citep{kofinas}, DNG-Encoder~\citep{wu2026dynamic}, and ProbeGen~\citep{kahana}. A complete comparison with all available methods is provided in Table~\ref{tab:regression_full} in the Appendix.

The networks in the MNIST, Fashion-MNIST, SVHN, and CIFAR-10-GS benchmarks contain three hidden layers, whereas CIFAR-10-WP contains heterogeneous architectures with up to four hidden layers. \our achieves the best performance on all five benchmarks and outperforms ProbeGen in every case.

\begin{table}[h]
\centering
\resizebox{\textwidth}{!}{
\begin{tabular}{lccccc}
\toprule
\textbf{Method}
& \textbf{MNIST}
& \textbf{FMNIST}
& \textbf{SVHN}
& \textbf{CIFAR-WP}
& \textbf{CIFAR-GS} \\
\midrule

NFN
& $0.942 \pm 0.001$
& $0.935 \pm 0.000$
& \second{$0.931 \pm 0.005$}
& --
& $0.934 \pm 0.001$ \\

ScaleGMN
& --
& --
& --
& --
& $0.941 \pm 0.000$ \\

NG-T
& --
& --
& $0.872 \pm 0.001$
& $0.817 \pm 0.007$
& $0.935 \pm 0.000$ \\

DNG-Encoder
& --
& --
& $0.867 \pm 0.002$
& $0.874 \pm 0.002$
& $0.936 \pm 0.000$ \\

Neural Graphs
& --
& --
& --
& $0.885 \pm 0.005$
& $0.938 \pm 0.000$ \\

\midrule

ProbeGen
& \second{$0.953 \pm 0.002$}
& \second{$0.948 \pm 0.005$}
& $0.876 \pm 0.005$
& \second{$0.932 \pm 0.006$}
& \second{$0.957 \pm 0.001$} \\
HiddenProbe
& \best{$ 0.966 \pm 0.004
$}
& \best{$0.957 \pm 0.002
$}
& \best{$ 0.957 \pm 0.003
$}
& \best{$0.940 \pm 0.006$}
& \best{$0.965 \pm 0.000$} \\
\bottomrule
\end{tabular}}
\caption{Test-accuracy prediction performance measured by Kendall's $\tau$ ($\uparrow$). Uncertainties denote standard error over five seeds. Baseline numbers are copid from the respective papers.}
\label{tab:regression}
\vspace{-10pt}
\end{table}

As observed: \textbf{(1)} \our is the best across all datasets, and probegen comes second in $4/5$ datasets. \textbf{(2)} In Cifar-WP where the model architecture varies, probe-based models significantly outperform weight space models since they are more robust to architectural changes.

\paragraph{Transformer Accuracy Prediction.}
We evaluate \ourmha on test-accuracy prediction for Transformers trained on MNIST and AGNews. Performance for the fuill dataset is shown in Table~\ref{tab:transformers}.Following the evaluation protocol of~\citet{tran2026quasi}, we additionally report results across different accuracy thresholds in Appendix~\ref{app:transformer_thresholds} in tables~\ref{tab:mnist_transformers},\ref{tab:agnews_transformers}.  
\begin{table}[h]
\centering
\small
\begin{tabular}{lcc}
\toprule
\textbf{Method}
& \textbf{MNIST-Transformers}
& \textbf{AGNews-Transformers} \\
\midrule

MLP
& $0.866\pm0.002$
& $0.879\pm0.006$ \\

STATNN \citep{unterthiner2020predicting}
& $0.881\pm0.001$
& $0.841\pm0.002$ \\

XGBoost \citep{chen2016xgboost}
& $0.860\pm0.002$
& $0.859\pm0.001$ \\

LightGBM \citep{ke2017lightgbm}
& $0.858\pm0.002$
& $0.835\pm0.001$ \\

Random Forest \citep{breiman2001random}
& $0.772\pm0.002$
& $0.774\pm0.003$ \\

\midrule

Transformer-NFN \citep{tran2025b}
& $0.905\pm0.002$
& $0.910\pm0.001$ \\

Transformer-NFN large \citep{tran2026quasi}
& $0.907\pm0.001$
& $0.913\pm0.001$ \\

Transformer-NFN Quasi \citep{tran2026quasi}
& \second{$0.911\pm0.001$}
& \second{$0.914\pm0.001$} \\

\midrule

ProbeGen
& $0.898\pm0.001$
& $0.885\pm0.001$ \\

HiddenProbe
& \best{$0.920\pm0.003$}
& \best{$0.916\pm0.002$} \\

\bottomrule
\end{tabular}

\caption{
Test accuracy prediction on the MNIST-Transformers and
AGNews-Transformers benchmarks, evaluated on the full datasets. Performance is measured by Kendall's
$\tau$, reported as mean $\pm$ standard error over five seeds.
The best and second-best results are highlighted.
}
\label{tab:transformers}
\end{table}
As observed, \textbf{(1)}\ourmha achieves state-of-the-art performance on both benchmarks, beating Transformer-NFN models with lesser parameter count \textbf{(2)} Even output-only ProbeGen is competitive with the benchmarks.

\paragraph{Varying performance with the number of probes}
In the first part of the experiments, we showed that using the intermediate layers adds significant empirical improvement. Now we want to see how the performance of the probe-based models, namely ProbeGen and \our, changes as we vary the number of probes. We compare \our with ProbeGen using $N\in\{16,32,64,128,256\}$ probes. We take seed 0 results for all plots. Results are in \autoref{fig:number_of_probes}. 

\begin{figure}[htbp]
    \centering
    \includegraphics[width=\textwidth]{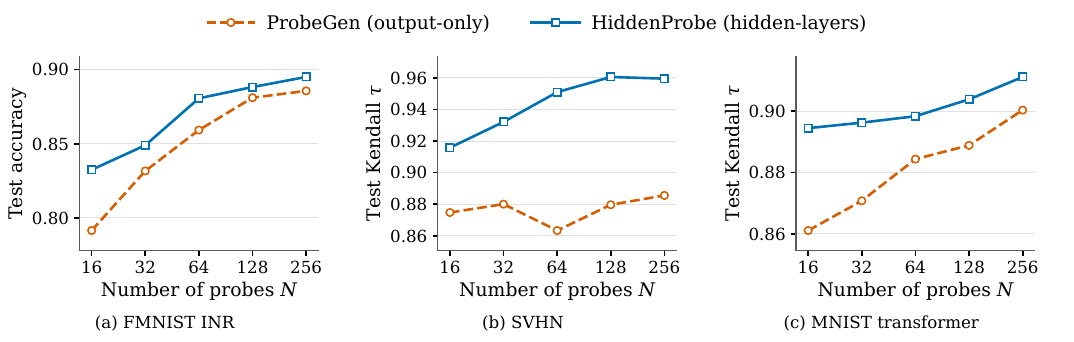}
    \caption{Performance of HiddenProbe and ProbeGen as the number of probes $N$ varies. FMNIST INR classification is evaluated by test accuracy, while regression tasks are evaluated by test Kendall's $\tau$. Higher is better in all panels.}
    \label{fig:number_of_probes}
    \vspace{-15pt}
\end{figure}

We make the following observations: \textbf{(1)} \our  generally improves as the probe count increases. Such a trend is also observed for ProbeGen but to a weaker extent \textbf{(2)} \our is comparable to, or better than, ProbeGen at a smaller (~$0.5\times$ or less) probe count. This supports the efficiency of including hidden layer information in probe-based learning.

\paragraph{Additional Information}
In all our experiments, \our uses the same number of parameters as ProbeGen (see \ref{num_param} for a complete parameter count comparison). A runtime comparison of \our, ProbeGen, and other DWS models is provided in Section~\ref{sec:times}. Synthetic experiments evaluating model reconstruction from finite probes are presented in Appendix~\ref{app:expt:synthetic}.

\vspace{-5pt}
\section{Conclusion}
\vspace{-5pt}
In this work, we have done a rigorous theoretical analysis of function-level identification of neural networks from probes. We have shown that one can use probe-based methods to identify a neural network from its outputs and also design universal approximators to learn properties of the neural net from probes. Across architectures like MLPs and Multi head attention blocks, we have shown that this identification can be done with much less number of probes if hidden layer information is incorporated, and we have accordingly proposed \our, which learns properties of neural nets from full hidden traces.  Our experiments demonstrate consistent improvements of \our over ProbeGen and strong weight-space baselines across diverse tasks, achieving state-of-the-art performance in several. We also demonstrated that incorporating hidden-layer information enables our models to match or surpass ProbeGen using substantially fewer probes, often requiring only half as many. Together, these results establish hidden-layer probing as an effective approach to learning neural functionals across different architectures and tasks. An interesting direction of future work could be designing such methods for larger architectures like language models or diffusion kernels.
\section{Acknowledgment}
N.D. and Y.S. are funded by the Israeli Science Foundation grant no.272/23 and DFG-ISF grant no. 3597/26. H.M. is supported by the Israel Science Foundation through
a personal grant (ISF 264/23) and an equipment grant (ISF
532/23), and by the Career Advancement Chairs in Artificial
Intelligence – Schmidt Futures. 

\newpage
\subsection*{AI use statement}

In this work, we used ChatGPT to assist in refining the exposition of
mathematical proofs and to check mathematical arguments. We also used ChatGPT
for language editing and improving the readability of the manuscript.
All AI-assisted mathematical arguments, claims, and edits were manually
reviewed and verified by the authors. We have also used ChatGPT and Claude for AI-assisted coding but all the data and codes were manually checked and verified.
Generative AI tools were not used to
generate experimental results, numerical results, or data reported in the
paper. We take responsibility for the final content of this work, including any text, claims, and mathematical exposition and code produced with the aid of generative AI.
\newpage
\bibliography{iclr/iclr2027_conference}
\newpage
\appendix

\section{\our: Probe Based learning using Hidden Neural Activations}
\label{app:hiddenprobe-master}
In this appendix we describe in full detail \our, our probe-based architecture for learning from hidden-layer activations. 
\subsection{\our: A probe-based model for learning on weight spaces}
\label{app:subsec:hiddenprobe}
We begin by discussing the permutation symmetries of hidden activations induced by weight-space symmetries, which motivate the design of \our.
\paragraph{Symmetries in weight spaces and hidden Layers}Consider a depth-$L$ MLP. Let $P_0=P_L=I$, and let
$P_1,\ldots,P_{L-1}$ be arbitrary permutation matrices. The transformation
\[\textstyle
W_i \mapsto P_{i+1}W_iP_i^\top,
\,
b_i \mapsto P_{i+1}b_i,
\,
0\le i<L,
\]
permutes the neurons within the hidden layers while leaving the realized function unchanged. As shown in prior work~\cite{dws_paper}, architectures that explicitly respect these symmetries can substantially outperform models that ignore them. We therefore design \our to be invariant to hidden-neuron permutations.

Under the above transformation, the representation at hidden layer $i$ transforms as $x_i(p;\param)\mapsto P_i x_i(p;\param),\, i\in[L-1]$
Thus, each hidden layer is equivariant to its corresponding permutation $P_i$.
In contrast, the input and final output are unchanged, since $P_0=P_L=I$.

We now describe how this symmetry acts on hidden probe responses. Consider a tuple of probes $\mP=(\probe_1,\ldots,\probe_N).$ For hidden layer $i$, define the response vector of neuron $k$ across all probes as
$v^{i,k}
\coloneqq
\left(
x_{i,k}(\probe_1;\param),
\ldots,
x_{i,k}(\probe_N;\param)
\right)
\in \R^N.$
Since $x_i$ is equivariant to $P_i$, applying a hidden-neuron permutation simply permutes the collection $\left\{v^{i,k}:k\in[d_i]\right\}.$
Therefore, each hidden layer is naturally represented as an unordered set of
$d_i$ vectors in $\R^N$.

\paragraph{Overview of \our}Motivated by this observation, \our processes each hidden layer using a permutation-invariant set encoder. Specifically, we encode every hidden layer $i$ as $
r_\learningparam^i\coloneqq T_\learningparam^i
\left(\left\{v^{i,k}:k\in[d_i]\right\}\right),$
where $T_\learningparam^i$ is a layer-wise set encoder. In our implementation, $T_\learningparam^i$ is an invariant Set Transformer~\cite{settransformer}, making $r_\learningparam^i$ invariant to permutations of the neurons in layer $i$.

The resulting hidden-layer representations are concatenated in layer order,
together with the final-layer probe outputs:
$$\textstyle
\Phi_{\mP,\learningparam}^{\mathrm{hidden}}(\param)=
\left[
r_\learningparam^1;
\ldots;
r_\learningparam^{L-1};
f(\probe_1;\param);
\ldots;
f(\probe_N;\param)
\right].
$$

A final predictor $\rho_\learningparam$ maps this representation to the target: $\widehat y(\param)=\rho_\learningparam
(\Phi_{\mP,\learningparam}^{\mathrm{hidden}}(\param)).$ The probes $\probe_1,\ldots,\probe_N$, the layer-wise encoders
$T_\learningparam^i$ (allowed to be different across layers but can have shared parameters to reduce overall parameter count), and the predictor $\rho_\learningparam$ are trained jointly.
The learned probes are shared across all target networks.

\subsection{\our for transformers}
\label{app:subsec:transformer-hiddenprobe}
\paragraph{Learning a mixed-length probe bank.}
Here we introduce \ourmha, a probe-based model for learning on softmax-attention based transformers. Our reconstruction arguments from Theorem~\ref{thm:att-fixed-informal} and Algorithm~\ref{alg:att-sketch} use different sequence lengths: singleton probes recover the value matrix, while multi-token probes recover the attention weights. This suggests that probes of different lengths can provide complementary information about a target.
We retain this design principle, but learn a shared set of probes of varying sequence length rather than imposing the exact configurations.

\paragraph{Design of \ourmha predictor}
For each probe $P_j\in \mP$, we record the outputs of the transformer
blocks, the final normalized token states, and the decoder output:
\[
\mathcal T_{\param}
=\left(
X_1(P_j;\param),\cdots,
X_L(P_j;\param),\,X_{\mathrm{norm}}(P_j;\param),\,
f(P_j;\param)
\right)_{P_j\in \mP}.
\]
For each observed layer and channel, we concatenate its
responses across all probes and token positions, using a fixed
order. A shared MLP encodes each resulting channel trajectory
into one feature vector.
These channel features and final logit features are passed to a transformer followed by an MLP to give the output: $\widehat y_{\param}=R_{\learningparam}(\mathcal T_{\param})$.

\paragraph{Incorporating weight space symmetries}
Head permutations and the transformations
$Q_a\mapsto Q_aR_a$, $K_a\mapsto K_aR_a^{-\top}$ and
$V_a\mapsto V_aS_a$, $O_a\mapsto S_a^{-1}O_a$,
for invertible $R_a,S_a$, leave the attention output
unchanged. Consequently, they leave every recorded block
response and classifier output unchanged, so \ourmha
is invariant to these symmetries by design. Handling such symmetries remains a key challenge in works like \cite{tran2025b}, which we get for free.

\section{Proof of Technical results}
\label{app:proofs}

\thmfinalprobes*
\begin{proof}
The proof is based on the Finite Witness Theorem (Theorem A.2 in \cite{amir2023neural}), which states that, for an analytic $G:\RR^a\times \RR^b\to \RR $, and Lebesgue almost every $w_1,\ldots,w_{a+1}\in \RR^b $,
 $$\{y \in \RR^a| \, G(y;w_i)=0, \forall i=1,\ldots, a+1 \}=\{y \in \RR^a| \, G(y;w)=0, \forall w\in \RR^b \} $$
To prove our theorem, we define the real analytic function $G(\param_1,\param_2;p):=\|f(p;\param_1)-f(p;\param_2)\|_2^2$. Since the pair $(\theta_1,\theta_2) $ resides in $\RR^{2q}$, the Finite Witness Theorem implies that for $2q+1$ random samples $p_1,\ldots,p_{2q+1} $, we will have that for all $\theta_1,\theta_2$, $f(p_i;\param_1)-f(p_i;\param_2)=0 $ for all $i=1,\ldots,2q+1$, if and only if $f(p;\param_1)-f(p;\param_2)=0 $ for all $p\in \RR^d$.
\end{proof}

\subsection{Identification of Analytic MLPs from Hidden Probes}
\label{app:proofs:mlphidden}
We borrow notations from Section~\ref{sub:hidden_MLP_an} and Algorithm~\ref{alg:weights}, but occasionally them slightly. Let $L\in\NN$, $L\ge1$, and let $d_0,\ldots,d_L$ be positive
integers. Let $\X\subseteq\R^{d_0}$ be nonempty, open, and connected.
For each $i\in\{1,\ldots,L\}$, let
$\Theta_i\subseteq\R^{p_i}$ be nonempty, open, and connected,
and let $f_i:\R^{d_{i-1}}\times\Theta_i\to\R^{d_i}$
be jointly real analytic. Define the following notations:
\begin{align*}
\Theta\coloneqq\prod_{i=1}^L\Theta_i,\,\param\coloneqq(\theta_1,\ldots,\theta_L)\in\Theta,\,q\coloneqq\sum_{i=1}^L q_i,\,q_{\max}&\coloneqq\max_{1\le i\le L}q_i.
\end{align*}
The full model is the composition of these layer maps:
\begin{equation}
\label{eq:sequential}
x_0(u;\param)\coloneqq u,\,x_i(u;\param)\coloneqq
f_i\left(x_{i-1}(u;\param),\theta_i\right),\,\forall i\in [L],\,
f(u;\param)\coloneqq x_L(u;\param).
\end{equation}
Given a tuple of probes $\P=(p_1,\ldots,p_N)\in\X^N$, define
\begin{equation}
\label{eq:mlpdefs}
\begin{split}
&\mathbf X_i^\param(\P)\coloneqq
\begin{pmatrix}
x_i(p_1;\param)\cdots x_i(p_N;\param)
\end{pmatrix}\in\R^{d_i\times N},
\, i=0,\ldots,L.\\
&\overline{\mathbf X}_i^\param=\begin{pmatrix}\mathbf X_i^\param\\ \mathbf 1_N^\top\end{pmatrix}
\end{split}
\end{equation}
Thus, full intermediate observation records the sampled
states at every layer, including the final output.

\paragraph{Identification of Arbitrary Analytic Nets from Hidden Layers:}
Theorem~\ref{thm:finalprobes} shows that $2q+1$ final-layer probes suffice to distinguish any two functionally distinct models in an analytic family. Here, we show that for analytic sequential models of the form in \eqref{eq:sequential}, access to intermediate-layer observations reduces the sufficient probe count to $q+q_{\max}+1$.

\begin{restatable}[Identification from intermediate observations]
{theorem}{thmgeneralhidden}
\label{thm:general-hidden}
Under the preceding assumptions, suppose
$N\ge q+q_{\max}+1$.
Then, for Lebesgue-almost every $\P\in\X^N$,
simultaneously for every $\param,\param'\in\Theta$, one has
\begin{align*}
\mathbf X_i^\param(\P)=\mathbf X_i^{\param'}(\P)
\,\forall\,i\in\{0,\cdots,L\}\,\implies\,f(u;\param)=f(u;\param'),
\,\forall\,u\in\X
\end{align*}
\end{restatable}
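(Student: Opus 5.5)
The plan is to argue layer by layer and apply the Finite Witness Theorem (as used in the proof of Theorem~\ref{thm:finalprobes}) once per layer. The key point is a choice of auxiliary function whose number of free variables is at most $q+q_{\max}$ rather than $2q$. For each $i\in[L]$ I would define
\[
G_i(\theta_1,\ldots,\theta_i,\theta_i';u)\coloneqq\bigl\|f_i\bigl(x_{i-1}(u;\param),\theta_i\bigr)-f_i\bigl(x_{i-1}(u;\param),\theta_i'\bigr)\bigr\|_2^2 .
\]
Here $x_{i-1}(u;\param)$ depends only on $\theta_1,\ldots,\theta_{i-1}$. The function $G_i$ is real analytic on $\Theta_1\times\cdots\times\Theta_i\times\Theta_i\times\X$, since compositions of real analytic maps are analytic. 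Its parameter variables live in a space of dimension $\sum_{k\le i}q_k+q_i\le q+q_{\max}$.

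The Finite Witness Theorem then gives a full-measure set $\mathcal G_i\subseteq\X^{N}$ with the following property. For $\P\in\mathcal G_i$ and every choice of $(\theta_1,\ldots,\theta_i,\theta_i')$, the vanishing of $G_i(\cdot;p_j)$ for all $j$ implies the vanishing of $G_i(\cdot;u)$ for all $u\in\X$. This needs $N\ge q+q_{\max}+1$; for larger $N$ I would simply use the first $q+q_{\max}+1$ probes, since adding probes only strengthens the hypothesis. I then set $\mathcal G\coloneqq\bigcap_{i=1}^L\mathcal G_i$, which is still of full measure as a finite intersection.

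Next comes an induction on $i$, for fixed $\P\in\mathcal G$ and arbitrary $\param,\param'$ with equal hidden records. The claim is that $x_i(\cdot;\param)\equiv x_i(\cdot;\param')$ on $\X$. The base case $i=0$ is trivial. Assume the claim holds for $i-1$. Then in particular $x_{i-1}(p_j;\param')=x_{i-1}(p_j;\param)$ for every $j$, so equality of $\mathbf X_i$ reads
\[
f_i\bigl(x_{i-1}(p_j;\param),\theta_i\bigr)=f_i\bigl(x_{i-1}(p_j;\param),\theta_i'\bigr)\quad\text{for all } j .
\]
This says exactly that $G_i(\theta_1,\ldots,\theta_i,\theta_i';p_j)=0$ for all $j$. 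The witness property then gives the same identity at every $u\in\X$. Using the inductive hypothesis once more, now at a general $u$,
\[
x_i(u;\param)=f_i\bigl(x_{i-1}(u;\param),\theta_i'\bigr)=f_i\bigl(x_{i-1}(u;\param'),\theta_i'\bigr)=x_i(u;\param').
\]
At $i=L$ this is the claimed equality $f(\cdot;\param)=f(\cdot;\param')$ on $\X$.

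The step I expect to need care is the application of the Finite Witness Theorem itself. As quoted, it is stated for analytic $G:\RR^a\times\RR^b\to\RR$, whereas here the parameters range over open connected $\Theta_i$ and the probes over an open connected $\X$. I would first check whether Theorem A.2 of \cite{amir2023neural} is already stated for analytic functions on open connected domains, which I believe it is. If not, the fallback is to rerun its proof: that proof uses only the identity theorem for analytic functions on connected domains together with a dimension count on analytic zero sets, and both ingredients hold on open connected domains. A smaller point to keep straight is the induction itself. It deliberately carries functional equality of $x_{i-1}$ rather than parameter equality, since equality of parameters cannot be expected without genericity assumptions on $\param,\param'$.
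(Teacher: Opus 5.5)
Your proposal is correct and follows essentially the same route as the paper, which uses the same per-layer function $H_i(u;\eta,\alpha,\beta)$ on $\Theta_{<i}\times\Theta_i\times\Theta_i$, of dimension $q_{<i}+2q_i\le q+q_{\max}$. Like you, it applies the Finite Witness Theorem once per layer on these open connected domains, takes the finite union of the resulting null probe sets, and inducts on the functional equality $x_i(\cdot;\param)\equiv x_i(\cdot;\param')$.
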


\begin{proof}
For $i\in[L]$, we use the following notations:
\begin{align*}
q_{<i}\coloneqq\sum_{k=1}^{i-1}q_k,\,
\Theta_{<i}\coloneqq\prod_{k=1}^{i-1}\Theta_k,\,
Q_i\coloneqq q_{<i}+2q_i.
\end{align*}
For $i=1$, set $q_{<1}=0$ and $\Theta_{<1}=\R^0$.
The state $x_{i-1}(u;\param)$ depends only on
$(\theta_1,\cdots,\theta_{i-1})$.
For $\eta=(\eta_1,\cdots,\eta_{i-1})\in\Theta_{<i}$,
we therefore write $x_{i-1}(u;\eta)$ for the state
generated by these blocks and for $i=1$, this means
$x_0(u;\eta)=u$.

We define $\Omega_i\coloneqq\Theta_{<i}\times\Theta_i\times\Theta_i,$
and $H_i(u;\eta,\alpha,\beta)
\coloneqq
\left\|
f_i\left(x_{i-1}(u;\eta),\alpha\right)
-
f_i\left(x_{i-1}(u;\eta),\beta\right)
\right\|_2^2.$
By composition, $H_i$ is jointly real analytic on
$\X\times\Omega_i$, and $\Omega_i$ is an open connected
subset of $\R^{Q_i}$. Moreover, we have
\begin{align*}
Q_i+1=q_{<i}+2q_i+1
\le q+q_i+1
\le q+q_{\max}+1
\le N.
\end{align*}

The Finite Witness Theorem
\citep[Theorem A.2]{amir2023neural}
provides a Lebesgue-null set
$E_i\subseteq\X^{Q_i+1}$ such that, for every
$(p_1,\cdots,p_{Q_i+1})\notin E_i$ and every
$(\eta,\alpha,\beta)\in\Omega_i$,
\begin{align*}
H_i(p_n;\eta,\alpha,\beta)=0,\ \forall\,n\in[Q_i+1]
\implies
H_i(u;\eta,\alpha,\beta)=0,\ \forall\,u\in\X.
\end{align*}
We define $B_i\coloneqq E_i\times\X^{N-Q_i-1},\,
B\coloneqq\bigcup_{i=1}^L B_i,$
where $\X^0$ denotes a singleton.
Each $B_i$, and hence $B$, is Lebesgue-null.
For every $\P=(p_1,\cdots,p_N)\notin B$, the preceding
implication holds at every layer, simultaneously for
all $(\eta,\alpha,\beta)\in\Omega_i$.
In particular, $B$ does not depend on the pair of
parameters being compared.

Fix $\P\notin B$ and
$\param=(\theta_1,\cdots,\theta_L)$,
$\param'=(\theta_1',\cdots,\theta_L')$ satisfying
$\mathbf X_i^\param(\P)=\mathbf X_i^{\param'}(\P),\,
i=0,\cdots,L.$
We prove by induction that
$x_i(\cdot;\param)\equiv x_i(\cdot;\param')$ on $\X$.
For $i=0$, both maps are the identity.

Suppose the assertion holds for $i-1$, and set
$\eta=(\theta_1,\cdots,\theta_{i-1})$.
Then, for every $u\in\X$, we have
\begin{align*}
x_{i-1}(u;\eta)=x_{i-1}(u;\param)
=x_{i-1}(u;\param'),\,
\text{and}\,
H_i(u;\eta,\theta_i,\theta_i')=
\left\|
x_i(u;\param)-x_i(u;\param')
\right\|_2^2.
\end{align*}
Equality at the probes $\P$ therefore gives
$H_i(p_n;\eta,\theta_i,\theta_i')=0,\,
n=1,\cdots,N.$
Since $\P\notin B_i$, the finite witness theorem
implies
$H_i(u;\eta,\theta_i,\theta_i')=0$ for every $u\in\X$.
Thus $x_i(\cdot;\param)\equiv x_i(\cdot;\param')$,
completing the induction.
\end{proof}

\paragraph{When the activation is also injective and non polynomial:}
We now show that if the activation is analytic, injective, and non-polynomial, the sufficient number of probes can be substantially reduced. We prove a stronger version of Theorem~\ref{thm:hidden-weaker} for that. We first prove a couple of lemmas that will be useful in proving the result.

\begin{lemma}
\label{lem:finite-point-ridge-span}
Let $y_1,\ldots,y_K\in\R^q$ be pairwise distinct.  Let
$\sigma:\R\to\R$ be real analytic and non-polynomial.  Then
\begin{align*}
  \operatorname{span}
  \left\{
    \left(\sigma(w^\top y_1+b),\ldots,\sigma(w^\top y_K+b)\right)
    :
    (w,b)\in\R^q\times\R
  \right\}
  =
  \R^K.
\end{align*}
Consequently, for Lebesgue-a.e.
$
  (w_1,b_1),\ldots,(w_{K-1},b_{K-1})
  \in(\R^q\times\R)^{K-1},
$
the matrix
\begin{align*}
  M=
  \begin{pmatrix}
  1 & \cdots & 1\\
  \sigma(w_1^\top y_1+b_1) & \cdots & \sigma(w_1^\top y_K+b_1)\\
  \vdots & & \vdots\\
  \sigma(w_{K-1}^\top y_1+b_{K-1}) & \cdots &
  \sigma(w_{K-1}^\top y_K+b_{K-1})
  \end{pmatrix}
\end{align*}
is invertible.
\end{lemma}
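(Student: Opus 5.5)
The plan is to prove the span statement by duality, and then obtain the generic invertibility from the fact that the zero set of a nonzero analytic function is Lebesgue-null.

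\textbf{Step 1 (span).} Suppose the span is a proper subspace of $\R^K$. Then there is a nonzero $c\in\R^K$ with
\begin{align*}
g(w,b)\coloneqq\sum_{k=1}^K c_k\,\sigma(w^\top y_k+b)=0\quad\text{for all }(w,b)\in\R^q\times\R.
\end{align*}
Since the $y_k$ are pairwise distinct, we can pick $v\in\R^q$ such that the scalars $t_k\coloneqq v^\top y_k$ are pairwise distinct. This holds off a finite union of hyperplanes $\{v: v^\top(y_k-y_l)=0\}$. Restricting to $w=sv$ gives, for all $s,b\in\R$,
\begin{align*}
\sum_k c_k\,\sigma(s t_k+b)=0.
\end{align*}
Differentiating $n$ times in $s$ and setting $s=0$ yields
\begin{align*}
\sigma^{(n)}(b)\sum_k c_k t_k^n=0\quad\text{for all } b \text{ and all } n\ge 0.
\end{align*}
Because $\sigma$ is analytic and non-polynomial, no derivative $\sigma^{(n)}$ vanishes identically. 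Indeed, if $\sigma^{(n)}\equiv 0$, then $\sigma$ would be a polynomial. Hence $\sum_k c_k t_k^n=0$ for $n=0,\ldots,K-1$. This says $c$ lies in the kernel of a Vandermonde matrix with distinct nodes $t_k$, which forces $c=0$, a contradiction. So the span is all of $\R^K$.

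\textbf{Step 2 (some invertible $M$).} The vector $\mathbf 1$ is nonzero. Since the vectors $u(w,b)\coloneqq(\sigma(w^\top y_k+b))_k$ span $\R^K$, we can extend $\{\mathbf 1\}$ to a basis using $K-1$ of them. This is the standard basis-extension argument: if every $u(w,b)$ lay in the span of the current independent set, that set would span $\R^K$. The resulting choice $(w_1^*,b_1^*),\ldots,(w_{K-1}^*,b_{K-1}^*)$ makes $M$ invertible.

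\textbf{Step 3 (genericity).} Consider $\det M$ as a function of $((w_j,b_j))_{j=1}^{K-1}\in(\R^q\times\R)^{K-1}$. It is a polynomial in the entries $\sigma(w_j^\top y_k+b_j)$, so it is real analytic on the connected space $\R^{(q+1)(K-1)}$. By Step 2 it is not identically zero. A nonzero real analytic function on a connected open set has a zero set of Lebesgue measure zero. Therefore $M$ is invertible for almost every choice of the parameters.

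The only delicate point is Step 1. It requires the reduction to one dimension through a separating direction $v$, and it requires the observation that non-polynomiality of an analytic $\sigma$ rules out any $\sigma^{(n)}\equiv0$. Here analyticity is what makes "non-polynomial" equivalent to "no derivative vanishes identically." For a merely smooth $\sigma$ one would only get pointwise non-vanishing of derivatives, but that would also suffice for this argument.
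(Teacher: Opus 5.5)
Your proposal is correct and follows essentially the paper's route: a separating direction $v$, reduction to a Vandermonde system, extending $\mathbf 1$ to a basis, and the null zero set of the nonzero analytic function $\det M$. Differentiating in $s$ at $s=0$ with a separate $b$ for each $n$ is just a lighter version of the paper's Taylor expansion at a single $b_0$ with $\sigma^{(k)}(b_0)\neq0$ for all $k<K$. One small correction to your closing remark: analyticity is not what rules out $\sigma^{(n)}\equiv0$, since for any smooth $\sigma$ on $\R$ that already forces a polynomial, so Step 1 holds for smooth non-polynomial $\sigma$ and analyticity is needed only in Step 3.
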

\begin{proof}
We consider the non-trivial case $K\ge2$.
Choose $v\in\R^q$ such that the numbers $t_n\coloneqq v^\top y_n$,
$n\in[K]$, are pairwise distinct.  Such a choice is possible because the
bad set is the finite union of proper hyperplanes
$\bigcup_{n<n'}\{v\in\R^q:v^\top(y_n-y_{n'})=0\}$, each of which is
proper since $y_n\ne y_{n'}$.
We first prove the span claim.  Suppose, for contradiction, that the
span is proper.  Then there exists a nonzero vector
$c=(c_1,\ldots,c_K)\in\R^K$ such that
\begin{equation}
\label{eq:one-dimensional-annihilator}
\begin{split}
    &\sum_{n=1}^K c_n\sigma(w^\top y_n+b)=0
  \,\forall\,(w,b)\in\R^q\times\R\\
  \implies&\sum_{n=1}^K c_n\sigma(b+a t_n)=0
  \,\forall\,a,b\in\R,\,\text{by taking }w=av .
\end{split}
\end{equation}
Since $\sigma$ is real analytic and non-polynomial, none of the
derivatives $\sigma^{(k)}$, $k=0,\ldots,K-1$, is identically zero.
Indeed, if $\sigma^{(k)}\equiv0$ for some $k$, then $\sigma$ would be a
polynomial of degree at most $k-1$.  Therefore, for each
$k=0,\ldots,K-1$, the zero set of $\sigma^{(k)}$ is a proper discrete
subset of $\R$.  Hence the finite union
$\bigcup_{k=0}^{K-1}\{b\in\R:\sigma^{(k)}(b)=0\}$ does not cover $\R$.
Choose $b_0\in\R$ outside this union.  Then we have:
\begin{align*}
  \sigma^{(k)}(b_0)\ne0,
  \, k=0,\ldots,K-1.
\end{align*}
Now set $b=b_0$ in equation~\ref{eq:one-dimensional-annihilator}.  Since
$\sigma$ is analytic at $b_0$, for all sufficiently small $a$ we have:
\begin{align*}
  &0=\sum_{n=1}^K c_n\sigma(b_0+a t_n)
  =\sum_{n=1}^K c_n
  \sum_{k=0}^{\infty}
  \frac{\sigma^{(k)}(b_0)}{k!}
  a^k t_n^k\\
  \implies & 0 = \sum_{k=0}^{\infty}
  \frac{\sigma^{(k)}(b_0)}{k!}
  \left(\sum_{n=1}^K c_n t_n^k\right)a^k .
\end{align*}
Since the convergent power series is identically $0$ in a neighbourhood
around the origin, every coefficient vanishes, and since
$\sigma^{(k)}(b_0)\ne0$ for $k\le K-1$ we get
$\sum_{n=1}^K c_n t_n^k=0$ for $k=0,\ldots,K-1$.  Hence $c$ lies in the
kernel of the Vandermonde matrix
\begin{align*}
  V=
  \begin{pmatrix}
  1 & \cdots & 1\\
  t_1 & \cdots & t_K\\
  \vdots & & \vdots\\
  t_1^{K-1} & \cdots & t_K^{K-1}
  \end{pmatrix}.
\end{align*}
But the $t_n$ are pairwise distinct, so we have:
\begin{align*}
  \det V=\prod_{1\le n<n'\le K}(t_{n'}-t_n)\ne0.
\end{align*}
Therefore $c=0$, contradicting the choice of $c$.  This proves the span
claim.
Now we define
\begin{align*}
  v(w,b)\coloneqq
  \left(\sigma(w^\top y_1+b),\ldots,\sigma(w^\top y_K+b)\right)
  \in\R^K.
\end{align*}
We first show that there exists some $(K-1)$-tuple of parameters such
that the matrix $M$ is full rank.  We start with the one-dimensional
subspace $V_0\coloneqq\operatorname{span}\{\mathbf 1_K\}$.  If every
vector $v(w,b)$ belonged to $V_0$, then the span of all ridge vectors
would be contained in $V_0$, but these vectors span $\R^K$.  Hence there
exists $(w_1,b_1)$ such that $v(w_1,b_1)\notin V_0$.  Set
$V_1\coloneqq\operatorname{span}\{\mathbf 1_K,v(w_1,b_1)\}$.  Continuing
inductively (as long as $\dim V_j<K$ some ridge vector lies outside
$V_j$), after $K-1$ steps we obtain parameters
$(w_1,b_1),\ldots,(w_{K-1},b_{K-1})$ such that the $K$ vectors
\begin{align*}
  \mathbf 1_K,\,
  v(w_1,b_1),\ldots,v(w_{K-1},b_{K-1})
\end{align*}
are linearly independent in $\R^K$.  Since there are $K$ such vectors,
they form a basis of $\R^K$.  Equivalently, the corresponding matrix $M$
has nonzero determinant for at least one choice of parameters.
Now we define
\begin{align*}
    \Delta((w_1,b_1),\ldots,(w_{K-1},b_{K-1}))
  \coloneqq
  \det M.
\end{align*}
Since $\sigma$ is real analytic, every entry of $M$ is a real-analytic
function of the parameters $(w_1,b_1)$, $\ldots$, $(w_{K-1},b_{K-1})$,
and therefore $\Delta$ is real analytic on the connected open set
$(\R^q\times\R)^{K-1}$.  The preceding construction shows that $\Delta$
is not identically zero.  Hence its zero set is the zero set of a
nonzero real-analytic function on a connected open set, so it has
Hausdorff dimension strictly smaller than $(K-1)(q+1)$, implying zero
Lebesgue measure.  Therefore $M$ is invertible for Lebesgue-a.e. choice
of the ridge parameters.
\end{proof}

Now, for the following lemma, we define another notation for MLPs with width sequence $\Db\coloneqq(d=d_0,d_1,\cdots,d_{L-1}, d_L=m)$. For probes $\mP\in\X^N$ and $\theta\in\paramspace_\Db$, we recall the definitions $\overline{\mathbf X}_i^\param(\mP)\coloneqq
\begin{pmatrix}
\mathbf X_i^\param(\mP)\\ \mathbf 1_N^\top
\end{pmatrix}
\in\R^{(d_i+1)\times N}$, where $\mathbf X_i^\param(\mP)$ is as defined in equation~\ref{eq:mlpdefs}. We show that, for sufficiently large $N\coloneqq|\mP|$, the matrix $\overline{\mathbf X}_i^\param(\mP)$ is almost always full rank.

\begin{lemma}
\label{lem:generic-layerwise-rank}
Assume that $\sigma:\R\to\R$ is real analytic, injective, and
non-polynomial.  Let $N\ge\max_{0\le i\le L-1}(d_i+1)$, and define
\begin{align*}
  &E\coloneqq E_1\cup E_2,\ \text{ where}:\\
  &E_1\coloneqq
  \left\{
  \mP=(p_1,\ldots,p_N)\in\X^N:
  \operatorname{rank}
  \begin{pmatrix}
  p_1&\cdots&p_N\\
  1&\cdots&1
  \end{pmatrix}
  <d+1
  \right\},\\
  &E_2\coloneqq\bigcup_{1\le n<n'\le N}\{\mP\in\X^N:\,p_n=p_{n'}\}.
\end{align*}
Then $\dim_{\mathrm H}(E)<Nd$, and for every probe tuple
$\mP\in\X^N\setminus E$ there exists a Lebesgue-null set
$\mathcal M_\mP\subseteq\paramspace_\Db$ such that
\begin{align*}
  \operatorname{rank}\overline{\mathbf X}_i^{\param_0}(\mP)=d_i+1,
  \,\forall\,i\in\{0,\cdots,L-1\},
\end{align*}
for every $\param_0\in\paramspace_\Db\setminus\mathcal M_\mP$.
\end{lemma}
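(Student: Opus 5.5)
The plan has two parts: first show that $E$ is small, then run an induction over layers that uses Lemma~\ref{lem:finite-point-ridge-span} at each step.

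\emph{The exceptional set.} The set $E_2$ is a finite union of linear subspaces of codimension $d$ in $\R^{Nd}$, so its Hausdorff dimension is strictly below $Nd$. For $E_1$, note that $N\ge d_0+1=d+1$. The set $E_1$ is the common zero set of all $(d+1)\times(d+1)$ minors of $\begin{pmatrix}\mP\\ \mathbf 1^\top\end{pmatrix}$. The minor built from the first $d+1$ columns is a nonzero polynomial: it is nonzero at any affinely independent configuration. Hence $E_1$ lies in a proper algebraic subset and has dimension $<Nd$. This settles the case $i=0$: for $\mP\notin E$ we get $\operatorname{rank}\overline{\mathbf X}_0(\mP)=d+1$.

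\emph{Induction on layers.} Fix $\mP\notin E$. Suppose that for $\theta$ outside a null set, the columns $x_{i-1}(p_1;\theta),\dots,x_{i-1}(p_N;\theta)$ are pairwise distinct. At $i=1$ this holds because the probes are distinct, since $\mP\notin E_2$. At later layers it is inherited: if the $x_{i-1}$ columns are distinct and $\overline{\mathbf X}_{i-1}$ has full rank $d_{i-1}+1$, then its columns are distinct, and a generic $W_i$ keeps the images $W_ix_{i-1}(p_n)+b_i$ distinct, because the bad set is a finite union of proper subspaces in $W_i$. Injectivity of $\sigma$ then keeps the $x_i(p_n)$ distinct.

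Now fix the earlier parameters $(W_k,b_k)_{k<i}$ in the good set, and write $y_n=x_{i-1}(p_n)\in\R^{d_{i-1}}$ for the pairwise distinct points. Keep only the first $K=d_i+1\le N$ of them; this is where we use $N\ge d_i+1$ for $i\le L-1$. Lemma~\ref{lem:finite-point-ridge-span} applies with the rows $(w_r,b_r)$ of $(W_i,b_i)$, $r=1,\dots,d_i$. It shows that for a.e. $(W_i,b_i)$ the $(d_i+1)\times(d_i+1)$ submatrix of $\overline{\mathbf X}_i$ on these columns is invertible, so $\overline{\mathbf X}_i$ has rank $d_i+1$.

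\emph{Nullness of the bad set.} Let $\mathcal M_\mP$ be the set of $\theta_0$ for which some $\overline{\mathbf X}_i$, $i\le L-1$, is rank deficient. This set is contained in the union over $i$ of the zero sets of the analytic functions $\theta\mapsto\det(\text{chosen minor of }\overline{\mathbf X}_i^\theta(\mP))$. Each such determinant is real analytic on the connected parameter space. The induction, via Lemma~\ref{lem:finite-point-ridge-span} (or alternatively Fubini: null fibers for a.e. earlier parameters), supplies a point where it is nonzero, so its zero set is Lebesgue-null. Taking the finite union over $i$ gives $\mathcal M_\mP$ null.

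\emph{Main obstacle.} The delicate step is the bookkeeping of "almost every" across layers. Distinctness of the points $y_n$ holds only for generic earlier parameters. The cleanest way to handle this is to exhibit one good parameter and then use analyticity of the determinant, rather than iterating Fubini over the layers.
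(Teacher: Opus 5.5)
Your proof is correct. Its skeleton matches the paper's: the same dimension count for $E_1$ (a nonzero $(d+1)\times(d+1)$ minor) and $E_2$ (diagonals). It also uses the same layer-by-layer induction: distinctness of the propagated probe states is carried forward by a weight matrix avoiding finitely many proper linear conditions plus injectivity of $\sigma$, and full rank of $\overline{\mathbf X}_i$ comes from Lemma~\ref{lem:finite-point-ridge-span} applied to the first $d_i+1$ distinct states.

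The genuine difference is the globalization step. The paper shows that, for each fixed good prefix, the bad set of the next block is null. It then assembles these fiberwise statements into a null set of full parameter vectors via Tonelli, which forces it to check that the bad prefix sets $\mathcal B_i$ are Borel. You instead note that a fixed $(d_i+1)\times(d_i+1)$ minor of $\overline{\mathbf X}_i^\theta(\mathcal P)$ is real analytic in the whole parameter vector on the connected space $\R^{q_\Db}$. So a single witness $\theta$ with nonzero minor suffices. That witness needs only an existential version of the induction: choose each earlier $W_k$ outside finitely many proper subspaces, then take one admissible $(W_i,b_i)$ from Lemma~\ref{lem:finite-point-ridge-span}.

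To make this airtight, read your distinctness claim existentially. Alternatively, justify its ``outside a null set'' form by the same analyticity trick applied to $\|x_i(p_n;\theta)-x_i(p_{n'};\theta)\|_2^2$. As phrased, it tacitly needs Fubini, and the hypothesis on the rank of $\overline{\mathbf X}_{i-1}$ plays no role there.

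Your route is shorter, avoids measurability altogether, and yields more than the lemma states. The rank-deficient set $\mathcal M_{\mathcal P}$ is the zero set of a nonzero real-analytic function such as $\prod_i\det(\overline{\mathbf X}_i\overline{\mathbf X}_i^\top)$. It is therefore closed, nowhere dense, and of Hausdorff dimension at most $q_\Db-1$. This is exactly what the paper later extracts in Theorem~\ref{thm:hidden}, where it uses this lemma only to produce one point with $\Delta_{\mathcal P}>0$. The paper's Tonelli scheme, in exchange, needs only that each conditional bad set is null (and measurable), not analyticity of the minors in the earlier layers' parameters.
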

\begin{proof}
Since $\X$ has nonempty interior, choose $x_0\in\X$ and
$\epsilon>0$ such that $B(x_0,\epsilon)\subseteq\X$.  Let $e_1,\ldots,e_{d}$ be
the standard basis of $\R^{d}$.  Then the determinant
\begin{align*}
  P(p_1,\ldots,p_{d+1})
  \coloneqq
  \det
  \begin{pmatrix}
  p_1&\cdots&p_{d+1}\\
  1&\cdots&1
  \end{pmatrix}
\end{align*}
is nonzero for the choice of points
$\{x_0,x_0+\frac{\epsilon}{2}e_1,\cdots,x_0+\frac{\epsilon}{2}e_{d}\}$:
subtracting the first column from each of the others gives the value
$\pm(\epsilon/2)^{d}$.  Hence $P$ is not the zero polynomial.  If a
probe tuple belongs to $E_1$, then every $(d+1)\times(d+1)$ minor of the
matrix in the definition of $E_1$ vanishes, so
$P(p_1,\ldots,p_{d+1})=0$.  Thus $E_1\subseteq\{P=0\}\times\X^{N-d-1}$.
Since $P$ is a nonzero polynomial, $\{P=0\}$ has Hausdorff dimension
strictly smaller than $(d+1)d$.  Therefore $\dim_{\mathrm H}(E_1)<Nd$.
Moreover, each diagonal set $\{p_n=p_{n'}\}$ has Hausdorff dimension
$(N-1)d<Nd$.  Since $E$ is a finite union of such sets, we have
$\dim_{\mathrm H}(E)<Nd$.
 
Fix a probe tuple $\mP=(p_1,\ldots,p_N)\in\X^N\setminus E$.  Then the
probes are pairwise distinct and
$\operatorname{rank}\overline{\mathbf X}_0(\mP)=d+1$.  Note that
$\overline{\mathbf X}_i^\param(\mP)$ and the sampled states
$x_i(p_n;\param)$ depend on $\param=(B_0,\ldots,B_{L-1})$ only through
the blocks $B_0,\ldots,B_{i-1}$.  We shall prove inductively that, for
Lebesgue-a.e. choice of the blocks $B_0,\ldots,B_{i-1}$, the following
two properties hold:
\begin{align}
  &\operatorname{rank}\overline{\mathbf X}_i^\param(\mP)=d_i+1,
  \tag{$R_i$}\\
  &x_i(p_1;\param),\ldots,
  x_i(p_N;\param)
  \text{ are pairwise distinct.}
  \tag{$D_i$}
\end{align}
For $i=0$, property $(R_0)$ holds by the definition of $E_1$, and
$(D_0)$ holds because the probes $p_1,\ldots,p_N$ are pairwise
distinct; neither involves any block.
Now fix $0\le i\le L-2$ and blocks $B_0,\ldots,B_{i-1}$ for which
$(D_i)$ holds.  Set
\begin{align*}
  y_n\coloneqq x_i(p_n;\param)\in\R^{d_i},
  \, n=1,\ldots,N.
\end{align*}
By $(D_i)$, the vectors $y_1,\ldots,y_N$ are pairwise distinct.
We first prove $(R_{i+1})$.  Let $K\coloneqq d_{i+1}+1$.  Since
$i+1\le L-1$, we have $K\le N$, so the first $K$ vectors
$y_1,\ldots,y_K$ are pairwise distinct.  Write $(w_j^\top,b_j)$,
$j\in[d_{i+1}]$, for the rows of $B_i$ and define
\begin{align*}
  M_{i+1}\coloneqq\begin{pmatrix}
  1&\cdots&1\\
  \sigma\left(w_1^\top y_1+b_1\right)
  &\cdots&
  \sigma\left(w_1^\top y_K+b_1\right)\\
  \vdots&&\vdots\\
  \sigma\left(w_{d_{i+1}}^\top y_1+b_{d_{i+1}}\right)
  &\cdots&
  \sigma\left(w_{d_{i+1}}^\top y_K+b_{d_{i+1}}\right)
  \end{pmatrix}.
\end{align*}
Up to moving the row of ones to the bottom, $M_{i+1}$ consists of the
first $K$ columns of $\overline{\mathbf X}_{i+1}^\param(\mP)$, i.e. it
is a $(d_{i+1}+1)\times(d_{i+1}+1)$ minor of
$\overline{\mathbf X}_{i+1}^\param(\mP)$.  Applying
Lemma~\ref{lem:finite-point-ridge-span} in $\R^{d_i}$ to the $K$ points
$y_1,\ldots,y_K$ gives that, for Lebesgue-a.e. choice of the rows
$(w_1^\top,b_1),\ldots,(w_{d_{i+1}}^\top,b_{d_{i+1}})$ of $B_i$, the
matrix $M_{i+1}$ is invertible.  Therefore, for Lebesgue-a.e. block
$B_i$, we have $\operatorname{rank}\overline{\mathbf X}_{i+1}^\param(\mP)=d_{i+1}+1$.
Thus $(R_{i+1})$ holds for Lebesgue-a.e. block $B_i$, conditional on the
previous blocks satisfying $(D_i)$.
We next prove $(D_{i+1})$.  Fix $n\ne n'$.  Since $y_n\ne y_{n'}$, if
$x_{i+1}(p_n;\param)=x_{i+1}(p_{n'};\param)$, then, by injectivity of
$\sigma$ applied coordinatewise, we get:
\begin{align*}
  W_iy_n+b_i
  =
  W_iy_{n'}+b_i .
\end{align*}
Equivalently, $W_i(y_n-y_{n'})=0$.  Since $y_n-y_{n'}\ne0$, this is a
proper linear condition on the entries of $W_i$.  Hence, for fixed
previous blocks, it defines a Lebesgue-null subset of the space of
blocks $B_i$.  Taking the finite union over all pairs $n\ne n'$, we get
that $(D_{i+1})$ holds for Lebesgue-a.e. block $B_i$.
It remains to pass from a fixed choice of the previous blocks to all
parameters.  Write $q_k\coloneqq d_{k+1}(d_k+1)$ for the size of block
$B_k$ and, for $-1\le i\le L-2$, let
$\mathcal B_i\subseteq\R^{q_0+\cdots+q_i}$ be the set of prefixes
$(B_0,\ldots,B_i)$ for which $(R_k)$ or $(D_k)$ fails for some
$k\le i+1$; thus $\mathcal B_{-1}=\emptyset$.  Each $\mathcal B_i$ is a
Borel set: the failure of $(R_k)$ is the vanishing of all
$(d_k+1)\times(d_k+1)$ minors of $\overline{\mathbf X}_k^\param(\mP)$,
finitely many real-analytic functions of the prefix, and the failure of
$(D_k)$ is a finite union of equalities between real-analytic
functions.  If $(B_0,\ldots,B_i)\in\mathcal B_i$, then either
$(B_0,\ldots,B_{i-1})\in\mathcal B_{i-1}$, or this prefix satisfies
$(D_i)$ and $B_i$ belongs to the set $F(B_0,\ldots,B_{i-1})$ of blocks
for which $(R_{i+1})$ or $(D_{i+1})$ fails, which we have just shown to
be Lebesgue-null.  Hence, by Tonelli's theorem,
\begin{align*}
  \lambda(\mathcal B_i)
  \le
  \lambda\left(\mathcal B_{i-1}\times\R^{q_i}\right)
  +\int_{(B_0,\ldots,B_{i-1})\notin\mathcal B_{i-1}}
  \lambda\left(F(B_0,\ldots,B_{i-1})\right)\,d(B_0,\ldots,B_{i-1})
  =0 ,
\end{align*}
where the first term vanishes because $\mathcal B_{i-1}$ is null by the
induction hypothesis and the second because its integrand vanishes.
Iterating this over $i=0,\ldots,L-2$, every $\mathcal B_i$ is
Lebesgue-null.  Finally, since $(R_i)$ only involves the blocks
$B_0,\ldots,B_{i-1}$, the set of parameters $\param\in\R^{q_\Db}$ for
which $(R_i)$ fails for some $i\in\{0,\cdots,L-1\}$ is contained in
$\mathcal B_{L-2}\times\R^{q_{L-1}}$, which is null; its intersection
with $\paramspace_\Db$ is the required $\mathcal M_\mP$.  This completes
the proof.
\end{proof}

We now prove Theorem~\ref{thm:hidden}, which is a stronger version of Theorem~\ref{thm:hidden-weaker}.

\begin{restatable}[Reconstruction from hidden layer activations]
{theorem}{thmhidden}
\label{thm:hidden}
Let $\X\subseteq\RR^{d_0}$ be compact with nonempty interior and  $\sigma:\RR\to\RR$ be real analytic, injective, and
non-polynomial. Consider fully connected MLPs with biases,
widths $\Db=(d_0,\ldots,d_L)$, and raw parameter space
$\paramspace_\Db=\RR^{q_\Db}$, where $q_\Db=\sum_{i=0}^{L-1}d_{i+1}(d_i+1)$. Suppose $N\ge 1+\max_{0\le i<L}d_i$. For Lebesgue-almost every probe tuple $\mP\in\X^N$, there
exists a nonzero real-analytic function
$\Delta_\mP:\RR^{q_\Db}\to\RR$ such that
$\mathcal M_\mP=\{\param:\Delta_\mP(\param)=0\}$ and we have
\begin{align*}
    \all(\param,\mP)=\all(\param_0,\mP)
\,\iff\,\param=\param_0,\,\forall\,\param_0\in\paramspace_\Db\setminus \mathcal M_\mP,\,\forall\,\param\in\paramspace_\Db
\end{align*}

The exceptional set $\mathcal M_\mP$ is closed and satisfies
$\dim_{\mathrm H}\mathcal M_\mP\le q_\Db-1$; hence it is
Lebesgue-null and nowhere dense.
\end{restatable}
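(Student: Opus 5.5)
We plan to combine Lemma~\ref{lem:generic-layerwise-rank} with a layer-by-layer linear solve. Throughout we use that the hidden probe map $\all(\param,\mP)$ records all the matrices $\mathbf X_i^\param(\mP)$ for $i=0,\ldots,L$.

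\textbf{Step 1: the probe set and the exceptional set.} Take $\mP\notin E$, where $E$ is the exceptional probe set of Lemma~\ref{lem:generic-layerwise-rank}. That set has Hausdorff dimension $<Nd_0$, so it is Lebesgue-null. For such $\mP$, define
\[
\Delta_\mP(\param)\coloneqq\prod_{i=0}^{L-1}\sum_{S}\det\bigl(\overline{\mathbf X}_i^\param(\mP)_{:,S}\bigr)^2 ,
\]
where $S$ ranges over the $(d_i+1)$-subsets of $[N]$. Each factor is a finite sum of squares of minors, and the minors are real analytic in $\param$ because $\sigma$ is analytic. So $\Delta_\mP$ is real analytic on $\RR^{q_\Db}$, and $\Delta_\mP(\param)\neq 0$ exactly when every $\overline{\mathbf X}_i^\param(\mP)$, $i<L$, has full row rank $d_i+1$.

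Lemma~\ref{lem:generic-layerwise-rank} says this rank condition holds off a null set. Hence $\Delta_\mP\not\equiv 0$, and $\mathcal M_\mP\coloneqq\{\Delta_\mP=0\}$ agrees with (or is contained in) the null set of the lemma. It is closed as the zero set of a continuous function. Because it is the zero set of a nonzero analytic function on the connected space $\RR^{q_\Db}$, it has Hausdorff dimension $\le q_\Db-1$; this is the same fact the paper already uses in Lemma~\ref{lem:finite-point-ridge-span}. Nowhere density follows, since a closed null set has empty interior.

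\textbf{Step 2: reconstructing the weights layer by layer.} Fix $\param_0\notin\mathcal M_\mP$ and any $\param$ with $\all(\param,\mP)=\all(\param_0,\mP)$. Then $\mathbf X_i^\param=\mathbf X_i^{\param_0}$ for every $i$. Since $\sigma$ is injective, we may apply $\sigma^{-1}$ entrywise, which is legitimate because the entries lie in the range of $\sigma$. Applying it to $\mathbf X_{i+1}$ gives the linear identities
\[
[W_{i+1}\ b_{i+1}]\,\overline{\mathbf X}_i^{\param}(\mP)=\sigma^{-1}\bigl(\mathbf X_{i+1}^\param(\mP)\bigr)=\sigma^{-1}\bigl(\mathbf X_{i+1}^{\param_0}(\mP)\bigr)=[W^0_{i+1}\ b^0_{i+1}]\,\overline{\mathbf X}_i^{\param_0}(\mP).
\]
The matrices $\overline{\mathbf X}_i^\param$ and $\overline{\mathbf X}_i^{\param_0}$ coincide, since they are built from the same observed data. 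Their common value has full row rank $d_i+1$ because $\param_0\notin\mathcal M_\mP$, so it has a right inverse. Therefore $[W_{i+1}\ b_{i+1}]=[W^0_{i+1}\ b^0_{i+1}]$ for each $i=0,\ldots,L-1$, and $\param=\param_0$.

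It matters that the rank condition is imposed only at $\param_0$. The observed data determine $\overline{\mathbf X}_i$ independently of which parameter produced them, so $\param$ itself is allowed to lie in $\mathcal M_\mP$. The converse implication is trivial. The requirement $N\ge 1+\max_{i<L}d_i$ is exactly what Lemma~\ref{lem:generic-layerwise-rank} needs in order to supply full-rank columns.

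\textbf{Main obstacle.} The delicate point is in Step 1: making sure the exceptional set is genuinely the zero set of a single nonzero analytic function, rather than merely contained in a null set. Nonvanishing of $\Delta_\mP$ requires exhibiting one $\param$ at which all layers have full rank simultaneously. Lemma~\ref{lem:generic-layerwise-rank} provides this, since full measure implies nonempty. Beyond that, the only remaining bookkeeping is the indexing convention, i.e.\ whether the blocks are labelled $B_i=[W_{i+1}\ b_{i+1}]$ or $[W_i\ b_i]$.
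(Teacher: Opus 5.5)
Your proposal is correct and follows the paper's proof step for step. You exclude the probe set $E$ of Lemma~\ref{lem:generic-layerwise-rank}, take $\Delta_\mP$ to be a product over layers of a nonnegative analytic function that vanishes exactly when $\overline{\mathbf X}_i^\param(\mP)$ loses full row rank, and recover each affine block by inverting $\sigma$ entrywise and multiplying by a right inverse of the common full-row-rank matrix $\overline{\mathbf X}_i^{\param_0}(\mP)$. Your sum of squared maximal minors is, by Cauchy--Binet, exactly the paper's $\det\bigl(\overline{\mathbf X}_i^\param(\mP)\overline{\mathbf X}_i^\param(\mP)^\top\bigr)$, so even the exceptional function is the same.
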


\begin{proof}

Let $E\subseteq\X^N$ be the exceptional probe set in
Lemma~\ref{lem:generic-layerwise-rank}, applied with the given $N$.
Then $E$ is Lebesgue-null. Fix $\mP\in\X^N\setminus E$ and define
\begin{align*}
\Delta_\mP(\param)\coloneqq
\prod_{i=0}^{L-1}
\det\left(
\overline{\mathbf X}_i^\param(\mP)
\overline{\mathbf X}_i^\param(\mP)^\top
\right),\,\mathcal M_\mP\coloneqq
\left\{
\param\in\R^{q_\Db}:\Delta_\mP(\param)=0
\right\}.
\end{align*}
Each entry of $\overline{\mathbf X}_i^\param(\mP)$ is real
analytic in $\param$, so $\Delta_\mP$ is real analytic on
$\R^{q_\Db}$.

For any real matrix $A$, the identity
$v^\top AA^\top v=\|A^\top v\|_2^2$ shows that
$AA^\top$ is positive definite exactly when $A$ has full
row rank. Consequently,
\begin{align*}
\Delta_\mP(\param)>0
\,\Longleftrightarrow\,
\operatorname{rank}\overline{\mathbf X}_i^\param(\mP)=d_i+1
\,\text{for every }i=0,\ldots,L-1.
\end{align*}
Lemma~\ref{lem:generic-layerwise-rank} gives these ranks
simultaneously for Lebesgue-almost every
$\param\in\R^{q_\Db}$. In particular, they hold at some
$\param_*$, and hence $\Delta_\mP(\param_*)>0$.
Thus $\Delta_\mP$ is not identically zero.

By continuity, $\mathcal M_\mP$ is closed and $\dim_{\mathrm H}\mathcal M_\mP\le q_\Db-1$ since it forms the zero-set of an analytic function. Therefore $\mathcal M_\mP$ is Lebesgue-null. It has empty
interior, since every nonempty open subset of
$\R^{q_\Db}$ has positive Lebesgue measure. Being closed
with empty interior, it is nowhere dense.

Fix
$\param_0=(B_0^0,\ldots,B_{L-1}^0)
\in\paramspace_\Db\setminus\mathcal M_\mP$.
By the definition of $\Delta_\mP$,
\begin{equation}
\label{eq:generic-rank-all-layers}
\operatorname{rank}
\overline{\mathbf X}_i^{\param_0}(\mP)=d_i+1,
\, i=0,\ldots,L-1.
\end{equation}
If $\param=\param_0$ then trivially $\all(\param,\mP)=\all(\param_0,\mP)$.

Conversely, let $\param=(B_0,\ldots,B_{L-1})\in\paramspace_\Db$ satisfy
$\all(\param,\mP)=\all(\param_0,\mP)$.  We show that $B_i=B_i^0$ layer
by layer.  Fix $i\in\{0,\cdots,L-1\}$.  By definition,
$
  \mathbf X_i^\param(\mP)=\mathbf X_i^{\param_0}(\mP)
  \,\text{and}\,
  \mathbf X_{i+1}^\param(\mP)=\mathbf X_{i+1}^{\param_0}(\mP),
$.  The first equality gives
$\overline{\mathbf X}_i^\param(\mP)=\overline{\mathbf X}_i^{\param_0}(\mP)$,
and the second equality reads
$
  \sigma\left(B_i\,\overline{\mathbf X}_i^{\param_0}(\mP)\right)
  =\sigma\left(B_i^0\,\overline{\mathbf X}_i^{\param_0}(\mP)\right)
  \,\text{entrywise.}
$. Since $\sigma$ is injective and applied entrywise, this implies
$B_i\overline{\mathbf X}_i^{\param_0}(\mP)=B_i^0\overline{\mathbf X}_i^{\param_0}(\mP)$.
Equivalently, we have
\begin{align*}
  \left(W_i-W_i^0\ \ \ b_i-b_i^0\right)\,
  \overline{\mathbf X}_i^{\param_0}(\mP)
  =0 .
\end{align*}
By equation~\ref{eq:generic-rank-all-layers},
$\overline{\mathbf X}_i^{\param_0}(\mP)$ has full row rank, so the
$(d_i+1)\times(d_i+1)$ matrix
$\overline{\mathbf X}_i^{\param_0}(\mP)\,\overline{\mathbf X}_i^{\param_0}(\mP)^\top$
is invertible and
\begin{align*}
  \overline{\mathbf X}_i^{\param_0}(\mP)^{+}
  \coloneqq
  \overline{\mathbf X}_i^{\param_0}(\mP)^\top
  \left(\overline{\mathbf X}_i^{\param_0}(\mP)\,\overline{\mathbf X}_i^{\param_0}(\mP)^\top\right)^{-1}
\end{align*}
is a right inverse of $\overline{\mathbf X}_i^{\param_0}(\mP)$.
Multiplying the previous display by it on the right gives
$W_i=W_i^0$ and $b_i=b_i^0$, i.e. $B_i=B_i^0$.  Since $i$ was
arbitrary, $\param=\param_0$.  Hence
$\all(\param,\mP)=\all(\param_0,\mP)\iff\param=\param_0$, and in
particular $\all(\cdot,\mP)$ is injective on
$\paramspace_\Db\setminus\mathcal M_\mP$.
\end{proof}

\paragraph{Algorithm to recover weights from $\all$ for generic targets}
Our proof of Theorem~\ref{thm:hidden} gives us a way to actually reconstruct the weights rather than just the functions for generic targets. We formally describe the algorithm as follows:

\begin{algorithm}[H]
\caption{Weights from hidden activations}
\label{alg:weights}
\begin{algorithmic}[1]
\Require $\mP=(p_1,\cdots,p_N)$,
\Statex \hspace{\algorithmicindent} $\all(\param_0,\mP)=(\mathbf X_1,\cdots,\mathbf X_L)$
\State $\mathbf X_0\gets(p_1,\cdots,p_N)$
\For{$i=0,\cdots,L-1$}
\State $\overline{\mathbf X}_i\gets\begin{pmatrix}\mathbf X_i\\ \mathbf 1_N^\top\end{pmatrix}$
\State $\mathbf Z_{i+1}\gets\sigma^{-1}(\mathbf X_{i+1})$ \Comment{entrywise}
\State $\widehat B_i\gets\mathbf Z_{i+1}\,\overline{\mathbf X}_i^{\dagger}$ \Comment{pseudoinverse}
\State $(\widehat W_i\ \ \widehat b_i)\gets\widehat B_i$
\EndFor
\State \Return $\widehat\param\coloneqq(\widehat B_0,\cdots,\widehat B_{L-1})$
\end{algorithmic}
\end{algorithm}

\subsection{Lower bounds on probe counts}
\label{app:proofs:lower}

For analytic MLPs, we have established upper bounds on the number of probes required for unique function identification, using either final-layer or hidden-layer probes. We now establish nearly matching lower bounds for shallow networks. Since these lower bounds hold for one-hidden-layer networks of arbitrary width, they also provide necessary lower bounds for the general family of MLPs with analytic activations (or analytic, injective, non-polynomial activations when observing $\all$). The formal results are stated in Theorem~\ref{thm:tight}.

We now prove several lemmas needed to establish the tightness results in Theorem~\ref{thm:tight}.

\begin{lemma}
\label{lem:trig-zeros-tight}
Let $K\ge1$ and let $T$ be a nonzero real trigonometric polynomial
of the form $T(t)=A_0+\sum_{k=1}^K\left(A_k\cos(kt)+B_k\sin(kt)\right)$, not identically $0$. Then $T$ has at most $2K$ distinct zeros in $[0,2\pi)$.
Moreover, suppose another nonzero sum $H$ of the same form has $2K$ distinct zeros in $[0,2\pi)$. If $T$ vanishes at all those points, then $T=\lambda H$ for some real number $\lambda$.
\end{lemma}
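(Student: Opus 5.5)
The plan is to convert $T$ into a polynomial via $z=e^{it}$ and then count roots. For the first claim, write $\cos(kt)=(z^k+z^{-k})/2$ and $\sin(kt)=(z^k-z^{-k})/(2i)$. This gives $T(t)=z^{-K}P(z)$, where
\[
P(z)=\sum_{k=0}^{2K} c_k z^k
\]
is a complex polynomial of degree at most $2K$. Its coefficients are $c_K=A_0$, $c_{K+k}=(A_k-iB_k)/2$ and $c_{K-k}=(A_k+iB_k)/2$ for $k\ge1$.

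Since $T\not\equiv 0$, some coefficient $A_0,A_k,B_k$ is nonzero, so some $c_j\neq 0$ and hence $P\not\equiv 0$. Distinct $t\in[0,2\pi)$ give distinct points $z=e^{it}$ on the unit circle, and $z^{-K}\neq0$. Therefore every zero of $T$ in $[0,2\pi)$ yields a distinct root of $P$. A nonzero polynomial of degree at most $2K$ has at most $2K$ roots, which proves the first claim.

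For the uniqueness claim, I will use linear algebra on the $(2K+1)$-dimensional real vector space $\mathcal T_K$ of such trigonometric polynomials. Fix the $2K$ distinct zeros $t_1,\dots,t_{2K}$ of $H$ and consider the evaluation map
\[
\mathrm{ev}:\mathcal T_K\to\R^{2K},\qquad S\mapsto (S(t_1),\dots,S(t_{2K})).
\]
Its kernel contains $H\neq0$, so the kernel has dimension at least $1$. The key step is to show the kernel has dimension exactly $1$, by showing that $\mathrm{ev}$ is surjective, i.e. has rank $2K$.

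To get surjectivity, I will take $2K$ elements of $\mathcal T_K$ and show their evaluation vectors are linearly independent. Alternatively, suppose $\ker\mathrm{ev}$ has dimension at least $2$. Then I can pick an element $S\neq0$ of the kernel and impose one extra linear condition $S(t_0)=0$ at a point $t_0\notin\{t_1,\dots,t_{2K}\}$. This is a single linear condition on a space of dimension at least $2$, so a nonzero $S$ satisfying it still exists. That $S$ would have $2K+1$ distinct zeros in $[0,2\pi)$, contradicting the first part. Hence $\ker\mathrm{ev}=\operatorname{span}\{H\}$. Since $T$ vanishes at all $t_j$, we get $T\in\ker\mathrm{ev}$, so $T=\lambda H$ for some real $\lambda$.

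The main obstacle is only the bookkeeping in the first step. One must check that the map $T\mapsto P$ is injective, so that $T\neq0$ forces $P\neq0$; this holds because the coefficients of $T$ are recovered from $c_{K\pm k}$. One must also check that zeros in $[0,2\pi)$ correspond bijectively to roots of $P$ on the unit circle. The rest is the dimension count above.
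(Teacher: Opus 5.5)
Your proof is correct. The first part matches the paper's proof: write $P(e^{it})=e^{iKt}T(t)$ and count roots of $P$ on the unit circle. For the uniqueness part you take a different route.

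\textbf{The paper's route.} It stays with polynomials. Since $P_H\not\equiv0$ has degree at most $2K$ and $2K$ distinct roots $e^{ir_j}$, it equals $c\prod_j(z-e^{ir_j})$. Then $P_T$, which has degree at most $2K$ and vanishes at the same points, is a complex multiple of $P_H$. A separate remark, using that $T$ and $H$ are real-valued, is needed to see that $\lambda$ is real.

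\textbf{Your route.} You work in the real vector space of trigonometric polynomials and use the zero-count bound from the first part as a black box. If the subspace vanishing at $t_1,\dots,t_{2K}$ had dimension at least $2$, one more linear condition $S(t_0)=0$ at a fresh point $t_0$ would give a nonzero element with $2K+1$ distinct zeros in $[0,2\pi)$, which is impossible. This makes $\lambda\in\mathbb{R}$ automatic and never needs the exact degree or factorization of $P_H$. It is the standard unisolvence argument, so it applies verbatim to any function space whose nonzero elements have at most $2K$ zeros. The paper's argument is shorter but tied to the polynomial structure.

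\textbf{A presentation point.} You announce that you will prove surjectivity of $\mathrm{ev}$, then switch to the contradiction argument. The contradiction argument is complete on its own; it does not even use $\dim\mathcal T_K=2K+1$. You can drop the surjectivity detour.
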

\begin{proof}
Write $T(t)=\sum_{k=-K}^K c_ke^{ikt}$ and define $P_T(z)\coloneqq\sum_{k=-K}^K c_kz^{k+K}.$
The polynomial $P_T$ is nonzero, has degree at most $2K$, and satisfies
$P_T(e^{it})=e^{iKt}T(t)$. Distinct zeros modulo $2\pi$ give distinct
roots of $P_T$, proving the first assertion. For the second assertion,
$P_H$ has degree $2K$ and $2K$ distinct roots, all shared by $P_T$.
Hence $P_T=\lambda P_H$. Therefore $T=\lambda H$, with $\lambda$
real because both functions are real-valued and $H\not\equiv0$.
\end{proof}

\begin{lemma}
\label{lem:trig-rank-tight}
Let $K\ge1$ and let $H(t)=\sum_{k=1}^K\gamma_k\sin(kt)\not\equiv0$ have $2K$
distinct zeros $r_1,\ldots,r_{2K}\in[0,2\pi)$. We define:
\begin{align*}
 \mathcal Z&\coloneqq(r_1,\ldots,r_{2K},r_1+2\pi,\ldots,r_{2K}+2\pi),\\
 \mathcal V_K&\coloneqq
 \left\{Q(t)+tR(t):
 \begin{array}{l}
 Q\in\operatorname{span}\{1,\sin(kt),\cos(kt):k\in[K]\},\\
 R\in\operatorname{span}\{\cos(kt):k\in[K]\}
 \end{array}\right\}.
\end{align*}
Writing $\mathcal Z=(z_1,\ldots,z_{4K})$, the set of sampled vectors $\left\{
\left(S(z_1),\ldots,S(z_{4K})\right): S\in\mathcal V_K \right\}$ is a $3K$-dimensional linear subspace of $\R^{4K}$. Moreover, a function $S\in\mathcal V_K$ satisfies
$S(z_n)=0$ for every $n=1,\ldots,4K$ if and only if there exists $\lambda\in\R$ such that
$S(t)=\lambda H(t)$ for every $t\in\R$.
\end{lemma}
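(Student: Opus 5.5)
The proof reduces everything to linear algebra on the sampled vectors. Consider the linear evaluation map $\mathrm{ev}:\mathcal V_K\to\R^{4K}$, $S\mapsto(S(z_1),\ldots,S(z_{4K}))$. The image of $\mathrm{ev}$ is exactly the set in the statement, so it is automatically a linear subspace. Note that $\dim\mathcal V_K\le (2K+1)+K=3K+1$. The plan is to show that $\ker(\mathrm{ev})=\operatorname{span}\{H\}$ and that $\dim\mathcal V_K=3K+1$. Rank--nullity then gives that the image has dimension $3K$.

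\textbf{Step 1: $\dim\mathcal V_K=3K+1$.} I will show that the functions $1,\cos(kt),\sin(kt),t\cos(kt)$ for $k\in[K]$ are linearly independent. Suppose $Q(t)+tR(t)\equiv0$ with $Q,R$ in the stated spans. Evaluate at $t$ and at $t+2\pi$ and use periodicity of $Q$ and $R$: this gives $2\pi R(t)\equiv0$, so $R\equiv0$, and then $Q\equiv0$. The classical independence of the trigonometric monomials then gives all coefficients zero.

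\textbf{Step 2: the kernel.} First, $H\in\mathcal V_K$ (take $Q=H$, $R=0$), and $H$ vanishes at every $r_n$ and every $r_n+2\pi$ by periodicity, so $H\in\ker(\mathrm{ev})$. For the converse, let $S=Q+tR$ vanish on $\mathcal Z$. Subtracting the equations at $r_n$ and $r_n+2\pi$ gives
\begin{align*}
0=S(r_n+2\pi)-S(r_n)=2\pi R(r_n),
\end{align*}
so $R$ vanishes at the $2K$ points $r_n$.

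\begin{itemize}
\item Apply the uniqueness part of Lemma~\ref{lem:trig-zeros-tight} with $H$ as the reference sum. Since $R$ vanishes at all $2K$ distinct zeros of $H$, we get $R=\lambda H$.
\item $R$ is a pure cosine sum (even), while $H$ is a pure sine sum (odd) and $H\not\equiv0$. Comparing Fourier coefficients forces $\lambda=0$, hence $R\equiv0$.
\end{itemize}

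With $R\equiv0$, we have $S=Q$, and $Q$ vanishes at the $r_n$. Applying Lemma~\ref{lem:trig-zeros-tight} again gives $Q=\mu H$. So $\ker(\mathrm{ev})=\operatorname{span}\{H\}$, which is one-dimensional. Because $\ker(\mathrm{ev})$ consists precisely of the $S$ vanishing on $\mathcal Z$, this also proves the ``moreover'' claim in both directions.

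\textbf{Main obstacle.} The delicate step is the application of Lemma~\ref{lem:trig-zeros-tight}. That lemma requires the reference sum to have exactly $2K$ distinct zeros in $[0,2\pi)$ and to be of the same form as the function being compared. $H$ is of the form $A_0+\sum_k(A_k\cos+B_k\sin)$ with $A_0=A_k=0$, and both $R$ and $Q$ are of this form as well, so the hypotheses are met. The only other point needing care is that $H\not\equiv0$; this is what guarantees the parity argument yields $\lambda=0$ rather than a degenerate identity.
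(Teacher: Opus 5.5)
Your proof is correct and follows the paper's argument: the same periodicity trick gives independence of $1,\cos(kt),\sin(kt),t\cos(kt)$, and the kernel is characterized the same way, via $R(r_n)=0$, Lemma~\ref{lem:trig-zeros-tight}, the even/odd argument forcing $R\equiv 0$, and a second application of that lemma to $Q$ (the cases $R\equiv 0$ or $Q\equiv 0$ are covered by taking the scalar to be $0$). The only difference is the final count: you apply rank--nullity with $\dim\mathcal V_K=3K+1$ and a one-dimensional kernel, whereas the paper builds an explicit $3K$-element basis by omitting $\sin(k_*t)$ with $\gamma_{k_*}\neq 0$; your route is shorter and yields the same rank, which is all Theorem~\ref{thm:tight} uses.
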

\begin{proof}
Every $S\in\mathcal V_K$ can be written as
$S(t)=Q(t)+tR(t),\,Q(t)=a_0+\sum_{k=1}^K,\,\left(a_k\sin(kt)+b_k\cos(kt)\right),\,R(t)=\sum_{k=1}^K c_k\cos(kt).$

We first verify that these coefficients are uniquely determined
by $S$. It suffices to show that $S(t)=0$ for every $t$ forces
all coefficients to be zero. Since $Q$ and $R$ are
$2\pi$-periodic, subtracting the identity at $t$ from the
identity at $t+2\pi$ gives $0=S(t+2\pi)-S(t)=2\pi R(t).$
Thus $R=0$, and then $Q=0$. For each $\ell=1,\ldots,K$,
the elementary sine and cosine integration identities give
$
\int_0^{2\pi}R(t)\cos(\ell t)\,dt=\pi c_\ell,\,\int_0^{2\pi}Q(t)\,dt=2\pi a_0,\,\int_0^{2\pi}Q(t)\sin(\ell t)\,dt=\pi a_\ell,\,\int_0^{2\pi}Q(t)\cos(\ell t)\,dt=\pi b_\ell.
$, which uniquely identifies the coefficients, implying linear independence.

We next characterize the functions whose sampled values are
all zero. Suppose $S(z_n)=0$ for every $n=1,\ldots,4K$.
For each $j=1,\ldots,2K$, periodicity gives $0=S(r_j)=Q(r_j)+r_jR(r_j),\,0=S(r_j+2\pi)=Q(r_j)+(r_j+2\pi)R(r_j).$ Subtracting these yields $R(r_j)=0,\, j=1,\ldots,2K.$
Since $R$ is a finite cosine sum using frequencies at most
$K$, Lemma~\ref{lem:trig-zeros-tight} implies
$R=\mu H$ for some $\mu\in\R$; if $R=0$, take $\mu=0$.

For every real $t$, cosine symmetry and sine antisymmetry give $R(-t)=R(t),\, H(-t)=-H(t).$ Therefore we get: $\mu H(t)=R(t)=R(-t)=\mu H(-t)=-\mu H(t).$
It follows that $2\mu H(t)=0$ for every $t$. Because $H$ is not identically zero, $\mu=0$, and hence $R=0$.

We now have $S=Q$ and $Q(r_j)=0$ for all $j$.
A second application of Lemma~\ref{lem:trig-zeros-tight}
gives $S(t)=Q(t)=\lambda H(t)\,\forall\,t\in\R$ for some $\lambda\in\R$, including $\lambda=0$ when $Q=0$.
Conversely, every function $S=\lambda H$ vanishes at all
$r_j$ and $r_j+2\pi$, since $H$ is $2\pi$-periodic.
This proves the stated equivalence.

It remains to determine the dimension of the set of sampled
vectors. For any function $\phi$, write
$
v_\phi\coloneqq
\left(\phi(z_1),\ldots,\phi(z_{4K})\right)
\in\R^{4K}.
$
Choose $k_*\in\{1,\ldots,K\}$ such that
$\gamma_{k_*}\ne0$, which is possible because $H$ is not
identically zero. Consider the following collection of
$3K$ functions:
\begin{align*}
\mathcal B\coloneqq\{1\}\cup\{\sin(kt):1\le k\le K,\ k\ne k_*\}
\cup\{\cos(kt):1\le k\le K\}\cup\{t\cos(kt):1\le k\le K\}.
\end{align*}
We show that their sampled vectors
$\{v_\phi:\phi\in\mathcal B\}$ are linearly independent
and that every sampled vector from $\mathcal V_K$ is a linear
combination of them.

For independence, suppose real coefficients $d_\phi$
satisfy $\sum_{\phi\in\mathcal B}d_\phi v_\phi=0.$
Define $S(t)=\sum_{\phi\in\mathcal B}d_\phi\phi(t).$
Then $S\in\mathcal V_K$ and $S(z_n)=0$ for every $n$.
The equivalence already proved implies $S=\lambda H$.

The coefficient of $\sin(k_*t)$ in the displayed expression
for $S$ is zero, because this function was omitted from
$\mathcal B$. Its coefficient in $\lambda H$ is
$\lambda\gamma_{k_*}$. Uniqueness of the coefficient
representation therefore gives $0=\lambda\gamma_{k_*}.$
Since $\gamma_{k_*}\ne0$, we obtain $\lambda=0$ and hence
$S=0$. The linear independence established at the beginning
of the proof now forces $d_\phi=0$ for every
$\phi\in\mathcal B$.
Thus the $3K$ sampled vectors are linearly independent.

Finally, because $H$ vanishes at every point of $\mathcal Z$, $0=v_H=\sum_{k=1}^K\gamma_k v_{\sin(k\,\cdot)}.$
Solving for the sampled vector of the omitted sine function
gives $v_{\sin(k_*\,\cdot)}=-\sum_{\substack{k=1\\k\ne k_*}}^K\frac{\gamma_k}{\gamma_{k_*}}v_{\sin(k\,\cdot)}.$
Every function in $\mathcal V_K$ is a linear combination of
the functions in $\mathcal B$ and this one omitted sine
function. The last identity therefore shows that every
sampled vector from $\mathcal V_K$ is a linear combination
of $\{v_\phi:\phi\in\mathcal B\}$.

Conversely, every linear combination of these sampled vectors
is obtained by sampling the corresponding linear combination
of functions in $\mathcal B$, which belongs to $\mathcal V_K$.
Hence the set of sampled vectors is exactly the linear
subspace generated by these $3K$ linearly independent vectors.
Its dimension is therefore $3K$.
\end{proof}

\begin{lemma}
\label{lem:tanh-indep}
Let $\X\subseteq\R$ be a nonempty open interval. If $w_j\ne0$ for
all $j\in[m]$ and $b_j/w_j\ne b_k/w_k$ whenever $j\ne k$, then
$1,\tanh(w_1x+b_1),\ldots,\tanh(w_mx+b_m)$ are linearly independent
on $\X$.
\end{lemma}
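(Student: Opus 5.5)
Suppose a relation $c_0+\sum_{j=1}^m c_j\tanh(w_jx+b_j)=0$ holds for all $x\in\X$. The plan is to show every coefficient must vanish by looking at the singularities of the left-hand side in the complex plane.

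\textbf{Step 1: extend to $\mathbb C$.} Each $\tanh(w_jz+b_j)$ is meromorphic on $\mathbb C$. Its poles sit where $\cosh(w_jz+b_j)=0$, namely at
\[
z=\frac{-b_j+i\pi(k+\tfrac12)}{w_j},\qquad k\in\mathbb Z .
\]
The left-hand side $G(z)=c_0+\sum_j c_j\tanh(w_jz+b_j)$ is therefore meromorphic on $\mathbb C$, and analytic on a connected open set containing $\R$ (off a discrete pole set). It vanishes on the interval $\X$, so by the identity theorem $G\equiv0$ on the domain of holomorphy. Consequently $G$ has no poles at all.

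\textbf{Step 2: isolate one term's pole.} Every pole of a single $\tanh(w_jz+b_j)$ is simple, with residue $1/w_j\neq0$ (since $\tanh(u)$ has residue $1$ at $u=i\pi(k+\frac12)$). Fix $j$ and look at its pole with $k=0$, namely $z_j=(-b_j+i\pi/2)/w_j$. Its real part is $-b_j/w_j$, and by hypothesis these real parts are pairwise distinct across $j$. All poles of term $\ell$ have real part exactly $-b_\ell/w_\ell$. So no other term has a pole at $z_j$, and those terms are analytic near $z_j$.

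Hence the residue of $G$ at $z_j$ equals $c_j/w_j$. Since $G\equiv0$ has no poles, this residue is zero, so $c_j=0$. Doing this for every $j$ leaves $c_0=0$, which proves the claimed linear independence.

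\textbf{Main obstacle.} The only delicate point is the bookkeeping that justifies the separation in Step 2: the pole sets of different terms are disjoint because their real parts differ. The hypothesis $b_j/w_j\neq b_k/w_k$ is exactly what guarantees this. Everything else is a routine identity-theorem argument.
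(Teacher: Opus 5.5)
Your proof is correct and follows essentially the same route as the paper. You extend to a meromorphic function on $\mathbb C$, apply the identity theorem off the pole set, and read off $c_j/w_j=0$ from the residue at a pole that no other summand shares, because the real parts $-b_j/w_j$ are distinct. The one point the paper makes explicit and you only assert is that the complement of the union of pole sets is connected (it is the complement of a closed discrete set), which is what licenses the identity theorem.
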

\begin{proof}
Suppose $C+\sum_{j=1}^m A_j\tanh(w_jx+b_j)=0$ on $\X$.
Extend the left-hand side to the meromorphic function
\begin{align*}
 \Phi(z)\coloneqq C+\sum_{j=1}^m A_j\tanh(w_jz+b_j).
\end{align*}
Since $\tanh=\sinh/\cosh$ and $\cosh'=\sinh$, the poles of its
$j$th summand occur at
\begin{align*}
 z_{j,k}=\frac{i\pi(k+1/2)-b_j}{w_j},\, k\in\mathbb Z,
\end{align*}
and have residue $A_j/w_j$. Their real part is $-b_j/w_j$, so no
pole is shared by two distinct summands. The union of the pole sets
is closed and discrete, and its complement in $\mathbb C$ is connected:
a segment between any two points can be detoured around the finitely
many poles it meets. By the Identity Theorem of complex analysis, $\Phi$ is zero on this
complement because it vanishes on $\X$. Its residue at every pole
therefore vanishes. Thus $A_j/w_j=0$ for every $j$, giving $A_j=0$
and then $C=0$.
\end{proof}

\begin{restatable}[Tightness of probe complexities]{theorem}{thmtight}
\label{thm:tight}
Let $f(u,\theta)$ denote a shallow universal MLP,  with analytic activation $\sigma$, and layers of dimension  $\Db=(1,m,1)$ with $m\ge1$. Thus, the total number of parameters is $q=3m+1$ and the maximal dimension of any layer is $d_{\max}=m $. In each of the following cases, there exists
a compact parameter space $\paramspace_\Db\subseteq\R^q$
with nonempty interior for which the stated conclusion holds. We have the following:
\begin{enumerate}
\item  There exists a real-analytic activation $\sigma$ (namely, $\sigma=\sin$) for which the following holds: if  $N\le2q-2=6m$ then there exists a non-empty open interval $\mathcal{U}_N\subseteq\mathcal \RR^N$ such that for all $\mP\in\mathcal U_N$
$$\exists\,\param_1,\param_2\in\paramspace_\Db, \text{ such that } \final(\param_1,\mP)=\final(\param_2,\mP) \text{ and } f(\cdot;\param_1)\not\equiv f(\cdot;\param_2)\ .$$
Thus, a generic shared probe of length $2q-2$ cannot identify $f(\cdot;\theta)$ from the input-output evaluations.
\item There exists a real-analytic, non-polynomial and injective activation $\sigma$ (namely, $\sigma=\tanh$) for which the following holds: $N\le d_{\max}\coloneqq m$.  Then, for every $\mP\in\X^N$,
$$\exists\,\param_1,\param_2\in\paramspace_\Db, \text{ such that }  \all(\param_1,\mP)=\all(\param_2,\mP) \text{ and } f(\cdot;\param_1)\not\equiv f(\cdot;\param_2)\ .$$

Thus, probes of length $N\le d_{\max}$ cannot identify $f(\cdot;\theta)$ from the hidden neuron evaluations.
\end{enumerate}
\end{restatable}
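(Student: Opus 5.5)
For Part 1 we take $\sigma=\sin$ and $f(u;\theta)=c+\sum_{j=1}^m a_j\sin(w_ju+b_j)$, so $q=3m+1$. The aim is a two-dimensional family of distinct functions that all share sample values on an open set of probe tuples; Lemma~\ref{lem:trig-rank-tight} is built for exactly this. Choose parameters near $w_j=j$ (take $K=m$). Expanding $\sin(w_ju+b_j)$ to first order in $w_j$ around $w_j=j$ gives $\sin(ju+b_j)+(w_j-j)\,u\cos(ju+b_j)$. So the tangent space of the realized family at such a point is spanned by $1$, $\sin(ku)$, $\cos(ku)$ and $u\cos(ku+b_k)$. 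With $b_k=0$ this is exactly $\mathcal V_K$.

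The plan is then as follows.
\begin{enumerate}
\item Fix $H=\sum_k\gamma_k\sin(ku)$ with $2m$ distinct zeros $r_1,\dots,r_{2m}$ in $[0,2\pi)$, and put $\mathcal Z=(r_j,r_j+2\pi)$, which has $4m$ points.
\item Consider the evaluation map $\theta\mapsto\final(\theta,\mP)$ on a compact parameter neighbourhood $\paramspace_\Db$ of a base point. By Lemma~\ref{lem:trig-rank-tight}, at $\mP=\mathcal Z$ its differential has rank at most $3m$, which is less than $q=3m+1$, because $H$ lies in the kernel.
\item Use a dimension-count argument in place of pure linearization. For $N\le 6m=2q-2$ probes, the map $(\theta_1,\theta_2)\mapsto(\final(\theta_1,\mP)-\final(\theta_2,\mP))$ goes from $\R^{2q}$ to $\R^N$ with $N<2q$. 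Its zero set therefore has dimension at least $2q-N\ge2$ near any regular point, whereas the diagonal contributes only $q$. That count alone is not enough, so it has to be combined with an explicit non-identifiable pair.
\item Build that pair directly. Take $\theta_1$ realizing $f_1$ and $\theta_2$ realizing $f_1+\epsilon H$; the latter can be done by perturbing the $a_k$, since $\epsilon H$ lies in the span of the $\sin(ku)$ with $w_k=k$. The two functions agree on any probes placed at zeros of $H$.
\item Pass to an open set of probe tuples. For $\mP$ near a configuration of zeros, use the implicit function theorem to move $(\theta_1,\theta_2)$ so that the difference still vanishes on the perturbed probes. Here $N\le 6m$ equations are solved in $2q=6m+2$ unknowns, and we need the Jacobian to have full rank $N$ at the base pair, with the extra slack keeping $\theta_1\ne\theta_2$ and the functions distinct. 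Full rank comes from the $3m$-dimensional sampled spaces of Lemma~\ref{lem:trig-rank-tight} for each of $\theta_1,\theta_2$, using differently shifted frequency points so that together they give $6m$ independent directions.
\end{enumerate}
This Jacobian rank verification is the main obstacle. The first thing to try is to let $\mathcal Z$ contain $4m$ points and to add at most $2m$ further generic points, where the $b_k$ and amplitude derivatives of the second network supply the additional rank.

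For Part 2 we take $\sigma=\tanh$ and $N\le m$, and fix any $\mP=(p_1,\dots,p_N)$.
\begin{enumerate}
\item Take $\theta_1$ with first-layer weights $w_j\ne0$ and pairwise distinct ratios $b_j/w_j$, and set $\theta_2$ to have the same first layer. Then the hidden records agree automatically: $\mathbf X_1^{\theta_1}=\mathbf X_1^{\theta_2}$.
\item The output is $c+\sum_j a_jh_j(u)$ with $h_j=\tanh(w_ju+b_j)$. The linear map $(c,a)\mapsto(c+\sum_ja_jh_j(p_n))_{n}$ goes from $\R^{m+1}$ to $\R^N$ with $N\le m<m+1$, so it has a nonzero kernel vector $(\delta c,\delta a)$.
\item Set $\theta_2$'s output layer to $(c,a)+t(\delta c,\delta a)$ with small $t$, staying inside a compact $\paramspace_\Db$ with nonempty interior around $\theta_1$. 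Then $\all$ coincides, including the final outputs, while by Lemma~\ref{lem:tanh-indep} the difference $t(\delta c+\sum_j\delta a_jh_j)$ is not identically zero on $\X$, so $f(\cdot;\theta_1)\not\equiv f(\cdot;\theta_2)$.
\end{enumerate}
This part is routine. The only care needed is choosing $\theta_1$ in the interior of $\paramspace_\Db$ so that the perturbation stays admissible.
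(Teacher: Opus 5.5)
Your Part~2 is the paper's argument: keep the first layer fixed, take a left-kernel vector of the $(m+1)\times N$ one-padded hidden matrix, and use Lemma~\ref{lem:tanh-indep} for distinctness. Part~1, however, has a genuine gap, and it lies exactly in the Jacobian step you defer. In your Step~4 the base pair $\theta_1,\theta_2$ differs only in the readout amplitudes, so both networks have slopes $w_k=k$, $k\in[m]$, and zero biases. The derivative of $(\theta_1,\theta_2)\mapsto\bigl(g(p_n;\theta_1)-g(p_n;\theta_2)\bigr)_{n}$ with respect to either network then ranges over the \emph{same} space $\operatorname{span}\{1,\sin(ku),\cos(ku),u\cos(ku):k\in[m]\}=\mathcal V_m$. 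Its rank is therefore at most $\dim\mathcal V_m=3m+1<6m$, wherever the probes are placed; the two ``$3m$-dimensional pieces'' coincide rather than add. Appending $2m$ generic points to $\mathcal Z$ does not help. A nonzero difference $\Delta c+\sum_{k\le m}\Delta a_k\sin(ku)$ has at most $2m$ zeros per period, so the base pair does not even solve the equations at those extra points, and the implicit function theorem has no base point. As set up, your construction reaches at best $N\le 3m=q-1$, a lower bound of order $q$ rather than $2q-2$.

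The missing idea is to give the two base networks \emph{disjoint} frequency sets and to apply Lemma~\ref{lem:trig-rank-tight} with $K=2m$, not $K=m$. The paper takes pre-activations $g_1=\epsilon\sum_{k=1}^m\sin(kt)$ and $g_2=\epsilon\sum_{k=m+1}^{2m-1}\sin(kt)+\sin(2mt)$ with $\epsilon=1/(100m)$. Then $H=g_1-g_2$ is dominated by $-\sin(2mt)$ and alternates sign at $t_j=(j+\tfrac12)\pi/(2m)$, so by Lemma~\ref{lem:trig-zeros-tight} it has exactly $4m$ zeros per period. Every slope $k\in[2m]$ now occurs exactly once across the pair, so the combined derivative ranges over all of $\mathcal V_{2m}$. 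On the $8m$ points of $\mathcal Z$ the Jacobian therefore has rank $3K=6m=2q-2$. Choose $6m$ independent rows as $\mathcal P_0$ and freeze the two remaining parameter coordinates; the implicit function theorem then gives colliding pairs for all tuples near $\mathcal P_0$. Distinctness persists by continuity from a point $t_*$ with $H(t_*)\ne0$, and the case $N<6m$ follows by projecting onto the first $N$ coordinates. Finally, in this paper's MLPs the output layer also applies $\sigma$. Agreement at the probes passes through automatically, but distinctness does not. In Part~1 you need the small amplitudes, so that the outer $\sin$ is injective on the range of the pre-activations. In Part~2 you need injectivity of the outer $\tanh$.
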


\begin{proof}
We start with two observations:
\begin{itemize}
    \item To show that the lower bound of $2q-1$ is necessary for the family of all MLPs with analytic activations, it is sufficient to come up with one analytic activation for which the $2q-2$ probes don't suffice. We use $\sigma = \sin$ in part 1 and $\sigma = \tanh$ in part 2.

    \item We show the necessary lower bounds $2q-1$  or $d_{\max} + 1$ using the layer widths $\Db=(1,m,1)$, but we show the tightness for any $m\in\NN$. Now by taking $m\to\infty$ we can conclude the general lower bound of $2q-1$ is tight. 
\end{itemize}
With these arguments, we now prove the two constructive counter examples that establish the lower bounds.

We start with proving the first part. First we consider an interval containing $[0,4\pi]$.
We set $K\coloneqq2m$, $\epsilon\coloneqq1/(100m)$, and choose base
parameters $\param_1^0,\param_2^0$ whose affine pre-activations are
\begin{align*}
 g_1(t)&\coloneqq g(t;\param_1^0)=\epsilon\sum_{k=1}^{m}\sin(kt),\\
 g_2(t)&\coloneqq g(t;\param_2^0)=\epsilon\sum_{k=m+1}^{K-1}\sin(kt)+\sin(Kt),\\
H(t)&\coloneqq g_1(t)-g_2(t).
\end{align*}
Both networks have exactly $m$ hidden neurons, positive distinct
slopes, nonzero readout coefficients, and zero biases. 

For $t_j=(j+1/2)\pi/K$, $j=0,\ldots,2K-1$, one has
\begin{align*}
 H(t_j)=-(-1)^j+e_j,
 \, |e_j|\le(K-1)\epsilon<1/50.
\end{align*}
The signs alternate at these points, including across the final
interval to $t_0+2\pi$. The intermediate value theorem and
Lemma~\ref{lem:trig-zeros-tight} therefore give exactly $2K$
distinct zeros modulo $2\pi$. We choose representatives
$r_1,\ldots,r_{2K}\in[0,2\pi)$ and form the $4K$-tuple
$\mathcal Z$ of Lemma~\ref{lem:trig-rank-tight}.

We consider the derivative, with respect to both parameter vectors, of
the difference between the pre-activations. At the base pair it ranges over
the functions $Q(t)+tR(t),\, Q+tR\in\mathcal V_K.$
This can be seen by considering the fact that, for a neuron with base slope $k$, base bias zero, and nonzero base readout coefficient $a$, the three derivatives are:
\begin{align*}
\partial_a\left(a\sin(wt+b)\right)=\sin(kt),\,\partial_b\left(a\sin(wt+b)\right)=a\cos(kt),\,\partial_w\left(a\sin(wt+b)\right)=at\cos(kt).
\end{align*}
The two networks together contain each slope $k\in[K]$ exactly once;
the difference of their output-bias variations supplies the constant
function. Thus every coefficient of $Q$ and $R$ can be varied
independently. By Lemma~\ref{lem:trig-rank-tight}, the Jacobian of
the pre-activation difference on $\mathcal Z$ has rank
$3K=2q-2$. We choose $3K$ independent rows and let $\mP_0$ be their
probe locations.

We put $N_0\coloneqq2q-2$ and define
\begin{align*}
 \mathcal E(\param_1,\param_2,\mP)
 \coloneqq\left(g(p_n;\param_1)-g(p_n;\param_2)\right)_{n=1}^{N_0}.
\end{align*}
Then $\mathcal E(\param_1^0,\param_2^0,\mP_0)=0$, and its
parameter Jacobian has an invertible $N_0\times N_0$ minor. We take
the remaining $2q-N_0=2$ parameter coordinates at their base values.
The implicit function theorem gives parameter pairs
$\param_1(\mP),\param_2(\mP)$, continuous on a nonempty open
neighborhood $\mathcal U_{N_0}$ of $\mP_0$, such that $\mathcal E(\param_1(\mP),\param_2(\mP),\mP)=0\,\forall\,\mP\in\mathcal U_{N_0}.$
Consequently their final sine outputs agree at every probe.

Choose $t_*\in(0,4\pi)$ with $H(t_*)\ne0$. At the base pair, we have $|g_1(t)|\le m\epsilon<\pi/2,\,|g_2(t)|\le1+(m-1)\epsilon<\pi/2.$
The sine function is injective on $(-\pi/2,\pi/2)$, so
$f(t_*;\param_1^0)\ne f(t_*;\param_2^0)$. We shrink the neighbourhood
$\mathcal U_{N_0}$ so that this inequality persists, the probe
entries remain distinct, and both parameter vectors lie in compact
boxes around their base values with positive distinct slopes and
nonzero readout coefficients. Their union is a compact set
$\mathcal K$ with nonempty interior containing every constructed pair.
Thus the final functions are different for every tuple in this open set.

For a general interval $\X$, choose $[a,b]\subset\X$, $a<b$, and
$t=\alpha x+\beta$, $\alpha>0$, mapping $[a,b]$ onto $[0,4\pi]$.
Replacing each hidden row $(w_j,b_j)$ by
$(\alpha w_j,\beta w_j+b_j)$ is an invertible linear change of its
parameters. It transports the construction, its open probe set, and
the compact parameter set into $\X$.

For $N<N_0$, let $\mathcal U_N$ be the projection of
$\mathcal U_{N_0}$ onto the first $N$ probe coordinates. This set
is nonempty and open. Every tuple in $\mathcal U_N$ extends to a
tuple in $\mathcal U_{N_0}$, whose colliding pair also agrees on
the shorter tuple and belongs to the same $\mathcal K$. Since a
nonempty open set has positive Lebesgue measure, generic shared
separation is impossible for $N\le2q-2$.

Now we prove the second part. Fix any
$\param_0=(w,b,a,c)\in\operatorname{int}(\paramspace_\Db)$
such that $w_j\ne0$ for every $j$ and
$b_jw_k-b_kw_j\ne0$ whenever $j<k$.
These conditions hold for Lebesgue-almost every interior
parameter, because their failure is a finite union of zero sets
of nonzero polynomials.

Fix $1\le N\le m$ and an arbitrary probe tuple $\mP\in\X^N$.
The matrix
\begin{align*}
A
&\coloneqq \overline{\mathbf X}_1^{\param_0}(\mP)=
\begin{pmatrix}
h_1(p_1;\param_0)&\cdots&h_1(p_N;\param_0)\\
\vdots&&\vdots\\
h_m(p_1;\param_0)&\cdots&h_m(p_N;\param_0)\\
1&\cdots&1
\end{pmatrix}
\in\R^{(m+1)\times N}
\end{align*}
has more rows than columns. Hence there exists
$\delta\in\R^{m+1}\setminus\{0\}$ with $\delta^\top A=0$.
Define $\param(t)$ by keeping $w,b$ fixed and replacing the
last affine block by
\[
B_1(t)\coloneqq B_1^0+t\delta^\top,
\qquad B_1^0=(a^\top\ c).
\]
The hidden functions are unchanged, and
\begin{align*}
g(x;\param(t))-g(x;\param_0)
&=
t\left(
\delta_{m+1}
+\sum_{j=1}^m\delta_jh_j(x;\param_0)
\right).
\end{align*}
Since $\delta^\top A=0$, the right-hand side vanishes at
every probe. Therefore the sampled final outputs also agree,
and
\[
\all(\param(t),\mP)=\all(\param_0,\mP).
\]

It remains to show that the realized functions are different.
By Lemma~\ref{lem:tanh-indep}, the functions
$1,h_1(\cdot;\param_0),\ldots,h_m(\cdot;\param_0)$
are linearly independent on $\X$. Since $\delta\ne0$, the function
\[
r(x)\coloneqq
\delta_{m+1}+\sum_{j=1}^m\delta_jh_j(x;\param_0)
\]
is not identically zero on $\X$. Consequently, for every
$t\ne0$, the output preactivations
$g(\cdot;\param(t))$ and $g(\cdot;\param_0)$ differ somewhere
on $\X$. Injectivity of the outer $\tanh$ therefore gives
\[
f(\cdot;\param(t))\not\equiv f(\cdot;\param_0)
\quad\text{on }\X.
\]

Finally, $\param_0\in\operatorname{int}(\paramspace_\Db)$
and $\param(t)\to\param_0$. Thus all sufficiently small
nonzero $t$ give parameters in $\paramspace_\Db$ and in
any prescribed neighborhood of $\param_0$.
This proves that, for every probe tuple with $N\le m$,
almost every interior target admits arbitrarily close,
functionally distinct competitors with identical complete
probe observations.

For this shallow $\tanh$ family, $d_{\max}=m$ hidden probes
fail to identify almost every interior target, whereas the
upper bound guarantees identification of almost every target
from almost every tuple of $d_{\max}+1$ probes.
Thus the hidden-probe threshold is sharp in the worst case
over the class of analytic, injective, non-polynomial activations.

For the $\sin$ family, the nonempty open set of failing
probe tuples at $N\le2q-2$ rules out a general
almost-everywhere-in-probes identification guarantee at
those cardinalities. Hence the sufficient bound $2q+1$
is within two probes of the lower bound $2q-1$ for such
a guarantee. We do not claim that these bounds are tight
for every individual activation.
\end{proof}

\subsection{ReLU Network Identification from probes}
\label{app:proofs:relu}

We begin with a counterexample showing that ReLU networks cannot be exactly reconstructed from finitely many probes.

\proprelunon*

\begin{proof}
    Consider hat functions of the form
$$g_{c,\epsilon}(x)\coloneqq\relu(x-c+\epsilon)-2\relu(x-c)+\relu(x-c-\epsilon)=\max\{\epsilon-|x-c|,0\}.$$
For any $c,\epsilon$, such a function is a $\relu$ network with three hidden neurons, and it is supported in $[c-\epsilon,c+\epsilon] $. Given any probes
$p_1,\cdots,p_N$, choose $c\in(0,1)$ off the probes and
$0<\epsilon_1<\epsilon_2$ below the distance from $c$ to the probes and to
the boundary; then $g_{\epsilon_1}$ and $g_{\epsilon_2}$ vanish at every
probe and differ at $c$. Thus no finite set of probes, however chosen, can completely identify  $\relu$ networks.
\end{proof}

\paragraph{The probabilistic setting of identification:} Here we describe the probabilistic setting of function identification in ReLU Nets. Let $\mu$ be a Borel
probability measure on $\X$. The probes are drawn
from $\mu,\,\mP=(p_1,\cdots,p_N)$ with $p_1,\cdots,p_N$ i.i.d.\ $\mu$. We also consider the same measure to quantify agreement between functions.  For two parameters $\param_0,\param\in\R^{q_\Db}$ we define the disagreement sets:
\begin{equation}
    \label{eq:reluunequal}
    \begin{split}
        &\Delta_\mu(\param,\param_0)\coloneqq\mu\left\{x\in\X:f(x;\param)\ne f(x;\param_0)\right\}\\
        &\Delta^{\mathrm{all}}_\mu(\param,\param_0)\coloneqq\mu\left\{x\in\X:x_i(x;\param)\ne x_i(x;\param_0)\ \text{for some }i\in[L]\right\},
    \end{split}
\end{equation}
as the measure of the inputs at which the two networks compute different
outputs. For uniform $\mu$ these are
fractions of the volume of $\X$.  Given thresholds $\epsilon,\delta$, we consider guarantees of the form: for every target $\theta_0$, with probability at least $1-\delta$ over the probes, every network $f(\cdot,\theta)$ agreeing on those probes also agrees with $f(\cdot,\theta_0)$ outside a set of measure at most $\epsilon$. If we moreover assume that $\sup_{\param\in\paramspace, x\in\X}\|f(x;\param)\|\leq B$ for some constant $B$, then we would like to show that one can reconstruct $f_\theta$ up to any $\epsilon$ in $L^s(\mu)$ norm. Note that such boundedness assumption is automatic for the hypothesis class of ReLU networks if we assume the input space $\X$ and the parameter space $\paramspace$ to be compact.

Theorem~\ref{thm:relu-pac-informal} presents a semi-formal version of our results for ReLU networks, while Theorem~\ref{thm:relu-pac} provides the formal statement. Intuitively, this theorem says the following. Since we cannot identify the function $f_\theta$ exactly everywhere from the probes, we want to identify $f_\theta$ from $\all(\param, \mP)$ or $\final(\param, \mP)$ at a large region of the input space $\X$, namely at a set of measure at least $1-\epsilon$ (where $\mu(\X)=1$) by using $N$ many probes. Also assuming that the probes are sampled independently at random under the same probability measure $\mu$, we want the above property to hold with high probability (at least $1-\delta$). Thus, we want the target function $f_{\theta}$ and the reconstructed function $f_{\hat\theta}$ to be approximately equal with high probability. This is similar in spirit (but unrelated) to the PAC learning framework used in classical learning theory. Now, we prove Theorem~\ref{thm:relu-pac} on $\relu$ networks, which is a formal version of Theorem~\ref{thm:relu-pac-informal}. For that we first define some notations and prove a couple of helper lemmas.

\paragraph{Proof of the formal version of Theorem~\ref{thm:relu-pac-informal}:}

We first state and prove some bounds on counting polynomials and the
random-sampling bound used in the proof. The zero-pattern estimate below is a specialization of \citet[Theorem~1.1]{ronyai2001zero}.
We still prove it for completeness.

\begin{lemma}[Counting polynomial tests]
\label{lem:relu-polynomial-counts}
Let $g_1,\cdots,g_J$ be real polynomials of degree at most $r$
in $P$ variables, where $P,r,J\ge1$, and we define
$D\coloneqq\binom{P+r}{r}-1$.
The numbers of distinct vectors
$\left(\mathbf 1[g_j(v)>0]\right)_{j=1}^J,
\,
\left(\mathbf 1[g_j(v)\ne0]\right)_{j=1}^J,
\, v\in\R^P$,
are at most $\sum_{k=0}^D\binom Jk$ and
$\binom{P+rJ}{P}$, respectively.
If $J\ge D$, the first bound is at most $(eJ/D)^D$.
\end{lemma}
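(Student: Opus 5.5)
For the first bound I would use the standard linearization argument of R\'onyai--Babai--Ganapathy. Suppose the sign vectors $\left(\mathbf 1[g_j(v)>0]\right)_{j=1}^J$ take $S$ distinct values. Since each $g_j$ has degree at most $r$, it lies in the space of polynomials of degree at most $r$ in $P$ variables. That space has dimension $D+1=\binom{P+r}{r}$, and once we subtract off the constant monomial the nonconstant part has dimension $D$.

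I would map $v\mapsto \phi(v)\in\R^D$, the vector of all nonconstant monomials of degree at most $r$. Then $g_j(v)=\langle a_j,\phi(v)\rangle+c_j$ is an affine function on $\R^D$. So every realized pattern is the pattern of some point $y=\phi(v)\in\R^D$ with respect to $J$ affine threshold functions $y\mapsto \mathbf 1[\langle a_j,y\rangle+c_j>0]$.

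The number of distinct such patterns over all of $\R^D$ is at most the number of cells of an arrangement of $J$ hyperplanes in $\R^D$. Because we only record strict positivity, boundary points share their pattern with some adjacent region, and that number is at most $\sum_{k=0}^D\binom Jk$. Equivalently, I would invoke Sauer--Shelah: affine halfspaces in $\R^D$, viewed as the class of labelings $j\mapsto \mathbf 1[\langle a_j,y\rangle+c_j>0]$ indexed by $y$ on the ground set $[J]$, have VC dimension at most $D$. This requires care, since the dual class is what matters. The cleaner route is to apply the region count directly, so I would use the hyperplane-arrangement count via induction on $J$: adding one hyperplane increases the number of regions by at most the number of regions it induces on itself, which is an arrangement in dimension $D-1$. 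The recurrence $R(J,D)\le R(J-1,D)+R(J-1,D-1)$ gives $\sum_{k\le D}\binom Jk$.

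The estimate $\sum_{k=0}^D\binom Jk\le (eJ/D)^D$ for $J\ge D$ is the standard one. Multiply by $(D/J)^D\le (D/J)^k$ for $k\le D$, sum the resulting binomial terms to $(1+D/J)^J$, and bound this by $e^D$.

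For the nonvanishing-pattern bound I would follow R\'onyai--Babai--Ganapathy exactly. Suppose $v_1,\dots,v_S$ realize distinct nonzero-sets $T_s=\{j:g_j(v_s)\ne0\}$, and let $h_s=\prod_{j\in T_s}g_j$. Order the patterns so that $|T_s|$ is nondecreasing. Then $h_s(v_s)\ne0$, while $h_s(v_{s'})=0$ whenever $T_s\not\subseteq T_{s'}$, and this holds for all $s'<s$ after choosing the ordering appropriately by inclusion-compatible sorting. This gives a triangular evaluation matrix, so the $h_s$ are linearly independent. Each $h_s$ has degree at most $rJ$, so $S\le\dim\{\text{polys of degree}\le rJ\text{ in }P\text{ vars}\}=\binom{P+rJ}{P}$.

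The main obstacle is getting the triangularity right. Distinct patterns with equal size are incomparable, so $h_s(v_{s'})=0$ for those pairs. To cover everything I would sort by size decreasing and check the relation $T_s\not\subseteq T_{s'}$ for all $s'>s$ in that order, verifying that direction once carefully.
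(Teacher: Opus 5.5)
Your linearization through the $D$ nonconstant monomials, the region recurrence, the estimate $\sum_{k\le D}\binom Jk\le(eJ/D)^D$, and the product-polynomial argument for nonvanishing patterns all agree with the paper's proof. Your first ordering, by nondecreasing $|T_s|$, already works: for $s'<s$ one cannot have $T_s\subseteq T_{s'}$ unless $T_s=T_{s'}$, so there is no obstacle there.

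The gap is the step linking positivity patterns to regions. The claim that, because only strict positivity is recorded, every boundary point shares its pattern with an adjacent open region is false. Take $P=r=1$ (so $D=1$), $g_1(x)=x$, $g_2(x)=-x$. The realized patterns are $(1,0)$, $(0,1)$ and $(0,0)$, the last only at $x=0$, while the arrangement in $\R^1$ has two regions. The same happens with distinct hyperplanes: $g_1=x_1$, $g_2=x_2$, $g_3=-x_1-x_2$ on $\R^2$ give $7$ patterns, with $(0,0,0)$ realized only at the origin, but only $6$ regions. In general, a point lying on the hyperplanes indexed by $Z$ can be moved into an open region with the same pattern only if some $u$ satisfies $\langle a_j,u\rangle<0$ for all $j\in Z$. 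This fails whenever these normals are positively dependent. So ``number of patterns $\le$ number of regions'' does not hold, and your chain breaks at its first inequality. The bound $\sum_{k\le D}\binom Jk$ is correct, but your argument does not establish it.

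The missing idea is the perturbation the paper uses. Fix one realizing point for each of the finitely many realized patterns, and replace each $g_j$ by $g_j-\eta_j$, where $\eta_j>0$ is smaller than every positive value of $g_j$ at those points. At the chosen points, positive values stay positive and nonpositive values become strictly negative. The realized $0/1$ patterns therefore become distinct \emph{strict} sign vectors of $J$ affine functions on $\R^D$, so they lie in distinct open regions of the shifted arrangement. Your recurrence then gives the bound; repeated hyperplanes and constant functions only lower the count.

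Alternatively, the Sauer--Shelah route you set aside handles non-strict patterns directly, but it needs the VC bound you did not prove. Take any $D+1$ indices, relabelled $1,\ldots,D+1$. The image of the affine map $y\mapsto(\ell_1(y),\ldots,\ell_{D+1}(y))$ lies in some affine hyperplane $\{\langle\lambda,z\rangle=\beta\}$ of $\R^{D+1}$. A sign check on $\lambda$ and $\beta$ shows that such a hyperplane misses one of the $2^{D+1}$ sets $\{z_i>0\ (i\in U),\ z_i\le 0\ (i\notin U)\}$, so no $D+1$ indices are shattered.
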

\begin{proof}
We first bound the number of positive vectors. For this, we choose one parameter for each realized vector. Subtract from every $g_j$ a positive number smaller
than all its positive values at the chosen parameters.
If none of these values is positive, any positive number works.
This preserves every chosen binary vector and makes all
tested values nonzero.

List the $D$ nonconstant monomials of degree at most $r$.
Each shifted polynomial is an affine function of this list.
It remains to count strict sign vectors of $J$ affine
functions on $\R^D$. A nonempty set with fixed strict
signs is an intersection of open halfspaces and is convex.
Adding one zero hyperplane splits such a set only when
the hyperplane meets it. The intersections of the split
sets with this hyperplane have distinct strict sign
vectors for the preceding functions restricted to
$\R^{D-1}$. A repeated hyperplane causes no split,
and constant functions add no choices.
Thus the maximum count $a(J,D)$ satisfies
\begin{align*}
a(J,D)\le a(J-1,D)+a(J-1,D-1).
\end{align*}
Starting from $a(0,D)=a(J,0)=1$, induction gives
$a(J,D)\le\sum_{k=0}^D\binom Jk$.
For $J\ge D$, we set $t=D/J$ and we conclude by the following bounding:
\begin{align*}
t^D\sum_{k=0}^D\binom Jk
\le\sum_{k=0}^D\binom Jk t^k
\le(1+t)^J\le e^D.
\end{align*}

Now we bound the number of non-zero vectors. Let $\mathcal S$ contain the realized
sets $\{j:g_j(v)\ne0\}$.
We choose a realizing parameter $v_S$ for each
$S\in\mathcal S$, and define $G_S\coloneqq\prod_{j\in S}g_j$,
with empty product $1$.
Then $G_S(v_T)\ne0$ exactly when $S\subseteq T$.
Ordering the sets by nondecreasing cardinality makes the matrix
$(G_S(v_T))_{S,T\in\mathcal S}$ upper triangular
with nonzero diagonal. The polynomials $G_S$ are therefore
linearly independent. Each has degree at most $rJ$,
and the space of such polynomials has
$\binom{P+rJ}{P}$ monomials as a basis.
This proves the second bound.
\end{proof}

For a class $\mathcal C$ of subsets of $\X$, write
$\Pi_{\mathcal C}(M)$ for the largest number of distinct
vectors $(\mathbf 1[p_a\in C])_{a=1}^M$, as $C\in\mathcal C$
varies, on any $M$ fixed inputs. This is the growth function of $\mathcal C$ from VC theory. Moreover, the following lemma is inspired from the classical
double-sampling argument for random $\epsilon$-nets from
\citep{haussler1987epsilon,blumer1989learnability}.

We also use the following consequence of the polynomial
sign bound of \citet[Lemma~17]{bartlett2019nearly}.

\begin{lemma}[Counting polynomial inequalities]
\label{lem:relu-polynomial-signs}
Let $g_1,\cdots,g_J$ be real polynomials of degree at most
$r$ in $P$ variables, where $r\ge1$ and $J\ge P\ge1$.
Then
\begin{align*}
\left|
\left\{
\left(\mathbf 1[g_j(v)>0]\right)_{j=1}^J:
v\in\R^P
\right\}
\right|
\le
\left(\frac{4erJ}{P}\right)^P.
\end{align*}
\end{lemma}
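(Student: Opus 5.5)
The bound will follow from Warren's theorem, as quoted in \citet[Lemma~17]{bartlett2019nearly}. That result states that for $J\ge P$ polynomials of degree at most $r$ in $P$ variables, the number of connected components of $\R^P\setminus\bigcup_j\{g_j=0\}$ is at most $2\left(\frac{2erJ}{P}\right)^P$. The only issue is that realized vectors $(\mathbf 1[g_j(v)>0])_j$ need not come from points off the zero sets. So the plan is to perturb the polynomials so that every realized vector is witnessed by a point lying in one of the connected components of the complement, and then count those components.

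Concretely, I will fix one witness $v_b$ for each realized binary vector $b$; there are finitely many. For each $j$, let $\eta_j>0$ be smaller than every positive value in $\{g_j(v_b)\}_b$, taking $\eta_j=1$ if there are none, and set $\tilde g_j\coloneqq g_j-\eta_j$. This is the same shift trick used in Lemma~\ref{lem:relu-polynomial-counts}. The shift has two effects. First, $\tilde g_j(v_b)>0$ exactly when $g_j(v_b)>0$. Second, $\tilde g_j(v_b)<0$ whenever $g_j(v_b)\le 0$. Hence every $v_b$ lies off all zero sets $\{\tilde g_j=0\}$, and the strict sign vector of $(\tilde g_j(v_b))_j$ encodes $b$. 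The shifted polynomials still have degree at most $r$ and there are still $J\ge P$ of them.

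Distinct $b$ give distinct strict sign vectors. Because the signs of the $\tilde g_j$ are constant on each connected component of $\R^P\setminus\bigcup_j\{\tilde g_j=0\}$, the witnesses $v_b$ for different $b$ must lie in distinct components. The number of realized vectors is therefore at most the number of components, which is $2\left(\frac{2erJ}{P}\right)^P\le 2^P\left(\frac{2erJ}{P}\right)^P=\left(\frac{4erJ}{P}\right)^P$, using $P\ge1$.

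The main obstacle is the genericity caveat in Warren's theorem. Some versions require the polynomials to be in general position, or allow degenerate ones such as constants. Constants are harmless, since they have empty zero sets or, in the zero case, were shifted to nonzero values. If the cited version does need genericity, I would apply an additional tiny generic perturbation of the coefficients. This keeps each $v_b$ in a region with the same strict signs, by continuity at the finitely many witnesses, and it does not change the degree or the count $J$.
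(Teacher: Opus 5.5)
Your proposal is correct and follows the paper's proof. The paper uses the same shift $g_j\mapsto g_j-\eta_j$ at one witness per realized pattern to make every tested value nonzero, and then invokes $2(2erJ/P)^P\le(4erJ/P)^P$ from \citet[Lemma~17]{bartlett2019nearly}. The only difference is that you count connected components of the complement; that lemma already bounds sign vectors directly, so the detour and your general-position caveat are unnecessary (and Warren's unconditional component bound $(4erJ/P)^P$ would close the component version directly anyway).
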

\begin{proof}
Choose one parameter for each realized binary vector.
For each $j$, choose $\eta_j>0$ smaller than every
positive value of $g_j$ at these finitely many parameters;
if there is no positive value, choose any $\eta_j>0$.
Replacing $g_j$ by $g_j-\eta_j$ preserves the chosen
binary vectors and makes every tested value nonzero.
The polynomial sign bound therefore gives at most
\begin{align*}
2\left(\frac{2erJ}{P}\right)^P
\le
\left(\frac{4erJ}{P}\right)^P
\end{align*}
such vectors.
\end{proof}

\begin{lemma}[Random samples detect large disagreement sets]
\label{lem:relu-random-samples}
Let $K,\X$ be compact metric spaces, let $\mu$ be a Borel
probability measure on $\X$, and let
$h:K\times\X\to[0,\infty)$ be continuous.
Set $C_\theta=\{u:h(\theta,u)>0\}$ and
$\mathcal C=\{C_\theta:\theta\in K\}$.
Suppose that $D,Q\ge1$ are integers and
\begin{align*}
\Pi_{\mathcal C}(M)
\le\left(\frac{eMQ}{D}\right)^D,
\, M\ge D.
\end{align*}
For $\epsilon,\delta\in(0,1)$, if
\begin{align*}
N\ge\frac6\epsilon
\left(D\log\frac{12eQ}{\epsilon}+\log\frac2\delta\right),
\end{align*}
then, with probability at least $1-\delta$, every
$C\in\mathcal C$ missed by $N$ i.i.d.\ samples from $\mu$
has $\mu(C)\le\epsilon$.
\end{lemma}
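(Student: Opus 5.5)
We will prove Lemma~\ref{lem:relu-random-samples} by the classical double-sampling (symmetrization) argument for $\epsilon$-nets. Call the bad event $A$: some $C\in\mathcal C$ with $\mu(C)>\epsilon$ contains none of the $N$ primary samples $p_1,\ldots,p_N$. Draw an independent ghost sample $p_1',\ldots,p_N'$ from $\mu$. Let $B$ be the event that some $C\in\mathcal C$ misses every primary sample but contains at least $\epsilon N/2$ ghost samples.

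\textbf{Step 1: reduce $A$ to $B$.} First, on $A$, choose a witnessing $C_\theta$ measurably; this is where compactness of $K,\X$ and continuity of $h$ enter. The sets $\{u:h(\theta,u)>0\}$ are open, and the event $A$ is the projection of a Borel set. We can either use a measurable selection, or replace $K$ by a countable dense subset. The latter works because positivity of $h(\theta,u)$ at finitely many points, together with $\mu(C_\theta)>\epsilon$, is stable along approximating sequences in the right direction via lower semicontinuity of $\theta\mapsto\mu(C_\theta)$, which follows from Fatou. Then, conditionally on the primary sample, the number of ghost points in $C_\theta$ is Binomial$(N,\mu(C_\theta))$ with mean $>\epsilon N$. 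By Chebyshev or a Chernoff bound it is $\ge\epsilon N/2$ with probability $\ge1/2$ once $N\ge 8/\epsilon$, which our hypothesis on $N$ guarantees. Hence $\Pr(A)\le 2\Pr(B)$.

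\textbf{Step 2: bound $\Pr(B)$ by a swapping argument.} Condition on the unordered multiset of the $2N$ points $z_1,\ldots,z_{2N}$. The assignment to primary and ghost is a uniformly random split; equivalently, use independent swaps of the pairs $(p_a,p'_a)$. For a fixed trace pattern on the $2N$ points containing at least $\epsilon N/2$ points, the probability that all of them fall in the ghost half is at most $2^{-\epsilon N/2}$. By the growth-function hypothesis with $M=2N\ge D$, there are at most $(2eNQ/D)^D$ trace patterns. A union bound gives
\[
\Pr(A)\le 2\left(\frac{2eNQ}{D}\right)^D 2^{-\epsilon N/2}.
\]

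\textbf{Step 3: arithmetic.} Finally, show the right-hand side is $\le\delta$ under $N\ge\frac6\epsilon(D\log\frac{12eQ}{\epsilon}+\log\frac2\delta)$. Taking logs, we need
\[
\frac{\epsilon N\log 2}{2}\ge D\log\frac{2eQ}{\epsilon}+D\log\frac{\epsilon N}{D}+\log\frac2\delta.
\]
The awkward term is $D\log(\epsilon N/D)$, which is handled with the elementary inequality $\log x\le x/c+\log c-1$, choosing $c$ so that this term is absorbed by a fraction of $\epsilon N$. The constant $6$ and the factor $12$ inside the log are tuned exactly for this absorption; $\log 2/2\approx0.35>1/6+\ldots$ leaves room. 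This constant-chasing, together with the measurability/countable-reduction detail in Step 1, is the main obstacle. The probabilistic structure itself is standard.
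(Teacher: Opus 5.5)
Your proposal follows the paper's proof essentially step for step: a ghost sample, Chebyshev with $N\epsilon\ge 8$ to get $\Pr(A)\le 2\Pr(B)$, independent pair swaps with a union bound over at most $\Pi_{\mathcal C}(2N)$ trace patterns, and then the arithmetic. Your Step 3 closes as you suggest. With $c=6$, the hypothesis on $N$ gives $D\log\frac{2eQ}{\epsilon}+D\log\frac{\epsilon N}{D}+\log\frac2\delta\le\frac{\epsilon N}{3}-D<\frac{\log 2}{2}\epsilon N$. The paper instead verifies the inequality at $N_0=6A/\epsilon$ and then uses monotonicity in $N$.

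The one thing to fix is measurability. Drop the countable-dense-subset option: the no-hit condition $h(\theta,p_a)=0$ is \emph{closed} in $\theta$, so it is not preserved when $\theta$ is approximated from a countable set. The set of witnesses can have empty interior, so that reduction only controls a possibly smaller event $A'\subseteq A$.

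No measurable selection is needed either. As in the paper, $A$ and $B$ are projections of (closed $\cap$ open) subsets of a compact metric space, hence $\sigma$-compact and Borel. For each fixed primary sample in $A$, any fixed witness makes the ghost-sample section of $B$ have probability at least $1/2$, and Fubini then yields $\Pr(A)\le 2\Pr(B)$.
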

\begin{proof}
Let $E$ be the event that the sample misses some $C$
with $\mu(C)>\epsilon$. Draw an independent second
sample of size $N$. Let $E'$ be the event that some
$C\in\mathcal C$ contains no first-sample point and at
least $N\epsilon/2$ second-sample points.

These events are measurable. To see this, we observe that the map
$\theta\mapsto\mu(C_\theta)$ is lower semicontinuous by
Fatou's lemma. In $K\times\X^N$ or $K\times\X^{2N}$,
the no-hit conditions are closed, whereas the conditions
$\mu(C_\theta)>\epsilon$ and the required number of
second-sample hits are open. The defining sets are
therefore intersections of a closed set with an open
set in a compact metric space. Such sets are countable
unions of compact sets, as are their coordinate
projections onto the samples. Thus $E,E'$ are Borel.

Suppose the first sample belongs to $E$, and choose
a witnessing set of measure $q>\epsilon$.
Its second-sample hit count $Z$ has mean $Nq$ and
variance $Nq(1-q)$. If $N\epsilon\ge8$, by Chebyshev's
inequality we get
\begin{align*}
\Prob\{Z<N\epsilon/2\}
\le\frac{4(1-q)}{Nq}<\frac12.
\end{align*}
Consequently we have $\Prob(E)\le2\Prob(E')$.

Now we pair the two samples and independently swap the members
of each pair with probability $1/2$. Their joint distribution
is unchanged. On the fixed $2N$ points there are at
most $\Pi_{\mathcal C}(2N)$ membership vectors.
For a fixed vector to witness $E'$, all its marked
points must be assigned to the second sample.
If a pair has two marked points this is impossible;
otherwise, with $k$ marked points, its probability is
$2^{-k}$. Since $k\ge N\epsilon/2$, a union bound gives
$\Prob(E)\le2\Pi_{\mathcal C}(2N)2^{-N\epsilon/2}.$

It remains to check the stated threshold.
Put $A=D\log(12eQ/\epsilon)+\log(2/\delta)$ and
$N_0=6A/\epsilon$. Then $N_0>D$, $N_0\epsilon>8$, and
\begin{align*}
D\log\frac{2eN_0Q}{D}+\log\frac2\delta
=A+D\log\frac AD
\le2A=\frac{N_0\epsilon}{3}.
\end{align*}
The function
$N\epsilon-3D\log(2eNQ/D)-3\log(2/\delta)$
has derivative $\epsilon-3D/N\ge0$ for $N\ge N_0$.
Hence, for every such $N$,
\begin{align*}
D\log\frac{2eNQ}{D}+\log\frac2\delta
\le\frac{N\epsilon}{3}.
\end{align*}
Applying the growth bound at $M=2N$ and using
$3\log2/2>1$, we conclude that
\begin{align*}
\Prob(E)
&\le
2\exp\left(
D\log\frac{2eNQ}{D}-\frac{\log2}{2}N\epsilon
\right)
\le\delta.
\end{align*}
\end{proof}

We shall now state and prove Theorem~\ref{thm:relu-pac}, which is the formal version of Theorem~\ref{thm:relu-pac-informal}. Throughout this section, $\X\subseteq\R^d$ is compact, $L\ge2$,
$\sigma=\relu$ in layers $1,\cdots,L-1$ and the output layer is
linear. We use the following notations for ReLU nets, similar to what we used for analytic nets:
\begin{equation}
\label{eq:relu-neural-net}
\begin{split}
x_0(u;\param)
&\coloneqq u,\\
\overline x_i(u;\param)
&\coloneqq(x_i(u;\param);1)\in\R^{d_i+1},
\, 0\le i<L,\\
z_i(u;\param)
&\coloneqq B_i\overline x_{i-1}(u;\param)\in\R^{d_i},
\, i\in[L],\\
x_i(u;\param)
&\coloneqq\relu(z_i(u;\param)),
\, i\in[L-1],\\
x_L(u;\param)
&\coloneqq f(u;\param)\coloneqq z_L(u;\param),\\
H&\coloneqq\sum_{i=1}^{L-1}d_i,
\, \ell\coloneqq H+m.
\end{split}
\end{equation}
We denote by $\beta_{ij}^\top$ the $j$-th row of $B_i$. We call the rows
$(i,j)$ with $i\le L-1$ the hidden rows. $H$ is thus the total number of hidden neurons.

\begin{restatable}[PAC identification of $\relu$ networks]{theorem}{thmrelupac}
\label{thm:relu-pac}
Let  $\X\subset \R^d$ be compact and $\mu$ be a Borel probability measure on $\X$. Consider the family of $\relu$ MLPs $\mlps$ of $\ell$ neurons with widths $\Db\coloneqq(d, d_1, \cdots,d_{L-1}, m)$ with compact parameter space $\paramspace_\Db\subseteq\R^{q_\Db}$. We consider any $\epsilon, \delta \in (0,1)$ and integers:
$$\textstyle
N_1\ge\frac6\epsilon
\left(q_\Db L\log\frac{48eL(\ell-m+1)}{\epsilon}+\log\frac2\delta
\right),\,N_2 \geq \frac{6}{\epsilon}\left(q_\Db\log\frac{24e\ell}{\epsilon}+\log\frac2\delta\right).$$ 

Let $\mP_1=(p_1,\cdots,p_{N_1})\in\X^{N_1}, \mP_2=(p_1,\cdots,p_{N_2})\in\X^{N_2}$ be probes with $p_i$ i.i.d.\ $\mu$. We fix the target $\param_0\in\paramspace_\Db$. Then we have: 

(i) $\Prob_{\mP_1\sim\mu^{N_1}}\left\{\forall\param\in\paramspace_\Db,\,\final(\param,\mP_1) = \final(\param_0, \mP_1)\implies\disagfinal_\mu(\theta,\theta_0)\leq \epsilon\right\}\geq 1 - \delta$, i.e, with probability at least $1-\delta$, any parameter $\param\in\paramspace_\Db$ having the same final output on the probe tuple $\mP_1$ as the target $\param_0$ has disagreement set of measure at most $\epsilon$.

(ii)  $\Prob_{\mP_2\sim\mu^{N_2}}\left\{\forall\param\in\paramspace_\Db,\,\all(\param, \mP_2)=\all(\param_0,\mP_2)\implies\disagall_\mu(\theta,\theta_0)\leq \epsilon\right\}\geq 1 - \delta$  i.e, with probability at least $1-\delta$, any parameter $\param\in\paramspace_\Db$ having the same hidden activations on the probe tuple $\mP_2$ as $\theta_0$ can disagree on any hidden layer neuron on an input set of measure at most $\epsilon$.

Moreover, let $B> 0$ be such that $\sup_{\param\in\paramspace_\Db,\ x\in\X}\|f(x;\param)\|_2\le B$. Then we also have (a) $\Prob_{\mP_1\sim\mu^{N_1}}\left\{\forall\param\in\paramspace_\Db,\,\final(\param, \mP_1)=\final(\param_0,\mP_1)\implies\|f_\param-f_{\param_0}\|_{L^2(\mu)}\leq 2B\sqrt\epsilon\right\}\geq 1 - \delta$ and (b) $\Prob_{\mP_2\sim\mu^{N_2}}\left\{\forall\param\in\paramspace_\Db,\,\all(\param, \mP_2)=\all(\param_0,\mP_2)\implies\|f_\param-f_{\param_0}\|_{L^2(\mu)}\leq 2B\sqrt\epsilon\right\}\geq 1 - \delta$
\end{restatable}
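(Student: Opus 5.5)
The plan is to reduce both parts (i) and (ii) to Lemma~\ref{lem:relu-random-samples}, applied with $K=\paramspace_\Db$ and a nonnegative continuous function $h(\param,u)$ whose positivity set is the relevant disagreement set. For (i) we take $h(\param,u)=\|f(u;\param)-f(u;\param_0)\|_2^2$, so that $C_\param=\{u: f(u;\param)\ne f(u;\param_0)\}$. For (ii) we take $h(\param,u)=\sum_{i=1}^L\|x_i(u;\param)-x_i(u;\param_0)\|_2^2$. Both are continuous because ReLU networks are jointly continuous in $(u,\param)$.

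If $\final(\param,\mP_1)=\final(\param_0,\mP_1)$, then $C_\param$ is missed by every probe. The lemma then gives $\mu(C_\param)\le\epsilon$ simultaneously for all $\param$, with probability at least $1-\delta$. The same reasoning with $\all$ handles (ii). The $L^2$ statements (a) and (b) follow immediately. In case (b), agreement of all hidden layers outside a set $S$ forces agreement of $f$ outside $S$. In both cases
$$\|f_\param-f_{\param_0}\|_{L^2(\mu)}^2=\int_{C}\|f_\param-f_{\param_0}\|_2^2\,d\mu\le (2B)^2\mu(C)\le 4B^2\epsilon.$$

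The real work is verifying the growth-function hypothesis $\Pi_{\mathcal C}(M)\le(eMQ/D)^D$ with $D=q_\Db L$, $Q=4L(\ell-m+1)$ for (i), and $D=q_\Db$, $Q=2\ell$ for (ii). These choices match the constants $48eL(\ell-m+1)=12eQ$ and $24e\ell=12eQ$ in the thresholds.

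For fixed inputs $u_1,\dots,u_M$, I would parametrize $\param\in\R^{q_\Db}$ and count activation patterns. For (ii), the hidden pattern at each $u_a$ is the sign vector of the preactivations $z_i(u_a;\param)$. Given the pattern of layers $<i$, the layer-$i$ preactivation is a polynomial in $\param$. Going layer by layer and conditioning on earlier patterns, each layer-$i$ row's contribution becomes an affine function of that layer's block $B_i$ once the previous outputs are fixed by the data.

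In the hidden-probe setting this is especially clean. Membership in $C_\param$ compares $x_i(u_a;\param)$ to known target values. With hidden access, the comparison can be decided layer-locally: conditioning on agreement at layer $i-1$ makes $\overline x_{i-1}(u_a;\param)$ equal to the known target vector. Then $x_i=\relu(B_i\overline x_{i-1})$, and equality with the target reduces to sign and zero tests of polynomials of degree at most $2$ in $B_i$. Lemma~\ref{lem:relu-polynomial-counts} and Lemma~\ref{lem:relu-polynomial-signs} bound the number of such test vectors by $(4erJ/P)^P$ with $P=q_\Db$, $r\le2$, and $J\le 2M\ell$ tests. This is where the $\log$ of $\ell$ rather than $L\ell$ arises.

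For (i), no such layer-local reduction is available. I would do the standard Bartlett-style piecewise-polynomial argument. Partition $\R^{q_\Db}$ into regions of constant sign pattern of all hidden preactivations at all $M$ inputs. Refine layer by layer: at layer $i$ the preactivations are polynomials of degree $\le i$ in $\param$, and there are $Md_i$ of them. Lemma~\ref{lem:relu-polynomial-signs} multiplies the number of regions by at most $(4e\,i\,Md_i/q_\Db)^{q_\Db}$. On each final region $f(u_a;\param)-f(u_a;\param_0)$ is a polynomial of degree $\le L$, and one more application bounds its zero patterns. Multiplying over the $L$ layers and using AM--GM gives $\Pi_{\mathcal C}(M)\le (4eLM(\ell-m+1)/q_\Db)^{q_\Db L}$, up to arranging the constants. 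The factor $\ell-m+1$ comes from $H+1$ counting the hidden neurons plus the final test.

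The main obstacle is this counting step for (i). It requires getting the exact form $(eMQ/D)^D$ with $D=q_\Db L$ by careful AM--GM over the layers, while also checking the $J\ge P$ requirement of Lemma~\ref{lem:relu-polynomial-signs}. Padding with trivial polynomials, or handling $M\ge D$ separately, should suffice there.
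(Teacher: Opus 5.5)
Your route is the paper's: you reduce both parts to Lemma~\ref{lem:relu-random-samples}, count membership patterns for (ii) via affine tests built from the target's states, and count them for (i) via a layerwise sign-pattern partition plus AM--GM. Using exponent $q$ at every layer instead of the paper's $P_i$ is a harmless simplification that gives $D=qL$ directly.

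The counting as you wrote it does not deliver the stated thresholds, however. In (ii), the claimed $Q=2\ell$ does not follow from your own bound. With $r\le 2$ and $J=2M\ell$, Lemma~\ref{lem:relu-polynomial-signs} gives $(16eM\ell/q)^q$. That is $Q=16\ell$, and a threshold containing $\log(192e\ell/\epsilon)$ rather than $\log(24e\ell/\epsilon)$.

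No degree~$2$ arises here, and no conditioning is needed. Because every test is evaluated at the \emph{target's} states, agreement at every layer at $u$ is equivalent (by induction from the common input) to three conditions:
\begin{itemize}
\item $\beta_{ij}^\top\bar x_{i-1}(u;\theta_0)=x_{ij}(u;\theta_0)$ on target-active hidden rows;
\item $\beta_{ij}^\top\bar x_{i-1}(u;\theta_0)\le 0$ on target-inactive hidden rows;
\item equality on the $m$ output rows.
\end{itemize}
These conditions are affine in $\theta$ globally. So membership in $C_\theta$ is a Boolean function of at most $2\ell$ strict affine tests per input. Now use the hyperplane-arrangement half of Lemma~\ref{lem:relu-polynomial-counts} with $r=1$, so that its $D$ equals $q$. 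It gives $\sum_{k\le q}\binom{2M\ell}{k}\le(2eM\ell/q)^q$, which is exactly $Q=2\ell$.

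In (i), two analogous adjustments are needed to reach $48eL(\ell-m+1)$.

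First, at the output step use one polynomial per input, $R_{a,S}=\sum_j\bigl(f^S_{a,j}(\theta)-f_j(p_a;\theta_0)\bigr)^2$, of degree $2L$. Apply the zero-pattern half of Lemma~\ref{lem:relu-polynomial-counts}, which gives $\binom{q+2LM}{q}\le(4eLM/q)^q$ for $M\ge q$. Applying Lemma~\ref{lem:relu-polynomial-signs} to $R_{a,S}>0$ instead gives $8eLM$, and hence $\ell-m+2$. Testing the $m$ output coordinates separately introduces a factor of $m$.

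Second, plain AM--GM over the $L$ factors gives
$\bigl(4eM(\sum_{i<L}id_i+L)/(Lq)\bigr)^{qL}\le\bigl(4eM(H+1)/q\bigr)^{qL}$, with $H=\ell-m$.
This has the form $(eMQ/D)^D$ with $D=qL$ and $Q=4L(H+1)$, as required. The bound you display, $(4eLM(\ell-m+1)/q)^{qL}$, is larger by a factor $L^{qL}$ and corresponds to $Q=4L^2(H+1)$.
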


\begin{proof}
Write $p=q_\Db$, and we recall the definition of $x_i, \overline x_i,z_i, H$ and the details of the ReLU net structure from equation~\ref{eq:relu-neural-net}. We use the following notations:
\begin{align*}
P_i&\coloneqq
\sum_{k=1}^i d_k(d_{k-1}+1),\, i\in[L],\,D_{\mathrm{final}}\coloneqq\sum_{i=1}^L P_i,\\
Q_{\mathrm{final}}
&\coloneqq
4\sum_{i=1}^{L-1}i\,d_i+4L,\,D_{\mathrm{all}}\coloneqq p,\,Q_{\mathrm{all}}\coloneqq2\ell.
\end{align*}
Here $P_i$ is the number of parameters in the first $i$
layers, and $P_L=p$.
We fix a parameter $\param_0$ and define
\begin{align*}
C^{\mathrm{final}}_\param=\{u:f(u;\param)\ne f(u;\param_0)\},\,C^{\mathrm{all}}_\param
=\{u:x_i(u;\param)\ne x_i(u;\param_0) \text{ for some }i\in[L]\}.
\end{align*}
Let $\mathcal C^\star$ consist of these sets as
$\param\in\paramspace_\Db$ varies.
We shall show that, for $\star\in\{\mathrm{final},\mathrm{all}\}$, we have 
$\Pi_{\mathcal C^\star}(M)
\le\left(\frac{eMQ_\star}{D_\star}\right)^{D_\star},
\, M\ge D_\star$, where $\Pi_{\mathcal C}(M)$ denotes the largest number of distinct
vectors $(\mathbf 1[p_a\in C])_{a=1}^M$, by varying $C\in\mathcal C$
on any $M$ fixed inputs.

Fix $u\in\X$. By \eqref{eq:relu-neural-net}, equality
with the target at every layer is equivalent to the following 3 conditions: (i) $\beta_{ij}^{\top}\overline x_{i-1}(u;\param_0)=x_{i,j}(u;\param_0),\, \left(i<L,\ x_{i,j}(u;\param_0)>0\right)$, (ii) $\beta_{ij}^{\top}\overline x_{i-1}(u;\param_0)\le0,\,\left(i<L,\ x_{i,j}(u;\param_0)=0\right)$ and (iii) $\beta_{Lj}^{\top}\overline x_{L-1}(u;\param_0)=f_j(u;\param_0),\, \left(j\in[m]\right),$
where the first two conditions range over all hidden rows.
Necessity follows directly from the network equations.
Conversely, starting from the common input $u$, these
conditions imply equality of successive states by induction.

For fixed $\param_0$ and $u$, these are affine conditions
in $\param$. Each equality $g=0$ is determined by the
two tests $g>0$ and $-g>0$, and each inequality $g\le0$
by the test $g>0$. Thus at most $2\ell$ affine tests
per input suffice. On $M$ inputs,
Lemma~\ref{lem:relu-polynomial-counts} with $P=p$ and
$r=1$ gives $\Pi_{\mathcal C^{\mathrm{all}}}(M)
\le\left(\frac{2eM\ell}{p}\right)^p,
\, M\ge p.$

Now consider final-output observations. We count the
hidden activation patterns successively over the layers,
as in \citet[Section~4]{bartlett2019nearly}.
Fix $M\ge D_{\mathrm{final}}$ inputs
$p_1,\cdots,p_M$.
For $i=0,\cdots,L-1$, let $\mathcal S_i$ be the partition
of $\paramspace_\Db$ obtained by grouping parameters
according to the bits
\begin{align*}
\mathbf 1[z_{k,j}(p_a;\param)>0],
\,
k\in[i],\quad j\in[d_k],\quad a\in[M].
\end{align*}
Only nonempty parts are retained, and
$\mathcal S_0=\{\paramspace_\Db\}$.

Fix $i\in[L-1]$ and $S\in\mathcal S_{i-1}$.
On $S$, every preceding ReLU equals multiplication by
its fixed activation bit at each sampled input.
Consequently, each $z_{i,j}(p_a;\param)$ agrees on $S$
with a polynomial of degree at most $i$ in the first
$P_i$ parameters. Indeed, first-layer preactivations
are affine, and each subsequent affine layer increases
the degree by at most one.

There are $Md_i$ such polynomials.
Since $Md_i\ge M\ge D_{\mathrm{final}}\ge P_i$,
Lemma~\ref{lem:relu-polynomial-signs} shows that $S$
is divided into at most
$\left(4eiMd_i/P_i\right)^{P_i}$ parts of
$\mathcal S_i$. Hence
\begin{align*}
|\mathcal S_i|
&\le
|\mathcal S_{i-1}|
\left(\frac{4eiMd_i}{P_i}\right)^{P_i},\\
|\mathcal S_{L-1}|
&\le
\prod_{i=1}^{L-1}
\left(\frac{4eiMd_i}{P_i}\right)^{P_i}.
\end{align*}
Zero preactivations are assigned bit zero, so these
partitions also include parameters at which some
preactivation vanishes.

Fix $S\in\mathcal S_{L-1}$.
For each $a\in[M]$ and $j\in[m]$, let
$f_{a,j}^S(\param)$ be the polynomial obtained by
replacing every hidden ReLU at $p_a$ by multiplication
by its bit on $S$. It has degree at most $L$ and equals
$f_j(p_a;\param)$ for $\param\in S$.
Define
\begin{align*}
R_{a,S}(\param)
\coloneqq
\sum_{j=1}^m
\left(
f_{a,j}^S(\param)-f_j(p_a;\param_0)
\right)^2.
\end{align*}
This polynomial has degree at most $2L$, and, for
$\param\in S$,
$
R_{a,S}(\param)\ne0
\iff
f(p_a;\param)\ne f(p_a;\param_0).
$
The second part of
Lemma~\ref{lem:relu-polynomial-counts} therefore bounds
the number of disagreement vectors on $S$ by
\begin{align*}
\binom{p+2LM}{p}
\le
\left(\frac{e(p+2LM)}p\right)^p
\le
\left(\frac{4eLM}p\right)^p,
\end{align*}
where we used $M\ge p$.

Summing over $S\in\mathcal S_{L-1}$ and applying
the weighted arithmetic--geometric mean inequality,
with weights
$P_1/D_{\mathrm{final}},\cdots,
P_{L-1}/D_{\mathrm{final}},p/D_{\mathrm{final}}$,
gives
\begin{align*}
\Pi_{\mathcal C^{\mathrm{final}}}(M)
&\le
\left[
\prod_{i=1}^{L-1}
\left(\frac{4eiMd_i}{P_i}\right)^{P_i}
\right]
\left(\frac{4eLM}p\right)^p\\
&\le
\left(
\frac{
eM\left(4\sum_{i=1}^{L-1}i\,d_i+4L\right)
}{
\sum_{i=1}^{L-1}P_i+p
}
\right)^{\sum_{i=1}^{L-1}P_i+p}\\
&=
\left(
\frac{eMQ_{\mathrm{final}}}{D_{\mathrm{final}}}
\right)^{D_{\mathrm{final}}}.
\end{align*}
Moreover, we have 
$
D_{\mathrm{final}}\le pL,
\,
Q_{\mathrm{final}}\le4L(H+1).
$

We now apply Lemma~\ref{lem:relu-random-samples}.
Set $K=\paramspace_\Db$ and define:
\begin{align*}
h_{\mathrm{final}}(\param,u)\coloneqq
\|f(u;\param)-f(u;\param_0)\|_2,\,h_{\mathrm{all}}(\param,u)
\coloneqq\max_{i\in[L]}
\|x_i(u;\param)-x_i(u;\param_0)\|_2.
\end{align*}
Both functions are continuous, and their positive
sets are $C^{\mathrm{final}}_\param$ and
$C^{\mathrm{all}}_\param$, respectively.

For part (i), the preceding bounds give
\begin{align*}
\frac6\epsilon
\left(
D_{\mathrm{final}}
\log\frac{12eQ_{\mathrm{final}}}{\epsilon}
+\log\frac2\delta
\right)\,\le
\frac6\epsilon
\left(
pL\log\frac{48eL(H+1)}{\epsilon}
+\log\frac2\delta
\right)
\le N_1.
\end{align*}
Lemma~\ref{lem:relu-random-samples} therefore supplies
an event $\mathcal G_1\subseteq\X^{N_1}$ of probability
at least $1-\delta$ on which every
$C\in\mathcal C^{\mathrm{final}}$ missed by the probes
has measure at most $\epsilon$.
For every $\mP_1\in\mathcal G_1$ and every
$\param\in\paramspace_\Db$,
\begin{align*}
\final(\param,\mP_1)=\final(\param_0,\mP_1)
\implies
C^{\mathrm{final}}_\param
\cap\{p_1,\cdots,p_{N_1}\}=\varnothing\implies
\disagfinal_\mu(\param,\param_0)\le\epsilon.
\end{align*}
This proves (i).

For part (ii), apply the same lemma with
$D=D_{\mathrm{all}}=p$ and
$Q=Q_{\mathrm{all}}=2\ell$.
The assumed bound on $N_2$ is its required threshold.
Thus, on an event $\mathcal G_2\subseteq\X^{N_2}$ of
probability at least $1-\delta$, every
$\param\in\paramspace_\Db$ satisfies
\begin{align*}
\all(\param,\mP_2)=\all(\param_0,\mP_2)\implies
C^{\mathrm{all}}_\param
\cap\{p_1,\cdots,p_{N_2}\}=\varnothing\implies
\disagall_\mu(\param,\param_0)\le\epsilon.
\end{align*}
Since
$C^{\mathrm{final}}_\param\subseteq C^{\mathrm{all}}_\param$,
the final-output disagreement is also at most $\epsilon$.
This proves (ii).

Finally, if the outputs are bounded by $B$, then, for
every $1\le s<\infty$, we have:
\begin{align*}
\|f_\param-f_{\param_0}\|_{L^s(\mu)}^s
&\le(2B)^s\mu(C^{\mathrm{final}}_\param)
\le(2B)^s\epsilon.
\end{align*}
Taking $s=2$ gives the claimed $2B\sqrt{\epsilon}$ bound.
\end{proof}

\subsection{Function-level identification of multi-head attention}
\label{app:proofs:mha}
\begin{wrapfigure}[15]{r}{0.55\textwidth} 
\vspace{-20pt}
\begin{minipage}{\linewidth}
\begin{algorithm}[H]
\caption{\mhaalgo for $h=L=1$}
\label{alg:att-sketch}
\begin{algorithmic}[1]
\Require oracle $F_n(\cdot;\param_0)$; $\rank\W_0=d$
\Ensure effective matrices $(\M_0,\W_0)$
\State query $y_i\gets F_1(e_i^\top;\param_0)$,
       $i=1,\ldots,d$
\vspace{1pt}
\State $\W\gets[y_1;\ldots;y_d]$
       \Comment{Value matrix}
\vspace{1pt}
\State $P\gets[0^\top;e_1^\top;\ldots;e_d^\top]$
\vspace{1pt}
\State query $Y\gets F_{d+1}(P;\param_0)$
\vspace{1pt}
\State $Z\gets P\W$
\vspace{1pt}
\State $A\gets[Y\ \mathbf1_{d+1}]
                  [Z\ \mathbf1_{d+1}]^{-1}$
       \Comment{Attn. wts.}
\vspace{1pt}
\State $\M_{ij}\gets
       \log\!\left(A_{i+1,j+1}/A_{i+1,1}\right)$,
       $i,j\in[d]$
\vspace{1pt}
\State \Return $(\M,\W)$
\end{algorithmic}
\end{algorithm}
\end{minipage}
\vspace{-25pt}
\end{wrapfigure}

We use the architecture, parameter spaces, and probe maps
defined in equations \ref{eq:softmax}--\eqref{eq:mha-stack}.
In particular, $\Theta$ is the raw parameter space and
$D=\dim\Theta$.
The \emph{canonical representation} of a layer is obtained
by merging heads with identical score matrices $\M_a$,
summing their value matrices $\W_a$, and removing groups
whose summed value matrix is zero.

We start the section with a brief discussion on related works and then prove our results.

\paragraph{Relation to query-based attention recovery.}
Query based recovery of attention blocks has been studied recently and we acknowledge the progress of those works and discuss what we contribute. \cite{bhattamishra2026} study recovery of single-head
attention from scalar-output queries, while
\cite{kimkim2026} recover canonical multi-head representations.

Our three-token separation argument is a matrix-output
adaptation of the canonical-identifiability result
\citep[Proposition~3.1]{kimkim2026}.

However, our main focus is generic identification of multilayer MHA stacks
from a shared, nonadaptive probe probes with hidden access.
We also consider different observation models: our queries return complete
tokenwise layer outputs for $\all$, whereas their attention oracles
return a scalar at the final token only.

\paragraph{Softmax attention setup:}Now we setup some detailed notation, some of which overlap with the same from main text, but in more details.

We begin with some definitions. We consider $d$ dimensional inputs and $h$ attention heads, with $h\mid d$. Set $r=d/h$, the value
dimension of each head, and let $k$ be the query/key
dimension with $1\le k\le d$ . A bias-free, unmasked MHA layer
has parameters $
\param\coloneqq\{(\Q_a,\K_a,\V_a,\O_a)\}_{a=1}^h,
\,
\Q_a,\K_a\in\R^{d\times k},\,
\V_a\in\R^{d\times r},\,
\O_a\in\R^{r\times d}.
$
For an input probe $\probeseq\in\R^{n\times d}$ whose rows are the
token vectors $\probe_1^\top,\ldots,\probe_n^\top$, the layer computes
\begin{equation*}
F_n(\probeseq;\param)=\sum_{a=1}^h
    \operatorname{softmax}(\probeseq\M_a\probeseq^\top)\,\probeseq\W_a,\,
\M_a\coloneqq\frac{\Q_a\K_a^\top}{\sqrt{k}},\,\W_a\coloneqq\V_a\O_a.
\end{equation*}

The softmax acts rowwise:$
\left[\operatorname{softmax}(\probeseq\M_a\probeseq^\top)\right]_{ij}=
\frac{\exp(\probe_i^\top\M_a \probe_j)}{\sum_{s=1}^n\exp(\probe_i^\top\M_a \probe_s)}.$
The matrices $\M_a,\W_a\in\R^{d\times d}$ thus have ranks at most $k$ and $r$,
respectively. A probe specifies $\probeseq$ and returns the complete matrix
$F_n(\probeseq;\param)\in\R^{n\times d}$. An $L$-layer MHA stack has parameters
$\boldsymbol{\param}=(\param_1,\ldots,\param_L)$ and we consider the case where the token dimension $d$ is preserved across layers. Layer $\ell$ has $h_\ell$ heads, query/key dimension $k_\ell$, and value dimension
$r_\ell=d/h_\ell$. Our proofs only need manifold doms of parameters.

Let $\Theta=\prod_{\ell=1}^L\Theta_\ell$ denote the parameter space of the $L$-layer MHA architecture, with $\Theta_\ell$ being the parameter space of the $\ell$-th layer. Let $\paramdim_\ell\coloneqq h_\ell\left(
k_\ell(2d-k_\ell)+r_\ell(2d-r_\ell)
\right),\,\paramdim\coloneqq\sum_{\ell=1}^L \paramdim_\ell
$. Then $\paramdim_\ell=\dim(\Theta_\ell)$ since the set of $d\times d$ matrices of rank at most $k$ is an irreducible algebraic variety of dimension $k(2d-k)$.  Thus, $\dim(\Theta_\ell)= \paramdim_\ell,\,\dim(\Theta)=q$.

\paragraph{Proof of Stronger version of Theorem~\ref{thm:att-fixed-informal}}
We use two elementary facts: a discrete subset of a Euclidean
space is countable, and a nonzero analytic function on a connected
open subset of $\R^m$ has a zero set of Hausdorff dimension at
most $m-1$. 


\begin{lemma}[Three-token separation]
\label{lem:att-three-separation}
Two MHA layers agree on every three-token input if and only if they agree
at every sequence length.
\end{lemma}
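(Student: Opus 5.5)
The forward direction is trivial, since three-token inputs are a special case. For the converse, I will show that agreement on three-token inputs forces the two layers to have the same canonical representation, in the sense defined above: heads with equal score matrix $\M_a$ are merged, their $\W_a$ summed, and zero groups dropped. Two layers with the same canonical representation compute the same $F_n$ for every $n$, because the formula for $F_n$ depends only on the pairs $(\M_a,\W_a)$ and is additive over heads sharing a score matrix. So suppose $\theta,\theta'$ agree on all three-token inputs. Pass to canonical forms and take their difference. This gives a single formal ``layer'' with distinct score matrices $\M_1,\ldots,\M_s$ and value matrices $\Delta_1,\ldots,\Delta_s$, where $\Delta_a=\W_a-\W'_a$ if $\M_a$ occurs in both layers and $\pm$ the respective value matrix otherwise. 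It suffices to show every $\Delta_a=0$.

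\emph{Step 1 (reduce to a scalar-weight identity).} Use the three-token probe $\probeseq=(x,x,z)^\top$, repeating the first token, and read off the first output row. With $\ell_a(z)\coloneqq x^\top\M_a z-x^\top\M_a x$, a direct computation gives
\[
\sum_{a=1}^s\Bigl(x^\top\Delta_a+\rho\bigl(\ell_a(z)\bigr)(z-x)^\top\Delta_a\Bigr)=0\quad\forall\,x,z\in\R^d,
\qquad \rho(u)\coloneqq\frac{e^{u}}{2+e^{u}}.
\]
Setting $z=x$ kills the second term, so $\sum_a\Delta_a=0$; this is the single-token (value) information. Now fix $x$ generic, so that the row vectors $x^\top\M_a$ are pairwise distinct. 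This is possible because $\M_a\neq\M_b$ makes $\{x: x^\top(\M_a-\M_b)=0\}$ a proper subspace, and finitely many of these cannot cover $\R^d$. Then the affine functions $\ell_a$ of $z$ have pairwise distinct linear parts.

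\emph{Step 2 (independence of the shifted logistic functions).} I will show the following: if $c_a$ are polynomial vector-valued functions of $z$, $\ell_a$ are affine with distinct nonzero linear parts, and $\sum_a c_a(z)\rho(\ell_a(z))\equiv \text{polynomial}$, then each $c_a$ vanishes on the set where $\rho(\ell_a(\cdot))$ is non-polynomial. The argument follows Lemma~\ref{lem:tanh-indep}. Restrict to a generic complex line $z=z_0+\tau v$ on which all the slopes $\lambda_a=x^\top\M_a v$ are nonzero and distinct. Then $\rho(\ell_a)$ is meromorphic in $\tau$ with poles at $\tau=(\log 2+i\pi(2k+1)-\ell_a(z_0))/\lambda_a$. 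For distinct slopes, the pole lattices of different heads share at most finitely many points. Equating residues at the unshared poles therefore forces $c_a(z_0+\tau v)=0$ at infinitely many points along an arithmetic progression, hence identically in $\tau$. Varying $z_0$ and $v$ gives $(z-x)^\top\Delta_a=0$ for all $z$, i.e.\ $\Delta_a=0$. There is one degenerate head to handle: the one with $x^\top\M_a=0$, for which $\ell_a$ is constant. There can be at most one such head, and its term is a polynomial in $z$. Since all other $\Delta_b=0$, the identity $\sum_a\Delta_a=0$ then kills it too.

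\emph{Main obstacle.} The delicate step is Step 2, specifically making the pole argument rigorous. Poles of two heads may coincide along the chosen line, and the coefficients $c_a$ are affine in $\tau$ rather than constant, so the residue at a pole $\tau_k$ is $c_a(\tau_k)/\lambda_a$, which varies with $k$. I will handle this as follows. First choose $v$ generic so that the slope ratios $\lambda_a/\lambda_b$ are irrational where possible, or at least so that only finitely many poles collide. Then use that an affine function vanishing at infinitely many points of a progression is identically zero. If this becomes messy, the fallback is an asymptotic peeling argument: send $z=tv$ with $t\to\infty$ and strip off the head with the largest slope $x^\top\M_a v$, since $\rho(\ell_a)\to1$ exponentially fast there while the other heads behave differently. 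After this step, the rest of the proof is bookkeeping.
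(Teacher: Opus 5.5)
Your argument is correct and is essentially the paper's: the same repeated-token probe $(x,x,z)$ (the paper writes it as $(q+tv,q,q)$), the same use of $z=x$ to get $\sum_a\Delta_a=0$, and the same reduction to linear independence of the shifted functions $\rho$. You justify that independence by residues, where the paper only asserts it. Your ``main obstacle'' disappears if you take $z_0=x$, as the paper does: the coefficients become $\tau\,v^\top\Delta_a$ and the poles of head $a$ lie on the vertical line $\mathrm{Re}\,\tau=\log 2/\lambda_a$, so distinct slopes give disjoint pole sets. Irrational slope ratios are not needed, and are not always available (e.g.\ when $M_b=3M_a$).
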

\begin{proof}
Define $\rho:\mathbb{R}\to\mathbb{R}:\rho(t)=e^t/(2+e^t)$. It is not hard to see that the functions $\rho(gt)$ for distinct ``rates'' $g\in\R$ are linearly independent.

Now, suppose the three-token maps agree. Equal-token inputs give
equal value sums. List their distinct score matrices as $B_j$,
and let $\Delta_j$ be the difference between the associated
value sums. The repeated-token row of $(q+tv,q,q)$ then gives
$\sum_j\rho(tq^\top B_jv)\,v^\top\Delta_j=0.$
For generic $(q,v)$ the rates are distinct, since
$q^\top(B_j-B_{j'})v$ is a nonzero polynomial for $j\ne j'$.
Linear independence gives $v^\top\Delta_j=0$, hence $\Delta_j=0$.
The converse, at every length, follows by merging the summands.
\end{proof}

We make a note that the above is the matrix-output counterpart of
\citet[Proposition~3.1]{kimkim2026}.

\begin{lemma}[Countably many candidates from a fixed probe batch]
\label{lem:att-countable-batch}
Let $H\ge2$, $H\mid d$, and $s=d/H$. Consider probe tuples
containing $d(2H+1)$ probes: $d$ singletons,
$d(H+1)$ triples, and $d(H-1)$ sequences of length $s+1$. Let $\mathbf{P}_0$ be the set of all such probes.

Fix an MHA layer with $h\le H$ heads, $h\mid d$,
query/key dimension $1\le k\le d$, and value dimension $r=d/h$.
Its effective parameter space is
\[
\Theta_{\mathrm{eff}}
\coloneqq
\left\{
\param_{\mathrm{eff}}
=((\M_a,\W_a))_{a=1}^h:
\operatorname{rank}\M_a\le k,\,
\operatorname{rank}\W_a\le r
\right\},
\]
with
\[
\paramdim
\coloneqq\dim\Theta_{\mathrm{eff}}
=h\bigl[k(2d-k)+r(2d-r)\bigr].
\]

Then, for almost all tuples $(\mP, \param)\in\mathbf{P}_0\times\Theta_{\mathrm{eff}}\setminus \mathcal N$ (with $
 \dim_{\mathrm H}\mathcal N
 \le \dim\mathbf P_0+\paramdim-1$) and almost all $\param\in\param_{\mathrm{eff}}$ the set
$
\left\{
\param'\in\Theta_{\mathrm{eff}}:
\final(\param',\mP)
=
\final(\param,\mP)
\right\}
$
is at most countable.
\end{lemma}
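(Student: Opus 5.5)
The plan is to show that the fiber of the analytic observation map $\Phi_\mP:\Theta_{\mathrm{eff}}\to\R^{M}$, $\param\mapsto\final(\param,\mP)$, is zero-dimensional at generic points, and hence countable. I will use a Jacobian-rank argument together with the structure of real-analytic sets. First, I pass to a smooth chart. The set of pairs of matrices of ranks exactly $k$ and $r$ is a smooth connected manifold of dimension $q$, and it is open and dense in $\Theta_{\mathrm{eff}}$; the lower-rank strata have dimension at most $q-1$, so I put them into $\mathcal N$. On this manifold, the map $G(\mP,\param)\coloneqq\Phi_\mP(\param)$ is jointly real analytic. Consider the function $\Delta(\mP,\param)\coloneqq\det\bigl(J_\param\Phi_\mP(\param)^\top J_\param\Phi_\mP(\param)\bigr)$, where the Jacobian is taken in local analytic coordinates. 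Its zero set is exactly the set where $\Phi_\mP$ fails to be an immersion at $\param$. If $\Delta$ does not vanish identically, its zero set has Hausdorff dimension at most $\dim\mathbf P_0+q-1$, and this set is $\mathcal N$.

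The main step, and the one I expect to be the obstacle, is to exhibit a single pair $(\mP^*,\param^*)$ at which $J_\param\Phi_{\mP^*}$ has full rank $q$. For this I follow the staged structure of \mhaalgo. The $d$ singleton probes $e_i^\top$ return $\sum_a\W_a$, so their differential sees the sum of the value perturbations. The $d(H-1)$ sequences of length $s+1$, chosen as $[0;e_{j_1};\dots;e_{j_s}]$-type blocks adapted to the heads, serve to separate the individual $\W_a$. Here I choose $\param^*$ so that the heads attend essentially to disjoint coordinate blocks: the $\M_a$ have large, distinct scalings on the blocks. I then compute the attention weights to first order, which gives a block-triangular Jacobian. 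The $d(H+1)$ triples $(q+tv,q,q)$, with generic $q,v$, supply derivatives in the $\M_a$ directions. By the rate-independence of $\rho(gt)$ from Lemma~\ref{lem:att-three-separation}, the corresponding rows decouple head by head and recover each $\M_a$ restricted to its tangent space. Counting confirms that there are enough probe rows: there are $d(2H+1)$ probes, each returning at least $d$ outputs, so $M\ge d^2(2H+1)\ge q$. If the explicit computation becomes too messy, my fallback is to argue injectivity of the differential by contradiction. Suppose a tangent vector $(\dot\M_a,\dot\W_a)$ is annihilated on all probes. I then use analyticity in $t$ along one-parameter families of triples, again applying the linear independence of $\rho(gt)$ with distinct rates, to force $\dot\W_a=0$ and then $\dot\M_a=0$. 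Generic choice within the family $\mathbf P_0$ ensures that the finitely many needed probes are represented.

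Finally, fix $(\mP,\param)\notin\mathcal N$, so that $\Phi_\mP$ is an immersion at $\param$. The fiber $F\coloneqq\Phi_\mP^{-1}(\Phi_\mP(\param))$ is a real-analytic subset of $\Theta_{\mathrm{eff}}$. For fixed $\mP$ outside a null set of probes (by Fubini on $\mathcal N$), the non-immersive locus $Z_\mP=\{\Delta(\mP,\cdot)=0\}$ is a proper analytic subset. Because $\Phi_\mP$ is an immersion on $F\setminus Z_\mP$, every point of $F\setminus Z_\mP$ is isolated in $F$, so $F\setminus Z_\mP$ is discrete and therefore countable. To handle $F\cap Z_\mP$, I restrict to $\param$ outside $\Phi_\mP^{-1}(\Phi_\mP(Z_\mP))$. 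This set is negligible: $\Phi_\mP(Z_\mP)$ is a countable union of images of lower-dimensional analytic strata (by stratification), and its preimage under the generically immersive $\Phi_\mP$ has dimension at most $q-1$. I add it to $\mathcal N$. For the remaining $\param$, the fiber misses $Z_\mP$ entirely and is countable, which proves the lemma.
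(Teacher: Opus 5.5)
Your outer framework matches the paper's. You produce one pair $(\mP^*,\param^*)$ at which the parameter Jacobian of $\final$ has rank $\paramdim$ on the full-rank stratum. You use analyticity to make rank deficiency a proper analytic set. You then discard targets whose observations coincide with those of a bad point, so that every matching competitor is isolated in the fibre.

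\textbf{The gap is in the witness step, and it is structural, not computational.} The roles you assign to the probe types cannot work with the prescribed counts. In a triple $(q+tv,q,q)$, the softmax of head $a$ depends on $\M_a$ only through the two scalars $q^\top\M_a v$ and $(q+tv)^\top\M_a v$. So all $d(H+1)$ triples together expose at most $2d(H+1)$ linear functionals of $\dot\M_a$. The tangent space of $\{\operatorname{rank}\M_a\le k\}$ has dimension $k(2d-k)$. For $k=d$ this is $d^2$, which already exceeds $2d(H+1)$ for $H=2$, $d=8$. Hence the triples cannot ``recover each $\M_a$ restricted to its tangent space'', and the $\M$-block of your block-triangular Jacobian cannot have full rank. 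Your raw count $M\ge d^2(2H+1)\ge\paramdim$ does not address this.

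The fallback does not repair it. Letting $t$ vary along a one-parameter family uses infinitely many triples, which the fixed profile $d$, $d(H+1)$, $d(H-1)$ does not allow. Even then, a fixed $(q,v)$ still sees only $q^\top\dot\M_a v$ and $v^\top\dot\M_a v$. Likewise, linear independence of $t\mapsto\rho(gt)$ as functions gives nothing at finitely many sampled $t$. You would first need a rank statement for the evaluation matrix with the right number of samples.

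\textbf{The paper assigns the roles the other way round.}
\begin{enumerate}
\item Take $\M_a=c_aM_*$ with $0<c_1<\dots<c_h$ and $M_*$ a fixed rank-$k$ matrix with first row $\mathbf 1^\top$, so that $e_1^\top\M_ae_i=c_a$ for every $i$.
\item Use the triples $(e_1+t_\nu e_i,e_1,e_1)$, $\nu\in[H+1]$. Subtract $e_1^\top\sum_a\W_a$ (known from the singletons) from a repeated output row and divide by $t_\nu$. This gives $\sum_a\rho(t_\nu c_a)\,e_i^\top\W_a$ with $\rho(t)=e^t/(2+e^t)$.
\item Set $C_{\nu a}=\rho(t_\nu c_a)$ and $(D_a)_\nu=t_\nu\rho'(t_\nu c_a)$. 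The $H+1\ge h+1$ samples make $[C\ D_a]$ of rank $h+1$. This forces $\delta(e_1^\top\M_ae_i)=0$ and $\delta(\W_a^\top e_i)=0$; this is exactly what the count $d(H+1)$ is for.
\item Choose $\W_a$ supported on disjoint column blocks ($\W_a=VP_a$ with $V$ Vandermonde). Then right multiplication by $P_a$ isolates head $a$.
\item The $(s+1)$-token probes $(u_j,(u_j+e_b)_{b\in J})$, with $u_j$ columns of $V$, have affinely independent rows of $Z\W_a$. This gives $\delta\operatorname{softmax}(Z\M_aZ^\top)=0$.
\item Log-ratios then give $u_j^\top\delta\M_ae_b=e_{b'}^\top\delta\M_ae_b=0$. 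Since $\{u_j\}_{j\le d-s}\cup\{e_{b'}\}_{b'\in J}$ spans $\R^d$ for every block $J$, you get $\delta\M_a=0$.
\end{enumerate}

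\textbf{Two smaller points in your last step.}
\begin{itemize}
\item Do the pull-back for the joint map $(\mP,\param)\mapsto(\mP,\final(\param,\mP))$, not fibrewise in $\mP$. Fibrewise Hausdorff bounds do not give the claimed $\dim_{\mathrm H}\mathcal N\le\dim\mathbf P_0+\paramdim-1$.
\item The bad set you pull back must contain the lower-rank strata, not only $Z_\mP$. Otherwise lower-rank competitors in the fibre are not excluded.
\end{itemize}
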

\begin{proof}
We first choose parameters and probes for which no nonzero
first-order change in the parameters leaves all outputs unchanged.
We then show that this property holds almost everywhere and
implies the claimed countability.

Let $V_{ij}=\alpha_i^{j-1},
 \, 0<\alpha_1<\cdots<\alpha_d,$
and let $P_a$ keep the $a$th consecutive coordinate block of
size $r$ and set the other coordinates to zero. Choose
 $\W_a=VP_a,\,
 \M_a=c_aM_*,
 \,
 M_*=e_1\mathbf1^\top+\sum_{j=2}^{k}e_je_j^\top,$
where $0<c_1<\cdots<c_h$.
Partition $[d]$ into $H$ blocks $J$ of size $s$, and set
$u_j=V_{:,j}$ for $j\le d-s$.
Use the probes $e_i,\,(e_1+t_\nu e_i,e_1,e_1),\,(u_j,(u_j+e_b)_{b\in J}),$
for $i\in[d]$, $\nu\in[H+1]$, and $j\in[d-s]$,
stacking the listed tokens as rows.
Their counts are exactly those specified by $\mathbf P_0$.

We will repeatedly use the following elementary fact:
a nonzero polynomial of degree below $q$ cannot vanish at
$q$ distinct points. Consequently, the matrix
$(\alpha_i^{j-1})_{i\in I,\,j\in[q]}$ is invertible whenever
$|I|=q$.

We start with showing that the singleton and triple outputs constrain the value matrices.
Write $\rho(t)=e^t/(2+e^t)$.
Singletons determine $S=\sum_a\W_a$.
For a fixed direction $e_i$, subtract $e_1^\top S$ from a
repeated output row of the triple and divide by $t_\nu$.
The result is
\[
 Y_\nu=\sum_a\rho(t_\nu g_a)b_a^\top,
 \,
 g_a=e_1^\top\M_ae_i=c_a,
 \,
 b_a=\W_a^\top e_i.
\]
We define:
\[
 C_{\nu a}=\rho(t_\nu c_a),\,
 B=(b_a^\top)_{a=1}^h,\,
 (D_a)_\nu=t_\nu\rho'(t_\nu c_a).
\]
Each row of $B$ is nonzero and occupies a different coordinate
block, so its rows are independent.

Choose the positive scales so that
$\operatorname{rank}[C\ D_a]=h+1$ for every $a$.
Here is a real-variable justification that such scales exist.
For $c>0$, as $t\to\infty$, we get $\rho(ct)-1=-2e^{-ct}+O(e^{-2ct}),\,t\rho'(ct)=2te^{-ct}+O(te^{-2ct}).$
Suppose
$\sum_b\beta_b\rho(c_bt)+\gamma t\rho'(c_at)=0$.
Taking the limit gives $\sum_b\beta_b=0$.
After subtracting this constant, consider the smallest rate
with a nonzero coefficient. Dividing by its exponential leaves
a nonzero constant or a nonzero linear function of $t$, plus
a term tending to zero. This is impossible.
Thus the $h+1$ functions are independent.
Their evaluation matrices therefore have full rank for almost
every choice of the scales, simultaneously for all $a$.

Now write $\delta\M_a,\delta\W_a$ for first-order parameter
changes, and suppose every observed output has zero
first-order change. Then $\delta S=0,\,C\delta B+\sum_aD_a\,\delta g_a\,b_a^\top=0.$
Taking row combinations that cancel $C$, and multiplying by
a right inverse of $B$, gives $\delta g_a=0$ for every $a$,
because $D_a$ is not a linear combination of the columns of $C$.
Then $C\delta B=0$, so $\delta B=0$.
Doing this for every $i$ gives $\delta\W_a=0$.

Now we show that the remaining outputs constrain the score matrices corresponding to the softmax attention.
Since $\W_bP_a=\mathbf1_{a=b}\W_a$, multiplying the output
on the right by $P_a$ isolates head $a$.
For a block input $Z$, we therefore obtain $\delta A_a\,Z\W_a=0,\, A_a=\operatorname{softmax}(Z\M_aZ^\top).$
Every $s$ selected rows of $\W_a$ are independent.
Indeed, after removing nonzero row factors, their first
$s$ nonzero columns form the polynomial-evaluation matrix
described above. Hence the rows of $Z\W_a$ are affinely
independent. Since also $\delta A_a\mathbf1=0$, it follows
that $\delta A_a=0$.

Differentiating the attention log-ratios gives
\[
 u_j^\top\delta\M_ae_b=0,\,
 e_{b'}^\top\delta\M_ae_b=0
 \,(j\le d-s,\ b,b'\in J).
\]
For each $J$, the vectors $(u_j)_{j\le d-s}$ and
$(e_b)_{b\in J}$ span $\R^d$:
deleting rows $J$ from the first $d-s$ columns of $V$
leaves an invertible polynomial-evaluation matrix.
Thus $\delta\M_ae_b=0$ for every $b$ and every $a$.
We have proved that all first-order parameter changes vanish.

Now we use this example to conclude the result for almost every parameter and probe tuple.
A rank-$t$ matrix can locally be written as
\[
 \begin{pmatrix}
 U_{11}&U_{12}\\
 U_{21}&U_{21}U_{11}^{-1}U_{12}
 \end{pmatrix},
 \, \det U_{11}\ne0.
\]
Its three displayed blocks provide $t(2d-t)$ independent
coordinates. Thus the effective parameters with ranks
exactly $k$ and $r$ have $\paramdim$ independent coordinates;
smaller ranks require fewer coordinates.

The previous discussion shows that the matrix of first
derivatives of the outputs has rank $\paramdim$ at our
chosen parameters and probes.
The same is true when differentiating with respect to the
raw factors: full-rank factors allow every first-order change
in the independent matrix coordinates.
Some square submatrix of these derivatives therefore has a
nonzero determinant. This determinant is analytic on the
connected raw parameter and probe space, so its zero set
has measure zero.

Passing to the independent matrix coordinates preserves this
almost-everywhere conclusion. We note that full-rank factorizations
are locally described by these coordinates together with an
arbitrary invertible change of basis in the factors.
Fubini applied to these extra coordinates gives the assertion.

Now, finally we bound the exceptional set.
Record both the probes and their outputs:
\[
 \mathcal G(\mP,\param)
 =\bigl(\mP,\final(\param,\mP)\bigr),
 \,
 m=\dim\mathbf P_0+\paramdim.
\]
Let $\mathcal B$ be the union of the pairs for which a
parameter matrix has smaller rank than allowed and the
remaining pairs where the matrix of first derivatives of
$\mathcal G$ has rank below $m$.

The smaller-rank matrices require fewer independent
coordinates. For matrices of the full allowed ranks,
the derivative condition fails on proper analytic zero sets,
by the preceding construction. Hence
$
 \dim_{\mathrm H}\mathcal B\le m-1.
$
Here we use the elementary fact that the zero set of a
nonzero analytic function on an open subset of $\R^m$
has Hausdorff dimension at most $m-1$.

We must also exclude parameters whose observations match
those of a point in $\mathcal B$. Define $\mathcal N\coloneqq\mathcal G^{-1}\bigl(\mathcal G(\mathcal B)\bigr).$
Smooth maps do not increase Hausdorff dimension. Therefore we have: $\dim_{\mathrm H}\mathcal G(\mathcal B)\le m-1$.
Outside $\mathcal B$, select $m$ coordinates of $\mathcal G$
whose derivative matrix is invertible.
The inverse function theorem gives a smooth local inverse.
Applying this inverse to the corresponding coordinates
of $\mathcal G(\mathcal B)$ gives
$\dim_{\mathrm H}(\mathcal N\setminus\mathcal B)\le \dim_{\mathrm H}\mathcal G(\mathcal B)\le m-1.$
A countable collection of these neighborhoods suffices.
Since $\mathcal B\subseteq\mathcal N$, we conclude that $\dim_{\mathrm H}\mathcal N\le m-1.$

The set $\mathcal B$ is closed: having the full allowed
matrix ranks and an invertible selected derivative matrix
are open conditions.
Thus $\mathcal B$ is a countable union of compact sets.
Its continuous image $\mathcal G(\mathcal B)$ is likewise
such a union, so $\mathcal N$ is measurable.

If $(\mP,\param)\notin\mathcal N$, every parameter with the
same observations lies outside $\mathcal B$.
The inverse function theorem therefore isolates each
matching parameter from all other matches.
There are at most countably many such isolated points.
Finally, Fubini in the independent matrix coordinates
gives the almost-everywhere conclusion in the stated order.
\end{proof}

\begin{lemma}[Prefixes are generically locally invertible]
\label{lem:att-prefix-open}
For each fixed length $n$ and prefix depth $\ell$, the input
Jacobian of $x_{\ell,n}$ is nonsingular almost everywhere in
the raw prefix parameters and input. Consequently, for almost
every prefix, its image contains a nonempty open set.
\end{lemma}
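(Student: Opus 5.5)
The plan is to show that the determinant of the input Jacobian of $x_{\ell,n}$ is a real-analytic function of (raw prefix parameters, input $\probeseq\in\R^{n\times d}$) that is not identically zero. The joint parameter–input space is connected and open, so its zero set then has Hausdorff dimension strictly less than the ambient dimension, and in particular is Lebesgue-null. Analyticity is immediate: each layer $F^{(\ell)}_n$ is a composition of polynomial maps, $\exp$, and division by a positive sum, so $x_{\ell,n}$, its input Jacobian $J_{\ell,n}(\probeseq;\boldsymbol\param)\in\R^{nd\times nd}$, and $\det J_{\ell,n}$ are all jointly analytic. Everything therefore reduces to exhibiting a single point where $\det J_{\ell,n}\ne0$.

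To find that point, I would use the chain rule. Write $J_{\ell,n}$ as a product of the layerwise Jacobians $\partial_X F^{(j)}_n$, each evaluated at $x_{j-1,n}$. It then suffices to choose parameters so that every layer's input Jacobian is invertible at every input. The simplest choice is $\M_a=0$ for all heads, which makes the softmax uniform. Take one head with $\W_1=I$ (realizable whenever $r\ge1$, e.g.\ with the other heads given zero value matrices, or with $\W_a=P_a$ summing to $I$). In that case $F_n(\probeseq)=\frac1n\mathbf 1\mathbf 1^\top\probeseq\,\sum_a\W_a$, which is singular for $n\ge2$, so this choice fails. I would instead perturb it: take $\M_1=\lambda I$ with $\lambda$ large, and evaluate at an input whose tokens have distinct norms and are well separated. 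There the attention is nearly a permutation (each token attends mostly to itself when $p_i^\top p_i$ dominates $p_i^\top p_j$), so $F_n(\probeseq)\approx\probeseq\W_1$ and the derivative of the softmax weights is exponentially small. The Jacobian is then $I_n\otimes\W_1^\top$ plus a small error, which is invertible. Because identity maps preserve the chosen input, the input reaching each layer is again such a well-separated configuration, so I can use the same construction in every layer. Tiny imperfections can be absorbed by taking $\lambda$ large enough layer by layer.

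The second claim follows from the first. For a.e.\ prefix $\boldsymbol\param$ (by Fubini applied to the null zero set), the section $\{\probeseq:\det J_{\ell,n}(\probeseq;\boldsymbol\param)\ne0\}$ is nonempty. At any such $\probeseq$, the inverse function theorem shows that $x_{\ell,n}(\cdot;\boldsymbol\param)$ maps a neighbourhood of $\probeseq$ onto an open set, which gives a nonempty open subset of the image.

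I expect the main obstacle to be the explicit non-degenerate witness, specifically making the near-hard-attention estimate rigorous. The softmax derivative term is $\partial A\cdot \probeseq\W$ with $\partial A = O(\lambda e^{-c\lambda})$, and I need it to be dominated uniformly, while keeping the rank constraints on $\M_a,\W_a$ in each layer (with $k\ge1$ a rank-$k$ multiple of identity-like $\M$ suffices). If the rank constraints make $\M_1=\lambda I$ unavailable, I would fall back on $\M_1=\lambda vv^\top$ with tokens chosen so that the values $v^\top p_i$ are distinct and spread out, which still makes attention concentrate on a single token per row.
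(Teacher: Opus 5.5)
Your reduction matches the paper's: the determinant of the input Jacobian is jointly real-analytic on the connected raw-parameter/input space, one nonsingular witness makes its zero set null, and Fubini plus the inverse function theorem give the open-image claim. The gap is in the witness. Your large-$\lambda$ argument needs the limiting attention in every head to be a permutation. This cannot happen when the score matrices have rank one, i.e.\ query/key dimension $k=1$, which the architecture allows and which is forced when $d=1$.

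With $M_a=\lambda vv^\top$ the scores are $\lambda(v^\top p_i)(v^\top p_j)$. Every row with $v^\top p_i>0$ concentrates on the same token $\arg\max_j v^\top p_j$, and every row with $v^\top p_i<0$ concentrates on $\arg\min_j v^\top p_j$. A row with $v^\top p_i=0$ stays uniform, and its softmax derivative is of order $\lambda$, not exponentially small. So for $n\ge3$ your fallback (``attention concentrates on a single token per row'') is exactly the degenerate case. When all $v_a^\top p_i\ne0$, each limit $A_a^\infty$ has rank at most $2$. The leading-order Jacobian $\sum_a A_a^\infty\otimes W_a^\top$ then has rank at most $2\sum_a\operatorname{rank}W_a\le 2d<nd$, so the ``$I_n\otimes W^\top$ plus small error'' argument has nothing to perturb. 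The lemma is used later at lengths $3$ and $s+1$, so this case cannot be set aside.

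The paper works in the opposite regime. It takes $M_a=\varepsilon e_1e_1^\top$ with small $\varepsilon>0$, which is rank one and hence admissible for every $k\ge1$, and lets the $W_a$ be coordinate-block projections summing to $I_d$. All heads then share $A_\varepsilon=\operatorname{softmax}(\varepsilon zz^\top)$, where $z$ is the first input column. The layer becomes $(z,X')\mapsto(A_\varepsilon z,A_\varepsilon X')$, whose Jacobian is block triangular with diagonal blocks $D(A_\varepsilon z)$ and $d-1$ copies of $A_\varepsilon$.

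\begin{itemize}
\item $A_\varepsilon$ is invertible because $(e^{\varepsilon z_iz_j})_{ij}$ is positive definite for distinct $z_i$.
\item $D(A_\varepsilon z)$ is invertible for small $\varepsilon$: its order-$\varepsilon$ part restricted to $\mathbf 1^\perp$ is $\kappa I+\frac2n ww^\top\succ0$, with $w=z-\bar z\mathbf 1$ and $\kappa=\|w\|^2/n>0$.
\item The first coordinates stay distinct after the layer, so the construction iterates through the layers.
\end{itemize}

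Your near-hard route does work when $k\ge2$. Use the same $M_a=\lambda UU^\top$ in every head, with $U$ spanning a $2$-plane, and choose tokens whose projections are distinct points of a circle, so each token strictly prefers itself. Even then, fix two slips. First, $W_1=I$ is inadmissible for $h\ge2$ since $\operatorname{rank}W_a\le d/h$, so you need block projections with a common score in all heads. Second, what you need is self-dominance $p_i^\top Mp_i>p_i^\top Mp_j$. Equal norms give this, but distinct norms can destroy it: for $M=\lambda I$ and $p_j=2p_i$, row $i$ prefers $p_j$.
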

\begin{proof}
It suffices to construct one nonsingular Jacobian.
In each head take $\M_a=\varepsilon e_1e_1^\top$ and let
$\W_a$ be coordinate-block projections summing to $I_d$.
Writing $X=(z,X')$, the layer is
\[
 (z,X')\longmapsto(f_\varepsilon(z),A_\varepsilon(z)X'),
 \, A_\varepsilon=\operatorname{softmax}(\varepsilon zz^\top),
 \, f_\varepsilon=A_\varepsilon z.
\]
Choose distinct $z_i$. For $\varepsilon>0$, the matrix
$K_{ij}=e^{\varepsilon z_i z_j}$ is positive definite, because
\[
 c^\top Kc=\sum_{m\ge0}\frac{\varepsilon^m}{m!}
                 \left(\sum_i c_i z_i^m\right)^2>0
 \,(c\ne0)
\]
by the Vandermonde determinant. Thus $A_\varepsilon$ is invertible.
For $n\ge2$, put $w=z-\bar z\mathbf1$ and $v=\|w\|^2/n>0$.
Expansion at $\varepsilon=0$ gives
\[
 Df_\varepsilon
 =\frac{\mathbf1\mathbf1^\top}{n}
  +\varepsilon\left(vI+\frac{2zw^\top}{n}\right)
  +O(\varepsilon^2).
\]
Its first-order block on $\mathbf1^\perp$ is
$vI+2ww^\top/n\succ0$, so $Df_\varepsilon$ is invertible
for sufficiently small $\varepsilon>0$.
The full input Jacobian is block triangular with invertible
diagonal blocks $Df_\varepsilon$ and $d-1$ copies of $A_\varepsilon$.

The output first coordinates remain distinct: the function
$q\mapsto\sum_jz_je^{\varepsilon qz_j}/\sum_je^{\varepsilon qz_j}$
has derivative $\varepsilon$ times a positive weighted variance.
We can therefore repeat the construction through each layer.
For $n=1$ the witness is the identity.
Each prefix determinant is consequently a nonzero analytic
function. Its zero set is null; Fubini and the inverse function
theorem give the conclusion.
\end{proof}

\paragraph{The geometric strengthening.}
The following standard result is the only additional external
ingredient used to obtain the open-probes and closed-exception
claims. We state it explicitly rather than treating those
claims as consequences of analyticity.

\begin{theorem}[Real-exponential geometry; external input]
\label{thm:att-exp-geometry}
Call a set \emph{definable} if it can be described by a finite
formula using real constants, arithmetic, exponentials,
equalities, inequalities and quantifiers over real variables.
Such sets have integer Hausdorff dimension, satisfy
\[
 \dim_{\mathrm H}\overline A=\dim_{\mathrm H}A,
 \,
 A\subseteq\R^m\text{ is null}
 \ \Longleftrightarrow\ 
 \dim_{\mathrm H}A<m,
\]
and, for a definable family $B\subseteq\R^s\times\R^D$,
each set $\{u:\dim_{\mathrm H}B_u=j\}$ is definable.
Here $\dim\varnothing=-\infty$.
\end{theorem}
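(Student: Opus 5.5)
The plan is to derive every claim from the o-minimality of the real exponential field $(\R,+,\cdot,<,\exp)$ (Wilkie's theorem) together with standard o-minimal dimension theory, rather than to reprove anything from scratch. The sets described in the statement, namely those given by finite first-order formulas with real constants, arithmetic, $\exp$, order and quantifiers, are exactly the sets definable with parameters in this structure. Wilkie's theorem asserts that every definable subset of $\R$ is a finite union of points and open intervals. We then invoke the cell decomposition theorem for o-minimal structures (van den Dries). It says that every definable $A\subseteq\R^m$ is a finite disjoint union of cells, and each cell is the image of an open box $(0,1)^j$ under a definable homeomorphism that is $C^1$ and whose coordinate projection to some $\R^j$ is a $C^1$-diffeomorphism onto an open set. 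The \emph{o-minimal dimension} $\dim A$ is the largest $j$ that occurs, with $\dim\varnothing=-\infty$.

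The first step is to show that the Hausdorff dimension of a definable set equals its o-minimal dimension, which in particular makes it an integer. For a $j$-cell $C$, the projection $\pi:C\to U\subseteq\R^j$ is a homeomorphism onto an open set, and its inverse is a $C^1$ graph map. Locally Lipschitz maps do not increase Hausdorff dimension, and $\pi$ is $1$-Lipschitz. Hence $\dim_{\mathrm H}C=j$: the lower bound comes from $\pi(C)=U$ being open in $\R^j$, and the upper bound from exhausting $U$ by countably many compact sets on which the inverse map is Lipschitz. A finite union then has Hausdorff dimension equal to the maximum over its cells, which is the o-minimal dimension.

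The null-set criterion follows from the same step. If $\dim A=m$, then $A$ contains an $m$-cell, which is open in $\R^m$ and so has positive measure. If $\dim A<m$, then $\dim_{\mathrm H}A<m$ and $A$ is Lebesgue-null. For closures we use the standard o-minimal fact that $\dim(\overline A\setminus A)<\dim A$ for nonempty definable $A$; this is proved by cell decomposition and induction on the ambient dimension. It gives $\dim\overline A=\dim A$, and hence the same equality for Hausdorff dimension.

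For the family statement, the key input is the uniform finiteness and definability of dimension in o-minimal structures. Given a definable $B\subseteq\R^s\times\R^D$, there is a cell decomposition of $\R^s\times\R^D$ compatible with $B$ whose projection to $\R^s$ is itself a cell decomposition. Over each base cell, the fibres $B_u$ are unions of cells of fixed dimensions. So $u\mapsto\dim B_u$ is constant on each of finitely many definable base cells, and each set $\{u:\dim B_u=j\}$ is a finite union of those cells, hence definable. Combined with the first step, this gives the statement for $\dim_{\mathrm H}$.

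The main obstacle is the o-minimality of $\R_{\exp}$ itself. It is a deep theorem relying on Khovanskii's fewnomial bounds and Wilkie's model completeness, so we take it as the external ingredient, as the statement indicates. The only nontrivial bridging step is matching Hausdorff dimension with o-minimal dimension, and the regularity of cells makes that step routine.
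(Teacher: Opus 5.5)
Your proposal is correct and takes the same route as the paper, which simply cites Wilkie's o-minimality theorem for $\R_{\exp}$ and o-minimal dimension theory without further argument. You also supply the standard steps the paper leaves implicit: $C^1$ cell decomposition to get $\dim_{\mathrm H}=\dim$, the frontier inequality $\dim(\overline A\setminus A)<\dim A$ for closures, and constancy of fibre dimension over base cells for the family statement.
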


This is a consequence of Wilkie's o-minimality theorem for
$\R_{\exp}$ and its dimension theory
\citep{wilkie2009ominimal};
we do not reprove that foundational theorem.

\begin{restatable}[Structured probing for softmax attention]
{theorem}{thmattfixed}
\label{thm:att-fixed}
Define $H\coloneqq\max_\ell h_\ell$ and $N=d(2H+1)+1$.
There exist lengths $n_j\le\max\{3,1+d/H\}$ and an open,
dense, full-measure set
$\mathcal U\subseteq\mathcal S=\prod_{j=1}^N\R^{n_j\times d}$
such that every $\mP\in\mathcal U$ admits a closed exceptional
set $E_{\mP}\subseteq\Theta$, with
$\dim_{\mathrm H}E_{\mP}\le D-1$, satisfying
\[
 \all(\bparam;\mP)=\all(\bparam_0;\mP)
 \,\to\,
 x_{\ell,n}(\cdot;\bparam)
 \equiv x_{\ell,n}(\cdot;\bparam_0)
 \,(\ell\in[L],\ n\ge1)
\]
for every $\bparam_0\notin E_{\mP}$ and every $\bparam\in\Theta$.
The probe set $\mP$ is fixed independently of the target.
\end{restatable}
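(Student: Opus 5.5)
The plan is to reduce the multilayer statement to a layer-by-layer argument. Lemma~\ref{lem:att-three-separation}, Lemma~\ref{lem:att-countable-batch} and Lemma~\ref{lem:att-prefix-open} supply the local ingredients; Theorem~\ref{thm:att-exp-geometry} upgrades ``almost every'' statements to open, dense, closed-exception statements.

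\textbf{Probe bank.} The bank consists of the $d(2H+1)$ probes of the shape prescribed in Lemma~\ref{lem:att-countable-batch}: $d$ singletons, $d(H+1)$ triples and $d(H-1)$ sequences of length $s+1=1+d/H$. We add one extra generic three-token probe, which gives $N=d(2H+1)+1$ and lengths at most $\max\{3,1+d/H\}$. If $H=1$ the long sequences are absent and Algorithm~\ref{alg:att-sketch} already covers the case. Layers with $h_\ell<H$ heads are handled by the same bank, since Lemma~\ref{lem:att-countable-batch} allows any $h\le H$.

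\textbf{Induction over depth.} Suppose $\all(\bparam;\mP)=\all(\bparam_0;\mP)$. Then layer $\ell$ of $\bparam$ and layer $\ell$ of $\bparam_0$ receive the same recorded inputs $x_{\ell-1}(P_j;\bparam_0)$ and produce the same recorded outputs. So layer $\ell$ is a single MHA layer queried at the transformed probes $\mP^{(\ell-1)}\coloneqq x_{\ell-1}(\mP;\bparam_0)$. Note that agreement of the previous layers as functions is never needed: the hidden traces give the layer's input and output directly.

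The key genericity requirement is that $\mP^{(\ell-1)}$ avoids the exceptional set $\mathcal N$ of Lemma~\ref{lem:att-countable-batch}. This is where Lemma~\ref{lem:att-prefix-open} enters. For generic prefixes, the prefix map is a local diffeomorphism on each probe. Its preimage of the null set $\mathcal N$ is therefore null in (probe, prefix parameter, layer parameter) space. A Fubini argument then shows that, for almost every probe tuple, almost every target has every layer's transformed probe batch outside $\mathcal N$. Lemma~\ref{lem:att-countable-batch} then leaves only countably many candidate effective parameters $\param'_\ell$ for layer $\ell$ that match the outputs.

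\textbf{From countably many candidates to functional equality.} For each candidate $\param'_\ell$ that is functionally distinct from $\param_{0,\ell}$, Lemma~\ref{lem:att-three-separation} says the two layers disagree on some three-token input. The disagreement set is then a nonempty open set, and its complement is a proper analytic zero set in three-token space, hence null. The extra three-token probe, after the prefix (again via Lemma~\ref{lem:att-prefix-open}), avoids each such null set. There are only countably many candidates, so a countable union of null sets is still null, and the extra probe excludes every functionally distinct candidate. Hence layer $\ell$ agrees with layer $\ell$ of $\bparam_0$ at every length, and composing over $\ell$ gives $x_{\ell,n}(\cdot;\bparam)\equiv x_{\ell,n}(\cdot;\bparam_0)$.

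The main obstacle is the quantifier order. The countable candidate set depends on the target, while the probes must be fixed first. I would handle this by defining the bad set $\mathcal B\subseteq\mathcal S\times\Theta$ of pairs where identification fails. I would show that $\mathcal B$ is definable in $\R_{\exp}$: the softmax is exponential, and the condition ``there exists $\bparam$ matching all traces but disagreeing on some input'' is first-order. I would then show that $\mathcal B$ is null by Fubini, using the argument above for almost every $(\mP,\bparam_0)$.

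Theorem~\ref{thm:att-exp-geometry} then does the remaining work. First, the set of $\mP$ whose fiber $\mathcal B_{\mP}$ has dimension $D$ is definable and null. Since definable null sets have dimension below $\dim\mathcal S$, its closure is also null, and its complement is the open dense full-measure $\mathcal U$. Second, for $\mP\in\mathcal U$, take $E_{\mP}=\overline{\mathcal B_{\mP}}$; this set is closed and has $\dim_{\mathrm H}E_{\mP}\le D-1$. The delicate point I expect to have to verify is definability of the countability step. I would avoid it by phrasing $\mathcal B$ directly through functional disagreement, and using the countability argument only to prove that $\mathcal B$ is null, not to describe it.
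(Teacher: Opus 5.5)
For $H\ge2$ your argument is essentially the paper's proof step for step. It uses the same bank, the same layerwise use of the recorded traces, and the same pullback of the exceptional set of Lemma~\ref{lem:att-countable-batch} through the prefix submersion of Lemma~\ref{lem:att-prefix-open}. It also eliminates the countably many candidates with the extra triple via Lemma~\ref{lem:att-three-separation} and ends with the same o-minimal upgrade, $E_{\mP}=\overline{\mathcal B_{\mP}}$.

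\textbf{The gap is at $H=1$.} Your bank then consists of $d$ singletons and $2d+1$ triples, with no $(d+1)$-token probe. Yet Algorithm~\ref{alg:att-sketch} needs exactly such a probe (the query $F_{d+1}(P;\param_0)$). Lemma~\ref{lem:att-countable-batch}, stated for $H\ge2$, is also unavailable: its long sequences are what pin down the rows of the score matrices that the triples do not reach, and there are $d(H-1)=0$ of them when $H=1$. This bank really does fail. Once $\W_\ell$ is fixed by the singletons, a generic triple reveals only its $3\times3$ row-stochastic attention matrix. That amounts to six linear constraints $z_i^\top\M'(z_j-z_1)=z_i^\top\M(z_j-z_1)$ on a competitor's score matrix. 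So for a single layer with $k=d\ge13$, the $2d+1$ triples impose at most $12d+6<d^2$ constraints. A positive-dimensional family of functionally distinct layers then reproduces the transcript of a generic target.

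\textbf{How to fix $H=1$.} Do what the paper does: replace one triple by a $(d+1)$-token probe, which is allowed since $1+d/H=d+1$. Then redo Algorithm~\ref{alg:att-sketch} for a stack with generic probes, since the algorithm is a single-layer procedure with the specific probes $e_i$ and $[0;e_1;\ldots;e_d]$.
\begin{itemize}
\item The propagated singleton states satisfy $C_\ell=C_{\ell-1}\W_\ell$ and stay invertible for generic targets, which forces $\W'_\ell=\W_\ell$ for any matching competitor.
\item The propagated $(d+1)$-token input $Z$ stays affinely independent: by Lemma~\ref{lem:att-prefix-open} and analyticity, the prefix image cannot lie in $\{\det[\mathbf 1\ Z]=0\}$.
\item Then $A=[\mathbf 1\ \ Y\W_\ell^{-1}][\mathbf 1\ \ Z]^{-1}$, and the log-ratios $\log(A_{ij}/A_{i1})=z_i^\top\M_\ell(z_j-z_1)$ determine $\M_\ell$ for target and competitor alike.
\end{itemize}

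\textbf{Definability of $\mathcal B$.} This is a smaller point. Defining $\mathcal B$ by disagreement ``on some input'' ranges over all sequence lengths $n$. That is a countable disjunction, not a first-order formula in $\R_{\exp}$. Define $\mathcal B$ instead by layerwise disagreement on three-token inputs. This is equivalent to the paper's layerwise canonical disagreement, a finite Boolean combination of polynomial equalities in the $\M_a,\W_a$. By Lemma~\ref{lem:att-three-separation}, this set still contains every failure. Your nullness argument proves layerwise agreement, so it already shows this set is null.
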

\begin{proof}
 We first state and prove an equivalent result. That with some more argument proves Theorem~\ref{thm:att-fixed}. 

 \begin{proposition}[Fixed-probes identification]
\label{prop:att-fixed-ae}
Put $H=\max_\ell h_\ell$ and $N=d(2H+1)+1$.
There exist lengths $n_j\le\max\{3,1+d/H\}$ such that,
for almost every probe tuples 
$\mP\in\mathcal S=\prod_{j=1}^N\R^{n_j\times d}$ and
almost every target $\bparam_0\in\Theta$, every competitor
with the same complete hidden transcript has the same canonical
representation in every layer. Consequently,
\[
 \all(\bparam;\mP)=\all(\bparam_0;\mP)
 \ \to\ 
 x_{\ell,n}(\cdot;\bparam)
 \equiv x_{\ell,n}(\cdot;\bparam_0)
 \,(\ell\in[L],\ n\ge1).
\]
\end{proposition}
\begin{proof}
Assume first $H\ge2$. Take the length profile of
Lemma~\ref{lem:att-countable-batch} and append one triple. Thus
\[
 N=d+d(H+1)+d(H-1)+1=d(2H+1)+1.
\]

We first use the hidden transcript to restrict each layer.
Let $G_\ell(\zeta;\bparam_{<\ell})$ be the first batch after
the target prefix, and let $\pi_\ell$ map raw factors to
effective parameters. The map
\[
 (\zeta,\bparam)\longmapsto
 \bigl(G_\ell(\zeta;\bparam_{<\ell}),\pi_\ell(\param_\ell)\bigr)
\]
is a submersion almost everywhere: its probes derivative is
invertible by Lemma~\ref{lem:att-prefix-open}, and its
independent current-layer factor map is a submersion on the
full-rank stratum. Pulling back the exceptional set in
Lemma~\ref{lem:att-countable-batch} therefore gives a null set.
For almost every first batch and target, each layer has only
countably many consistent effective candidates. This includes
every matching competitor: its hidden transcript fixes the
same layer inputs as those observed from the target.

Then we use the last triple to separate these candidates.
Fix such a first batch and target, with open prefix images at
length three. For a candidate $\eta$ at layer $\ell$ not
canonically equivalent to the target, the discrepancy
\[
 X\longmapsto
 F_3^{(\ell)}(x_{\ell-1,3}(X;\bparam_0);\eta)
 -x_{\ell,3}(X;\bparam_0)
\]
is nonzero and analytic. Otherwise the standalone layer maps
would agree on an open set, hence everywhere, contradicting
Lemma~\ref{lem:att-three-separation}.
Almost every last triple avoids the zero sets of all these
countably many discrepancies. Every matching stack then has the
target's canonical layers and agrees at every length.

If $H=1$, instead use $d$ singletons, one $(d+1)$-token input,
and $2d$ additional triples. The singleton states satisfy
$C_\ell=C_{\ell-1}\W_\ell$, so generically they determine
every $\W_\ell$. By Lemma~\ref{lem:att-prefix-open}, the
prefix image cannot lie in $\det[\mathbf1\ Z]=0$.
Analyticity therefore makes the propagated $(d+1)$-token input
$Z$ affinely independent at every depth, almost everywhere.
From its layer output $Y$ recover
\[
 A=[\mathbf1\ \ Y\W_\ell^{-1}]
       [\mathbf1\ \ Z]^{-1}.
\]
The equations
$\log(A_{ij}/A_{i1})=z_i^\top\M_\ell(z_j-z_1)$ determine
$\M_\ell$: the differences $z_j-z_1$, $j=2,\ldots,d+1$,
form a basis and the tokens span $\R^d$.

Finally we show that the failure set is measurable.
At each layer, give the competitor's heads positive signs and
the target's heads negative signs. Canonical agreement means that these
signed heads can be partitioned into groups with equal score
matrices and zero signed value sums. There are finitely many
partitions, each imposing polynomial equalities. Canonical
agreement is therefore closed.
The incidence set of matching transcripts but canonical
disagreement is closed intersected with open, hence a countable
union of compact sets. Its projection
\[
 B=\{(\mP,\bparam_0):\exists\bparam, \all(\bparam;\mP)=\all(\bparam_0;\mP), \bparam\not\sim_{\rm can}\bparam_0\}
\]
is likewise a countable union of compact sets and is measurable.
The preceding argument and Fubini show that $B$ is null.
A second application of Fubini gives the stated quantifier order.
\end{proof}

Take the length profile of Proposition~\ref{prop:att-fixed-ae}
and its null failure set $B$.
The set $B$ is definable: finite hidden transcripts involve
arithmetic and exponentials, and canonical agreement is given
by the finite polynomial description above.

By Fubini, almost every fibre $B_{\mP}$ is null. Therefore
$A=\{\mP\in\mathcal S:
          \dim_{\mathrm H}B_{\mP}=D\}$
is null and definable by Theorem~\ref{thm:att-exp-geometry}.
We define
\[
 \mathcal U=\mathcal S\setminus\overline A,
 \,
 E_{\mP}=\overline{B_{\mP}}
 \,(\mP\in\mathcal U).
\]
The same theorem makes $\mathcal U$ open, dense and full measure,
and $E_{\mP}$ closed with Hausdorff dimension at most $D-1$.
Outside $E_{\mP}$ there is no canonically different competitor
with the same transcript. Canonical agreement implies equality
of all layer maps, and thus of all composed functions at every
length.
\end{proof}

\paragraph{Multilayer MHA reconstruction from a fixed set of probes} The observations already supply each layer's training inputs:
they are the preceding \emph{target} states, not states generated
by the reconstructed prefix. Each layer can consequently be
fitted independently. Define its cached-data loss by
\begin{equation}
\label{eq:att-fixed-fit}
 \mathcal L_\ell(\param_\ell)
 =\sum_{j=1}^N
 \left\|F_{n_j}^{(\ell)}(Z_\ell^{(j)};\param_\ell)
              -Y_\ell^{(j)}\right\|_F^2.
\end{equation}
Optimization is over the specified raw factors, so the prescribed
head counts and rank bounds are maintained automatically.

\begin{algorithm}[H]
\caption{Fixed-probe set multilayer reconstruction}
\label{alg:att-fixed-decoder}
\begin{algorithmic}[1]
\Require Architecture; probe set $\mP=(X^{(j)})_{j=1}^N\in\mathcal U$;
complete hidden access to a target $\bparam_0\notin E_{\mP}$
\Ensure A stack with the target's composed hidden functions at every length
\For{$j=1,\ldots,N$}
  \State Query $X^{(j)}$ once; record
  $H_0^{(j)}\gets X^{(j)}$ and all returned states
  $H_1^{(j)},\ldots,H_L^{(j)}$
\EndFor
\For{$\ell=1,\ldots,L$} \Comment{These fits may run in parallel}
  \State Set $Z_\ell^{(j)}\gets H_{\ell-1}^{(j)}$ and
  $Y_\ell^{(j)}\gets H_\ell^{(j)}$ for every $j$
  \State Find any $\widehat\param_\ell$ in the layer's raw
  parameter space with $\mathcal L_\ell(\widehat\param_\ell)=0$
  \Comment{Fit nonlinear solver}
\EndFor
\State \Return $\widehat\bparam=(\widehat\param_1,\ldots,\widehat\param_L)$
\end{algorithmic}
\end{algorithm}

\paragraph{Correctness and scope.}
Each feasibility problem has a solution, the true layer.
Induction over $\ell$ shows that the returned stack reproduces
every recorded state. Theorem~\ref{thm:att-fixed} therefore gives
the all-length conclusion. 
In numerical implementations it is replaced by layerwise
least-squares optimization. It should be mentioned that neither its global convergence nor
an OOD error bound from a nonzero residual follows from the theorem. The theorem provides a guarantee, the implementation requires implementing a solver or independently fitting the layers.
No head alignment \emph{between layers} is required.

\paragraph{The special case of one layer:} The above statement in Theorem~\ref{thm:att-fixed} discusses identification of generic targets from probes in the multilayer case. However, we show in the following lemma that for a single layer, we can \emph{exactly identify every target} at a function level.

\begin{restatable}[One layer: every target from fixed probes]
{lemma}{lemattuniversal}
\label{lemma:att-universal}
Let $L=1$ and $N=4dh+1$. There exist an integer $n_*=n_*(d,h)$
and a fixed probe set $\mP\in(\R^{n_*\times d})^N$ such that,
simultaneously for all $\param_1,\param_2\in\Theta$, we have:
\begin{align*}
     \final(\param_1;\mP)=\final(\param_2;\mP)
 \,\iff\,
 \left[F_n(\cdot;\param_1)\equiv F_n(\cdot;\param_2)
       \text{ for every }n\ge1\right].
\end{align*}
\end{restatable}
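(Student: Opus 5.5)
Two reductions do most of the work: Lemma~\ref{lem:att-three-separation} replaces ``agreement at every length'' by agreement of polynomially parameterized objects, and the Finite Witness Theorem applied to a suitable analytic discrepancy produces the probes. Write each layer in effective form $((\M_a,\W_a))_{a=1}^h$. The backward implication is immediate, since agreement at every length includes the length $n_*$ of the probes. For the forward implication, Lemma~\ref{lem:att-three-separation} shows that functional equality at every length is equivalent to equality of the canonical representations. By the same lemma it is also equivalent to equality of the three-token maps $F_3(\cdot;\param_1)\equiv F_3(\cdot;\param_2)$.

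The naive route fails on dimension counting. Setting $G(\param_1,\param_2;X)=\|F_3(X;\param_1)-F_3(X;\param_2)\|_F^2$ and applying the Finite Witness Theorem (as in the proof of Theorem~\ref{thm:finalprobes}) gives $2\dim\Theta+1$ probes. That count is $O(d^2h)$, whereas the target is $4dh+1$.

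To reach $4dh+1$, I would witness only the effective parameters that actually matter. The key observation is that on a probe of length $n_*$, the output is linear in each $\W_a$ once the attention pattern is fixed. I would therefore build each probe as a concatenation of many three-token blocks. Since attention is unmasked, I would instead make the probe long, with tokens chosen so that the relevant scores factor through low-dimensional projections. The map $X\mapsto F_{n_*}(X;\param)$ then depends on $\param$ only through $2dh$ ``effective coordinates''. One candidate is the row-wise quantities $\M_a^\top e_1$ and $\W_a^\top e_i$ appearing in the proof of Lemma~\ref{lem:att-countable-batch}. Applying the Finite Witness Theorem to the pair of these reduced coordinates, which has dimension $4dh$, gives $4dh+1$ witnesses. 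The remaining task is to choose $n_*$ large enough that a single long probe encodes, as a block of its output rows, the three-token responses $(q+tv,q,q)$ used in Lemma~\ref{lem:att-three-separation}. The aim is to show that agreement on all such encoded responses forces canonical equality.

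The main obstacle is the reduction in the previous paragraph. I must show that the family of full-length probes lies inside a finite-dimensional analytic family, parameterized by $4dh$ numbers for the pair, such that vanishing of the discrepancy on that family is equivalent to canonical equality. This equivalence is precisely what the Finite Witness Theorem needs. If an exact reduction does not work, my fallback is to use the fact that softmax outputs are exponential-polynomial in $t$. I would then replace the witness argument by an explicit interpolation count. Exponential sums with at most $2h$ distinct rates are determined by $4h$ samples, and there are $d$ directions, giving $4dh$ samples. One more generic probe would separate the residual finite ambiguity. Together these give the required $N=4dh+1$.
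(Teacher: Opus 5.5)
\textbf{There is a genuine gap.} The reduction you label ``the main obstacle'' is the whole content of the lemma, and your proposal leaves it unproved. The concrete candidate you suggest cannot work. Take $h=1$, $d\ge2$, $\W$ invertible, $\M=0$ and $\M'=e_2e_1^\top$. Any discrepancy that depends on $\param$ only through $\M_a^\top e_1$ and $\W_a^\top e_i$ cannot separate $(\M,\W)$ from $(\M',\W)$, yet these layers differ functionally by Lemma~\ref{lem:att-three-separation}. In general, the output entry at a query token $x_0$ and coordinate $c$ depends on $\param$ only through $\xi=(\exp(\M_a^\top x_0),\W_ae_c)_{a\le h}\in\R^{2dh}$. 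So every probe must contain several query tokens, and one set of $4dh+1$ witnesses must serve every row and coordinate simultaneously; otherwise the count multiplies. Building probes from three-token blocks also fails, as you half-notice. In unmasked attention every row attends over the same key multiset, so one long probe can reproduce at most one short probe's response, by repeating its tokens proportionally.

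\textbf{The missing device.} Fix a finite alphabet $\mathcal D=\{0,e_1,\dots,e_d,q_1,\dots,q_K\}$ and vary only the multiplicities $m=(m_x)_{x\in\mathcal D}$. The entry at any copy of $x_0\in\mathcal D$ is then one fixed rational function $R(m;\xi)=\sum_a\frac{\sum_x m_x e^{x_0^\top\M_a x}\,x^\top\W_ae_c}{\sum_x m_x e^{x_0^\top\M_a x}}$, evaluated at different $\xi$. Since $e^{x_0^\top\M_ax}$ is a monomial in $\exp(\M_a^\top x_0)$ when $x$ has nonnegative integer entries, clearing the positive denominators makes the discrepancy polynomial in $(m,\xi,\xi')$. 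A single application of the Finite Witness Theorem in $(\xi,\xi')\in\R^{4dh}$ then gives $4dh+1$ multiplicity vectors valid for all rows and coordinates at once. Degree-zero homogeneity in $m$, plus the fact that the bad set is a semialgebraic null set, lets you make them positive integer vectors with a common sum $n_*$.

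\textbf{The second gap} is the step from ``the discrepancy vanishes identically in $m$'' to canonical equality. The statement has no exceptional set, so this step must hold for \emph{every} pair $\param_1,\param_2$. Lemma~\ref{lem:att-three-separation} chooses $(q,v)$ generically \emph{after} the score matrices are known, which a fixed probe set cannot do. The paper instead places $K=(d-1)(2h-1)+1$ moment-curve points $q_t=(1,t,\dots,t^{d-1})^\top$ in the alphabet.
\begin{itemize}
\item Any $d$ of these points are independent. Hence for each of the $J\le2h$ distinct score matrices $B_j$, some $q_t$ satisfies $q_t^\top B_j\ne q_t^\top B_l$ for all $l\ne j$.
\item Take $x_0=q_t$. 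The denominators $D_l(m)=\sum_x m_xe^{x_0^\top B_lx}$ are linear forms with coefficient $1$ at $m_0$, and their coefficients at $m_{e_i}$ encode $x_0^\top B_l$. So $D_j$ is a prime factor shared by no other denominator.
\item Therefore $D_j$ divides the matching numerator $A_j$. The $m_0$-coefficient forces $A_j=0$, and the $m_{e_i}$-coefficients then give $\Delta_j=0$.
\end{itemize}

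\textbf{Your fallback does not close either gap.}
\begin{itemize}
\item As functions of $t$, softmax outputs are ratios of exponential sums, such as $e^{gt}/(2+e^{gt})$, not exponential sums. A Prony-type count of ``$4h$ samples for $2h$ rates'' therefore does not apply; clearing denominators produces up to $2^{2h}$ exponents.
\item Fixed ``$d$ directions'' cannot be generic for all pairs.
\item ``One more generic probe separates the residual finite ambiguity'' assumes a finiteness you have not established. It also asks a single fixed probe to be generic against uncountably many pairs.
\end{itemize}
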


\begin{proof}
Put $K=(d-1)(2h-1)+1$ and
$q_t=(1,t,\ldots,t^{d-1})^\top$ for $t=1,\ldots,K$,
and let $\mathcal D$ contain the distinct vectors among
$0,e_1,\ldots,e_d,q_1,\ldots,q_K$.
For positive integer multiplicities $m=(m_x)_{x\in\mathcal D}$,
let $X(m)$ contain $m_x$ copies of each $x$.
At a copy of $x_0$, output coordinate $c$ equals
\begin{equation}
\label{eq:att-mult}
 R(m;\xi)=\sum_{a=1}^h
 \frac{\sum_{x\in\mathcal D}m_x\tau_a^x x^\top w_a}
      {\sum_{x\in\mathcal D}m_x\tau_a^x},
 \,
 \tau_a=\exp(\M_a^\top x_0),\, w_a=\W_ae_c,
\end{equation}
where $\tau^x=\prod_i\tau_i^{x_i}$ and
$\xi=((\tau_a,w_a))_a\in\Xi=((0,\infty)^d\times\R^d)^h$.

Clearing the positive denominators of
$R(m;\xi)-R(m;\xi')$ gives a polynomial $P(m;\xi,\xi')$,
since every $x\in\mathcal D$ has nonnegative integer coordinates.
The Finite Witness Theorem \citep{amir2023neural}, applied to the $4dh$-dimensional pair $(\xi,\xi')$, implies that almost every choice of $N=4dh+1$ generic positive multiplicity vectors detects every nonzero discrepancy.

The bad witness set is semialgebraic:
writing $P=\sum_\alpha c_\alpha(\xi,\xi')m^\alpha$, it is the
projection of the set defined by $\sum_\alpha c_\alpha^2>0$
and $P(m^{(j)};\xi,\xi')=0$ for every $j$.
Its nullity implies that its closure has empty interior.
We choose a good rational tuple. Since
$R(\lambda m;\xi)=R(m;\xi)$, normalize each vector to sum one
and clear all denominators by a common integer. This gives
positive integer multiplicities of a common sum $n_*$.
The resulting probe set works for every row and coordinate, because
their parameters all belong to $\Xi$.

Suppose two layers agree on this set of probes. Let $B_1,\ldots,B_J$,
$J\le2h$, be their distinct score matrices, and let $\Delta_j$
be the difference of their summed value matrices at $B_j$.
For each $j$, some $q_t$ satisfies
$q_t^\top B_j\ne q_t^\top B_l$ for all $l\ne j$:
any $d$ moment-curve vectors are independent, so each proper
subspace $\{q:q^\top(B_j-B_l)=0\}$ contains at most $d-1$
of them, fewer than $K$ altogether.

Fix this $x_0=q_t$ and a coordinate $c$. The witness property gives
\begin{align*}
    \sum_{l=1}^J\frac{A_l(m)}{D_l(m)}\equiv0, \,
 D_l=\sum_xm_xe^{x_0^\top B_lx},\,
 A_l=\sum_xm_xe^{x_0^\top B_lx}x^\top\Delta_le_c.
\end{align*}
Each denominator has coefficient one at $m_0$, and its
coefficients at $m_{e_i}$ determine $x_0^\top B_l$.
After merging equal denominators, $D_j$ is a distinct linear
factor. Clearing denominators shows that this prime factor
divides $A_j$, so $A_j=\lambda D_j$.
The coefficient at $m_0$ gives $\lambda=0$; the coefficients
at $m_{e_i}$ then give $(\Delta_j)_{ic}=0$.
Thus all $\Delta_j=0$, proving 
functional agreement at every length.
\end{proof}

\subsection{Learning Functionals and Operators from Probes}
\label{app:proofs:approx}

We first prove a common approximation lemma. We then give formal
versions of Theorems~\ref{thm:functional-informal}
and~\ref{thm:hidden-functional-informal}, in that order.
All function spaces below carry the uniform norm. Matrix inputs,
outputs, and observations are identified with vectors by listing
their entries in a fixed order. All MLP readouts have biases,
an affine output layer, and a fixed continuous, non-polynomial
hidden activation $\tau$.

\paragraph{Separation and approximation.}
For a compact parameter set $K$ and a jointly continuous family
$f:\X\times K\to\RR^m$ on compact $\X$, write
\begin{align*}
\mathcal F_K\coloneqq\{f_\param=f(\cdot;\param):\param\in K\}.
\end{align*}
A map $R:K\to\RR^s$ separates the represented functions if
\begin{align*}
R(\param)=R(\param')
\,\implies\, f_\param=f_{\param'},
\, \param,\param'\in K.
\end{align*}
Note that equality of functions need not imply equality of $R$, because $R$ acts on weights.

\begin{lemma}[Separation implies universal approximation]
\label{lem:separation-universality}
Let $\X\subseteq\RR^d$ and $K\subseteq\RR^q$ be compact,
let $f:\X\times K\to\RR^m$ be continuous, and let
$R:K\to\RR^s$ be continuous and separate the represented functions.
Then $\mathcal F_K$ is compact and there is a continuous map
\begin{align*}
g:R(K)\to\mathcal F_K,
\, g(R(\param))=f_\param.
\end{align*}
For every continuous $\Lambda:\mathcal F_K\to\RR^k$,
every compact $\mathcal Y\subseteq\RR^a$, every continuous
$\mathcal T:\mathcal F_K\to C(\mathcal Y,\RR^k)$, and every
$\epsilon>0$, there are MLPs
$\phi:\RR^s\to\RR^k$ and $\psi:\RR^{s+a}\to\RR^k$ such that
\begin{align*}
\sup_{\param\in K}
\|\Lambda(f_\param)-\phi(R(\param))\|_2<\epsilon,\,\sup_{\param\in K}\sup_{y\in\mathcal Y}
\|\mathcal T(f_\param)(y)-\psi(R(\param),y)\|_2<\epsilon.
\end{align*}
\end{lemma}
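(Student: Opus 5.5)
The plan is a standard quotient-topology argument followed by the classical universal approximation theorem.

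\emph{Step 1: compactness of $\mathcal F_K$.} The map $\iota:K\to C(\X,\RR^m)$, $\param\mapsto f_\param$, is continuous. Indeed, $f$ is jointly continuous on the compact set $\X\times K$, hence uniformly continuous. So $\param\to\param'$ implies $\sup_{x\in\X}\|f(x;\param)-f(x;\param')\|_2\to0$. Consequently $\mathcal F_K=\iota(K)$ is compact.

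\emph{Step 2: the factoring map $g$.} The set $R(K)$ is compact because $R$ is continuous. Separation means that $R(\param)=R(\param')$ forces $\iota(\param)=\iota(\param')$, so $g(R(\param)):=f_\param$ is well defined. For continuity of $g$, I would show that its graph is closed. Take $R(\param_n)\to r$ and $g(R(\param_n))\to h$. Pass to a subsequence with $\param_n\to\param\in K$. By continuity of $R$ and $\iota$, we get $r=R(\param)$ and $h=f_\param=g(r)$. A map into a compact Hausdorff space with closed graph is continuous. Equivalently, $R$ is a closed map from compact $K$ onto $R(K)$, hence a quotient map, and $\iota$ is constant on its fibres. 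This is the only step needing care. The delicate point is that $g$ is defined on $R(K)$, not on $K$, so continuity must come from compactness of $K$; the subsequence argument handles this.

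\emph{Step 3: the functional.} The map $\Lambda\circ g:R(K)\to\RR^k$ is continuous on a compact subset of $\RR^s$. By Tietze, extend it to a continuous function on $\RR^s$, or work directly on a compact box containing $R(K)$. The universal approximation theorem for MLPs with a continuous non-polynomial activation $\tau$ (Leshno--Pinkus) then gives $\phi$ with $\sup_{r\in R(K)}\|\Lambda(g(r))-\phi(r)\|_2<\epsilon$. Since $\Lambda(f_\param)=\Lambda(g(R(\param)))$, this is exactly the first bound.

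\emph{Step 4: the operator.} Define $G(r,y):=\mathcal T(g(r))(y)$ on the compact set $R(K)\times\mathcal Y$. To see that $G$ is jointly continuous, write
\[
\|G(r,y)-G(r',y')\|_2\le\|\mathcal T(g(r))-\mathcal T(g(r'))\|_\infty+\|\mathcal T(g(r'))(y)-\mathcal T(g(r'))(y')\|_2 .
\]
The first term tends to zero by continuity of $\mathcal T\circ g$. The second tends to zero by continuity of each $\mathcal T(g(r'))$, and in fact uniformly in $r'$, because $\{\mathcal T(g(r')):r'\in R(K)\}$ is a compact subset of $C(\mathcal Y,\RR^k)$ and therefore equicontinuous by Arzel\`a--Ascoli. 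Applying universal approximation on the compact set $R(K)\times\mathcal Y\subseteq\RR^{s+a}$ then yields $\psi$, and substituting $r=R(\param)$ gives the second bound.
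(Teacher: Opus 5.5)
Your proposal is correct and follows essentially the paper's route: continuity of $\theta\mapsto f_\theta$ from uniform continuity on $\X\times K$; continuity of $g$ from compactness of $K$ (the paper checks that $g^{-1}(C)=R(\iota^{-1}(C))$ is compact, which is your closed-map/quotient formulation); then Tietze plus Leshno--Pinkus, applied coordinatewise for vector outputs; and joint continuity of $G$ via the same triangle inequality. The Arzel\`a--Ascoli equicontinuity remark is harmless but unnecessary: if you take $(r',y')$ as the base point in your inequality, the second term involves a single fixed continuous function.
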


\begin{proof}
Uniform continuity of $f$ on $\X\times K$ implies continuity
of the realization map $\real:K\to C(\X,\RR^m),\, \real(\param)=f_\param,$
in the uniform norm. Thus $\mathcal F_K=\real(K)$ is compact.
Put $S=R(K)$ and define $g(R(\param))=f_\param$.
Separation makes $g$ well defined. For every closed
$C\subseteq\mathcal F_K$,
\begin{align*}
g^{-1}(C)=R\left(\real^{-1}(C)\right)
\end{align*}
is compact and therefore closed in $S$.
Hence $g$ is continuous.

The map $\Lambda\circ g$ is continuous on the compact set $S$.
The universal approximation theorem
\cite{Leshno1993MultilayerFN} gives an MLP $\phi$ such that $\sup_{s\in S}\|\Lambda(g(s))-\phi(s)\|_2<\epsilon.$
Substituting $s=R(\param)$ proves the functional conclusion.
The compact-set, vector-valued form follows by applying the
Tietze extension theorem and the scalar approximation theorem
coordinatewise.

For the operator result, suppose $G(s,y)\coloneqq\mathcal T(g(s))(y),\, (s,y)\in S\times\mathcal Y.$
If $(s_j,y_j)\to(s,y)$, then we have 
\begin{align*}
\|G(s_j,y_j)-G(s,y)\|_2
&\le\|\mathcal T(g(s_j))-\mathcal T(g(s))\|_\infty\\
&\,+
\|\mathcal T(g(s))(y_j)-\mathcal T(g(s))(y)\|_2
\to0.
\end{align*}
Thus $G$ is continuous on the compact set $S\times\mathcal Y$.
We thus apply the same approximation theorem to $G$, and then
substitute $s=R(\param)$.
\end{proof}

\paragraph{Final-output observations.}
The following Theorem~\ref{thm:functional-formal} is the formal version of
Theorem~\ref{thm:functional-informal}.

\begin{theorem}[Approximation from final-output probes]
\label{thm:functional-formal}
Let $f:\RR^d\times\RR^q\to\RR^m$ be real analytic,
let $\X\subseteq\RR^d$ be compact with nonempty interior,
and let $\paramspace\subseteq\RR^q$ be compact.
For every $N\ge2q+1$, and for almost every
$\mP=(p_1,\cdots,p_N)\in\X^N$,
the map
\begin{align*}
T_\mP:\mathcal F_{\paramspace}\to\RR^{mN},
\, T_\mP(g)=(g(p_1),\cdots,g(p_N)),
\end{align*}
is one-to-one and has a continuous inverse on its image.

Then, for every continuous
$\Lambda:\mathcal F_{\paramspace}\to\RR^k$, every compact
$\mathcal Y\subseteq\RR^a$, every continuous
$\mathcal T:\mathcal F_{\paramspace}\to C(\mathcal Y,\RR^k)$,
and every $\epsilon>0$, there are MLPs
$\phi:\RR^{mN}\to\RR^k$ and $\psi:\RR^{mN+a}\to\RR^k$ satisfying
\begin{align*}
\sup_{\param\in\paramspace}
\|\Lambda(f_\param)-\phi(\final(\param,\mP))\|_2<\epsilon,\,\sup_{\param\in\paramspace}\sup_{y\in\mathcal Y}
\|\mathcal T(f_\param)(y)-\psi(\final(\param,\mP),y)\|_2<\epsilon.
\end{align*}
\end{theorem}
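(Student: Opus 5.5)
The plan is to reduce Theorem~\ref{thm:functional-formal} to Lemma~\ref{lem:separation-universality} by taking $R(\param)\coloneqq\final(\param,\mP)=T_\mP(f_\param)$ and $K=\paramspace$. The statement has two parts: injectivity with continuous inverse of $T_\mP$ for generic $\mP$, and the approximation conclusions. Once we know that $R$ separates the represented functions, the approximation conclusions follow verbatim from Lemma~\ref{lem:separation-universality}. The steps are as follows.

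\emph{Step 1: separation on all of $\RR^d$.} We apply Theorem~\ref{thm:finalprobes}, or rather its proof via the Finite Witness Theorem, to $G(\param_1,\param_2;p)=\|f(p;\param_1)-f(p;\param_2)\|_2^2$. This $G$ is analytic on $\RR^{2q}\times\RR^d$. The theorem gives a Lebesgue-null set $E\subseteq(\RR^d)^{2q+1}$ such that, for tuples outside $E$, vanishing of $G$ at the $2q+1$ probes implies vanishing at every $p\in\RR^d$, simultaneously for all $(\param_1,\param_2)$.

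For $N\ge 2q+1$ the bad set is contained in $E\times(\RR^d)^{N-2q-1}$, which is still null, since extra probes only add constraints. Intersecting with $\X^N$, and using that $\X$ has nonempty interior so $\X^N$ has positive measure, we conclude that almost every $\mP\in\X^N$ is good.

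\emph{Step 2: from $\RR^d$ to $\X$.} For a good $\mP$, suppose $T_\mP(f_{\param_1})=T_\mP(f_{\param_2})$. Then $f_{\param_1}\equiv f_{\param_2}$ on $\RR^d$, and in particular on $\X$. So $T_\mP$ is one-to-one on $\mathcal F_\paramspace$, where functions are understood as restrictions to $\X$. Here there is a subtlety: $\mathcal F_\paramspace$ consists of restrictions to $\X$, and two parameters could agree on $\X$ without agreeing on $\RR^d$. This is harmless for injectivity, since we only need the implication from equal probe values to equal restrictions. Alternatively, analyticity plus nonempty interior gives equivalence of the two notions of equality anyway.

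\emph{Step 3: continuous inverse and approximation.} Continuity of $T_\mP$ in the uniform norm is immediate, since point evaluation is $1$-Lipschitz. The space $\mathcal F_\paramspace$ is compact as the continuous image of $\paramspace$ under the realization map. A continuous injection from a compact space into the Hausdorff space $\RR^{mN}$ is a homeomorphism onto its image, so $T_\mP^{-1}$ is continuous.

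For the approximation statements, we invoke Lemma~\ref{lem:separation-universality} with $R=\final(\cdot,\mP)$. This map is continuous in $\param$ because $f$ is analytic, and it separates by Step 2. The lemma then directly yields the MLPs $\phi$ and $\psi$ for $\Lambda$ and $\mathcal T$.

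The only genuinely delicate point is Step 1's measure bookkeeping. The Finite Witness Theorem is stated for almost every tuple in $(\RR^d)^{2q+1}$, and we must make sure the exceptional set, after padding and restriction to $\X^N$, remains null and is independent of the parameter pair. Both properties follow from the quantifier order in the Finite Witness Theorem, so I expect no real obstacle.
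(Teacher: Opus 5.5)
Your proposal is correct and follows essentially the same route as the paper. It obtains separation from the Finite Witness Theorem applied to $\|f(p;\theta_1)-f(p;\theta_2)\|_2^2$, pads the null set by $\mathcal X^{N-2q-1}$, and then invokes Lemma~\ref{lem:separation-universality} with $R(\theta)=(f_\theta(p_1),\ldots,f_\theta(p_N))$ and $K=\Theta$. Your compact-to-Hausdorff argument for continuity of $T_{\mathcal P}^{-1}$ is the same closed-map argument the paper uses inside that lemma, where it identifies the continuous map $g$ with $T_{\mathcal P}^{-1}$.
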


\begin{proof}
Since $f$ is globally analytic, we may apply
Theorem~\ref{thm:finalprobes} on sufficiently large input and
parameter balls containing $\X$ and $\paramspace$ in their
interiors. Restricting the parameters to $\paramspace$ and
intersecting the exceptional probe set with $\X^{2q+1}$
gives a Lebesgue-null $\mathcal N_0\subseteq\X^{2q+1}$ outside
which final-output observations separate all represented functions.
For $N>2q+1$, take
$\mathcal N=\mathcal N_0\times\X^{N-2q-1}$;
for $N=2q+1$, take $\mathcal N=\mathcal N_0$.
Fix $\mP\notin\mathcal N$. The map $R(\param)\coloneqq\final(\param,\mP)$
is continuous and separates functions on $\paramspace$.
Lemma~\ref{lem:separation-universality}, with $K=\paramspace$,
gives both approximation conclusions and a continuous map
$g:R(\paramspace)\to\mathcal F_{\paramspace}$.
Separation makes $T_\mP$ one-to-one, and
$g(T_\mP(f_\param))=f_\param$, so $g=T_\mP^{-1}$.
Also,
\begin{align*}
\|T_\mP(f)-T_\mP(f')\|_2
\le\sqrt N\|f-f'\|_\infty,
\, f,f'\in\mathcal F_{\paramspace}.
\end{align*}
The exceptional set comes solely from the identification theorem.
\end{proof}

\paragraph{Hidden-layer observations.}
The following Theorem~\ref{thm:hidden-functional-formal} is the formal version of
Theorem~\ref{thm:hidden-functional-informal}.
We retain the complete final probe responses in their coordinate
order, but discard the ordering of the hidden neurons or channels.
For MLPs and MHA stacks, respectively, define
\begin{align*}
v^{i,k}(\param)
&\coloneqq
\left(x_{i,k}(p_j;\param)\right)_{j\in[N]}
\in\RR^N,
\,i\in[L-1],\,k\in[d_i],\\
v^{i,k}(\param)
&\coloneqq
\left([x_{i,n_j}(P_j;\param)]_{a,k}\right)_{
j\in[N],\,a\in[n_j]}
\in\RR^{\sum_j n_j},
\,i\in[L-1],\,k\in[d].
\end{align*}
The coordinates within each vector retain the probe order and,
for attention, the token order. Let $S_i(\param)$ be the
multiset of these vectors, including repetitions
Thus $S_i(\param)$ contains $r_i$ vectors in $\RR^t$, where
$r_i=d_i$, $t=N$ for MLPs and $r_i=d$, $t=\sum_jn_j$ for MHA.

For a multiset $S$ of $r$ vectors in $\RR^t$, we define
\begin{align*}
\mathcal M_r(S)
&\coloneqq
\left(\sum_{v\in S}v^\alpha\right)_{
\alpha\in\mathbb N_0^t,\,1\le|\alpha|\le r},
\,\mathbb N_0\coloneqq\{0,1,\cdots\},\\
|\alpha|&\coloneqq\sum_{b=1}^t\alpha_b,
\,v^\alpha\coloneqq\prod_{b=1}^tv_b^{\alpha_b},\\
R_\mP(\param)
&\coloneqq
\left[
\mathcal M_{r_1}(S_1(\param));\cdots;
\mathcal M_{r_{L-1}}(S_{L-1}(\param));
\final(\param, \mP)
\right].
\end{align*}
Every occurrence of a repeated vector contributes to the sum.

For the MLP variant of \our, we use layerwise Set Transformers followed by a final MLP, as described in Section~\ref{app:hiddenprobe-master}.
For the pure-MHA specialization of \ourmha, we consider
the shared trajectory encoder and layerwise pooling given by
\begin{align*}
\widehat R_{\mP,\varphi}(\param)
&\coloneqq
\left[
\left(
\sum_{k=1}^d\varphi(v^{i,k}(\param))
\right)_{i=1}^{L-1};
\final(\param, \mP)
\right],\\
\mathcal H_{\mP,\phi}(\param)
&\coloneqq
\rho\left(\widehat R_{\mP,\varphi}(\param)\right),
\, \phi\coloneqq(\varphi,\rho).
\end{align*}
Here $\varphi$ and $\rho$ are MLPs satisfying the preceding
activation and bias assumptions, with unrestricted hidden
widths and encoder output dimension.
The same $\varphi$ is used for every hidden channel and layer.
The final responses $\final(\cdot, \mP)$ retain their coordinate order.
For operators, the prediction MLP additionally receives
the query input.

\begin{theorem}[Approximation from hidden-layer probes]
\label{thm:hidden-functional-formal}
Consider either of the following settings.
\begin{enumerate}
\item \emph{MLPs.}
Consider the MLP family of Theorem~\ref{thm:hidden-weaker}.
Its widths are $\Db=(d_0,\cdots,d_L)$ and its activation is
real analytic, injective, and non-polynomial.
Let $\X\subseteq\RR^{d_0}$ be compact with nonempty interior, let
$N\coloneqq1+\max_{0\le i<L}d_i,$
and let $\mathcal U\subseteq\X^N$ consist of pairwise distinct
tuples whose one-padded input matrix has rank $d_0+1$.
This set has full measure by
Lemma~\ref{lem:generic-layerwise-rank}.
Set $f_\param=x_L(\cdot;\param)|_\X$.

\item \emph{Softmax attention stacks.}
Let the family satisfy the assumptions of
Theorem~\ref{thm:att-fixed}, and define
$H\coloneqq\max_{\ell\in[L]}h_\ell,
\,N\coloneqq d(2H+1)+1.$
Let $n_1,\cdots,n_N$ and
$\mathcal U\subseteq\prod_{j=1}^N\RR^{n_j\times d}$
be the length profile and full-measure good-bank set
supplied by that theorem.
For an evaluation length $n\ge1$ and compact
$\X_n\subseteq\RR^{n\times d}$, set
$f_\param=x_{L,n}(\cdot;\param)|_{\X_n}$.
\end{enumerate}
Let $\Theta=\RR^D$ be the respective full raw parameter space.
For every $\mP\in\mathcal U$, there is a closed set
$E_\mP\subseteq\Theta$ with
$\dim_{\mathrm H}E_\mP\le D-1$ such that the following holds.

For every compact $K\subseteq\Theta\setminus E_\mP$,
the representation $R_\mP$ defined above admits a continuous
map $g_\mP:R_\mP(K)\to\mathcal F_K$ with
$g_\mP(R_\mP(\param))=f_\param,\,\param\in K.$
For every continuous $\Lambda:\mathcal F_K\to\RR^k$,
every compact $\mathcal Y\subseteq\RR^a$, every continuous
$\mathcal T:\mathcal F_K\to C(\mathcal Y,\RR^k)$,
and every $\epsilon>0$, there are models from the respective
\our or \ourmha family described above such that
\begin{align*}
\sup_{\param\in K}
\left\|\Lambda(f_\param)-\mathcal H_{\mP,\phi}(\param)\right\|_2
<\epsilon,\,\sup_{\param\in K}\sup_{y\in\mathcal Y}
\left\|\mathcal T(f_\param)(y)
-\mathcal H^{\mathrm{op}}_{\mP,\psi}(\param,y)\right\|_2<\epsilon.
\end{align*}
\end{theorem}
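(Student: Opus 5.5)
The argument has three steps. First, show that the symmetric representation $R_\mP$ separates the represented functions on $K$. Second, apply Lemma~\ref{lem:separation-universality}. Third, show that the actual \our/\ourmha architectures can approximate the MLP readout $\phi\circ R_\mP$ uniformly on $K$.

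\textbf{Exceptional sets.} For MLPs, fix $\mP\in\mathcal U$ and take $E_\mP\coloneqq\mathcal M_\mP$ from Theorem~\ref{thm:hidden}. That theorem says this set is closed, has Hausdorff dimension at most $D-1$, and that $\all(\cdot,\mP)$ is injective on its complement. We must check that $\mathcal U$ is contained in the good probe set. The proof of Theorem~\ref{thm:hidden} only uses $\mP\notin E$ from Lemma~\ref{lem:generic-layerwise-rank}, and $\mathcal U$ is exactly the complement of that $E$. For MHA, take $E_\mP$ and $\mathcal U$ directly from Theorem~\ref{thm:att-fixed}.

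\textbf{Separation.} The key point is that $R_\mP$ forgets the neuron ordering, so we need injectivity up to symmetry. The moment map $\mathcal M_r$ determines a multiset of $r$ vectors in $\RR^t$: power sums of degree at most $r$ determine the multiset. One way to see this is to evaluate on generic linear functionals, use Newton's identities on each projection, and recover the vectors as the roots of a polynomial.

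So $R_\mP(\param)=R_\mP(\param')$ gives equality of the layer multisets $S_i$ together with equality of the final outputs. Hence there are permutations $\pi_i$ of the hidden neurons or channels with $\mathbf X_i^{\param'}=\pi_i\mathbf X_i^{\param}$. Apply the corresponding weight-space symmetry to $\param'$ to obtain $\tilde\param'$. This leaves $f$ unchanged and gives $\all(\tilde\param',\mP)=\all(\param,\mP)$.

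For MLPs, the permuted parameter set $\pi\cdot E_\mP$ could differ from $E_\mP$. However, $\Delta_\mP$ is a product of Gram determinants, which are invariant under row permutations, so $E_\mP$ is permutation invariant. Then $\param\notin E_\mP$ and Theorem~\ref{thm:hidden} force $\tilde\param'=\param$, hence $f_{\param'}=f_\param$.

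For MHA, channel permutations are not symmetries of the architecture. Here the step is the delicate one. I would handle it by applying the identification argument of Theorem~\ref{thm:att-fixed} layerwise, with the competitor's layer inputs permuted. Failing that, I would strengthen the representation so that it keeps the channel order in the MHA case; channels are then not a set, and the sum-pooling encoder is only a convenience. Theorem~\ref{thm:att-fixed} then gives $f_{\param'}=f_\param$.

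\textbf{Universality and realizing the architecture.} $R_\mP$ is continuous, so Lemma~\ref{lem:separation-universality} produces $g_\mP$ and MLPs $\phi,\psi$ acting on $R_\mP(\param)$. For the MHA model, sum pooling $\sum_k\varphi(v^{i,k})$ with $\varphi$ an MLP approximating the monomial map $v\mapsto(v^\alpha)_{|\alpha|\le r}$ uniformly on the compact range gives $\widehat R_{\mP,\varphi}$ close to $R_\mP$ on $K$. Then $\rho=\phi$, using uniform continuity of $\phi$ on a compact neighborhood, yields error below $\epsilon$. For MLPs, the invariant Set Transformer similarly approximates any continuous symmetric function of the multiset, namely $\phi$ composed with the moment map. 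Operators are handled identically by feeding $y$ to the last MLP.

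The main obstacle is the separation step modulo permutations: ensuring that matching unordered hidden transcripts still forces functional equality. This is the invariance of $E_\mP$ for MLPs and the channel-order issue for MHA.
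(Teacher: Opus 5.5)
\textbf{The gap is in the MHA separation step.} You claim that channel permutations are not symmetries of the architecture. That is false for the pure softmax stack in this theorem, which has no residuals, LayerNorm or FFN. Neither of your fallbacks proves the stated theorem:
\begin{itemize}
\item Applying Theorem~\ref{thm:att-fixed} ``with permuted inputs'' still needs a competitor in $\Theta$ whose complete transcript equals the target's. Producing that competitor is exactly the job of the symmetry you ruled out.
\item Keeping the channel order changes both $R_\mP$ and the sum-pooled \ourmha family that the statement is about, so it proves a different result.
\end{itemize}

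\textbf{The missing step is explicit.} Let $\Pi_1,\ldots,\Pi_{L-1}$ be the column permutations that align the channel trajectories, and set $\Pi_0=\Pi_L=I$. Replace the factors $(Q_{ia},K_{ia},V_{ia},O_{ia})$ of $\param'$ by $(\Pi_{i-1}^\top Q_{ia},\Pi_{i-1}^\top K_{ia},\Pi_{i-1}^\top V_{ia},O_{ia}\Pi_i)$. This sends $\M_{ia}\mapsto\Pi_{i-1}^\top\M_{ia}\Pi_{i-1}$ and $\W_{ia}\mapsto\Pi_{i-1}^\top\W_{ia}\Pi_i$. The rank bounds are preserved, so the resulting $\tilde\param'$ lies in $\Theta$.

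Since $(X\Pi_{i-1})\Pi_{i-1}^\top\M_{ia}\Pi_{i-1}(X\Pi_{i-1})^\top=X\M_{ia}X^\top$ and $(X\Pi_{i-1})\Pi_{i-1}^\top\W_{ia}\Pi_i=X\W_{ia}\Pi_i$, induction gives $x_{i,n}(X;\tilde\param')=x_{i,n}(X;\param')\Pi_i$ for every length $n$. Hence $f_{\tilde\param'}=f_{\param'}$ and $\all(\tilde\param',\mP)=\all(\param,\mP)$. Now apply Theorem~\ref{thm:att-fixed} to the target $\param\notin E_\mP$ and the arbitrary raw competitor $\tilde\param'$; this gives $f_{\param'}=f_\param$. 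With this step, the MHA case runs exactly parallel to the MLP case, as in the paper.

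\textbf{The rest matches the paper.} Your MLP separation argument and your universality step (Lemma~\ref{lem:separation-universality}, sum pooling of an MLP approximating the monomial map, uniform continuity of the readout, Set Transformer universality) follow the paper's proof. One simplification: the permutation invariance of $E_\mP$ is true but not needed. Theorem~\ref{thm:hidden} identifies every target outside $\mathcal M_\mP$ against every raw competitor, so only the untransformed target $\param\in K$ must avoid $E_\mP$. That is in fact all your final MLP sentence uses.
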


\begin{proof}
Fix $\mP\in\mathcal U$.
For MLPs, we define
\begin{align*}
E_\mP\coloneqq
\left\{\param\in\Theta:
\prod_{i=0}^{L-1}
\det\left(
\overline{\mathbf X}_i^\param(\mP)
\bigl(\overline{\mathbf X}_i^\param(\mP)\bigr)^\top
\right)=0\right\}.
\end{align*}
By Lemma~\ref{lem:generic-layerwise-rank}, the displayed
product is a nonzero real-analytic function of $\param$.
Its zero set is closed and has Hausdorff dimension at most
$D-1$. For $\param_0\notin E_\mP$, equality of the ordered
observations with any raw competitor gives, by injectivity,
$(B_i-B_i^0)\overline{\mathbf X}_i^{\param_0}(\mP)=0,\,0\le i<L.$
Full row rank gives $B_i=B_i^0$ for every $i$.
For an affine output layer, its equality follows directly
without inverting an activation.
For attention, use $E_\mP$ from Theorem~\ref{thm:att-fixed}.
That theorem also identifies every target outside $E_\mP$
against every raw competitor, at every sequence length.

We first show that $\mathcal M_r$ determines an $r$-element
multiset. If $\mathcal M_r(S)=\mathcal M_r(S')$, then for every
$a\in\RR^t$ and $b\in[r]$, expansion of $(a^\top v)^b$ gives
$\sum_{v\in S}(a^\top v)^b=\sum_{v\in S'}(a^\top v)^b.$
For real numbers $c_1,\cdots,c_r$, their power sums
$p_b=\sum_jc_j^b$ determine the coefficients of
$\prod_j(z-c_j)$: writing $e_b$ for the sum of all products
of $b$ distinct $c_j$ and $e_0=1$, Newton's identities give
\begin{align*}
b e_b=\sum_{j=1}^b(-1)^{j-1}e_{b-j}p_j,
\,b\in[r],
\,
\prod_{j=1}^r(z-c_j)=\sum_{b=0}^r(-1)^be_bz^{r-b}.
\end{align*}
Consequently the projected multisets agree, including
multiplicities. Choose $a$ outside the finitely many
hyperplanes $\{a:a^\top(v-v')=0\}$ indexed by distinct
vectors in $S\cup S'$. This projection is one-to-one on
that union, so $S=S'$.

Now fix compact $K\subseteq\Theta\setminus E_\mP$ and
$\param,\param_0\in K$ with
$R_\mP(\param)=R_\mP(\param_0)$.
The preceding argument gives equality of the hidden
trajectory multisets. Thus one permutation per hidden
layer aligns the competitor's responses at all probes.
For MLPs, choose these permutation matrices $P_i$, put
$P_0=P_L=I$, and transform the competing parameter by
\begin{align*}
W_i'=P_{i+1}W_iP_i^\top,
\,b_i'=P_{i+1}b_i,
\,0\le i<L.
\end{align*}
Induction gives $x_i(u;\param')=P_ix_i(u;\param)$.
Hence $\all(\param',\mP)=\all(\param_0,\mP)$ while
$f_{\param'}=f_\param$.

For attention, states are row matrices. Choose permutations
$\Pi_i$ that align their columns. Let $\Pi_0=\Pi_L=I$, and we define
\begin{align*}
Q_{ia}'=\Pi_{i-1}^\top Q_{ia},\,K_{ia}'=\Pi_{i-1}^\top K_{ia},\,V_{ia}'=\Pi_{i-1}^\top V_{ia},\,O_{ia}'=O_{ia}\Pi_i.
\end{align*}
Then $\M_{ia}'=\Pi_{i-1}^\top\M_{ia}\Pi_{i-1}$ and
$\W_{ia}'=\Pi_{i-1}^\top\W_{ia}\Pi_i$, so we get:
\begin{align*}
(X\Pi_{i-1})\M_{ia}'(X\Pi_{i-1})^\top=X\M_{ia}X^\top,\,(X\Pi_{i-1})\W_{ia}'=X\W_{ia}\Pi_i.
\end{align*}
The attention scores are unchanged, and induction gives us $x_{i,n}(X;\param')=x_{i,n}(X;\param)\Pi_i.$
Again, $\all(\param',\mP)=\all(\param_0,\mP)$ and
$f_{\param'}=f_\param$.
The all-competitor identification established above implies
$f_\param=f_{\param_0}$ in either case.
It is unnecessary that $\param'$ belong to $K$.

The map $R_\mP$ is continuous, since its coordinates are
polynomials in finitely many continuous observations.
The realization is jointly continuous on the chosen compact
input domain and $K$. Lemma~\ref{lem:separation-universality}
therefore gives $g_\mP$ and continuous functional and operator
readouts on $R_\mP(K)$ and $R_\mP(K)\times\mathcal Y$.
It remains to realize their approximations by the specified
architectures.

For \our, Set Transformer universality
\cite{settransformer}, with sufficient capacity and output
dimension, approximates each layer's continuous invariant
map $\mathcal M_{r_i}$. The final MLP approximates the resulting
continuous readout.

For \ourmha, the case $L=1$ follows directly from
Lemma~\ref{lem:separation-universality}, since there are
no hidden trajectories. Suppose $L\ge2$, and define
\begin{align*}
\nu(v)&\coloneqq
\left(v^\alpha\right)_{
\alpha\in\mathbb N_0^t,\,1\le|\alpha|\le d},
\quad t\coloneqq\sum_{j=1}^N n_j,\,C\coloneqq
\left\{
v^{i,k}(\param):
\param\in K,\ i\in[L-1],\ k\in[d]
\right\},\\
\mathcal S&\coloneqq R_\mP(K),\,
\mathcal S^1\coloneqq
\{z:\operatorname{dist}(z,\mathcal S)\le1\}.
\end{align*}
Both $C$ and $\mathcal S^1$ are compact.
By Lemma~\ref{lem:separation-universality}, there are
MLPs $\rho,\psi$ satisfying
\begin{align*}
\sup_{\param\in K}
\|\Lambda(f_\param)-\rho(R_\mP(\param))\|_2
<\epsilon/2,\,\sup_{\param\in K}\sup_{y\in\mathcal Y}
\|\mathcal T(f_\param)(y)-\psi(R_\mP(\param),y)\|_2
<\epsilon/2.
\end{align*}
Uniform continuity on $\mathcal S^1$ and
$\mathcal S^1\times\mathcal Y$ gives $\delta\in(0,1)$
such that, whenever $z,z'\in\mathcal S^1$ and
$\|z-z'\|_2<\delta$, we have:
\begin{align*}
\|\rho(z)-\rho(z')\|_2<\epsilon/2,\,\sup_{y\in\mathcal Y}
\|\psi(z,y)-\psi(z',y)\|_2<\epsilon/2.
\end{align*}
Choose an MLP $\varphi$ with
$
\sup_{v\in C}\|\varphi(v)-\nu(v)\|_2
<\frac{\delta}{1+d(L-1)}.
$
Since $\mathcal M_d(S_i)=\sum_{k=1}^d\nu(v^{i,k})$,
we have:
\begin{align*}
\sup_{\param\in K}
\|\widehat R_{\mP,\varphi}(\param)-R_\mP(\param)\|_2
&\le
\sum_{i=1}^{L-1}\sum_{k=1}^d
\sup_{\param\in K}
\|\varphi(v^{i,k}(\param))-\nu(v^{i,k}(\param))\|_2\\
&<\delta.
\end{align*}
In particular, $\widehat R_{\mP,\varphi}(K)\subseteq\mathcal S^1$.
Thus we get:
\begin{align*}
&\sup_{\param\in K}
\|\Lambda(f_\param)
-\rho(\widehat R_{\mP,\varphi}(\param))\|_2\\
&\le
\sup_{\param\in K}
\|\Lambda(f_\param)-\rho(R_\mP(\param))\|_2
+
\sup_{\param\in K}
\|\rho(R_\mP(\param))
-\rho(\widehat R_{\mP,\varphi}(\param))\|_2
<\epsilon.
\end{align*}
Replacing $\rho(z)$ by $\psi(z,y)$ and taking the
supremum over $y\in\mathcal Y$ gives the operator result.
The independence assertions follow from the identification
theorems.
\end{proof}

\section{Architectural details of \our and \ourmha}
\label{app:expt:hiddenprobe-master}

We describe the architectures and training procedures of
\our for MLPs and \ourmha for Transformers in separate subsections.

\subsection{\our{} for MLPs}
\label{app:expt:hiddenprobe-inr}

We describe the architectures used for INR classification
and Model Zoo regression tasks.

\subsubsection{INR Classification}

\paragraph{Target networks and learned probes.}
For INR classification, each target is a SIREN with sine activations. We use $N=128$ learned coordinate probes. Each probe has a
trainable code $\zeta_q\in\R^{32}$, initialized from
$\mathcal N(0,I_{32})$. Two shared affine maps generate
the coordinates:
\begin{align*}
 u_q=\tanh\bigl(A_2(A_1\zeta_q+b_1)+b_2\bigr)\in(-1,1)^2,
 \qquad
 A_1\in\R^{32\times32},\quad
 A_2\in\R^{2\times32}.
\end{align*}
There is no activation between the affine maps.
The same ordered probe bank is presented to every target;
generation is not conditioned on the target's weights
or responses.

\paragraph{Observed responses and neuron trajectories.}
For INR classification, during one forward pass, we record the two hidden
layer responses $X_i^\param\in\R^{N\times32}$,
$i=1,2$, and final outputs $O^\param\in\R^{N\times3}$.
For each hidden neuron, we collect its responses across
the complete bank:
\begin{align*}
 v_x^{i,c}(\param)
 =
 \bigl([X_i^\param]_{q,c}\bigr)_{n=1}^{N}
 \in\R^{128},
 \qquad
 i\in\{1,2\},\quad c\in[32].
\end{align*}
We design features by concatenating the activations with global features: mean, standard deviation, maximum,
minimum. Denoting these statistics by
$s(v)$, the resulting feature vector is $\chi(v)=[v; s(v)].$
No normalization is applied before the token projection.
Probe order is retained within every trajectory.

\paragraph{Neuron tokens and joint readout.}
A shared affine map projects every neuron feature to
$\R^{d_e}$, and a learned embedding $e_i$ identifies its
hidden layer:
\begin{align*}
 t_{i,c}^\param
 &=A\chi\bigl(v_x^{i,c}(\param)\bigr)+a+e_i,\\
 t_{\mathrm{out}}^\param
 &=A_{\mathrm{out}}\operatorname{vec}(O^\param)
   +a_{\mathrm{out}}.
\end{align*}

One joint Transformer processes all tokens.
We use three blocks: each block has eight attention heads,
feedforward, ReLU, dropout $0.1$ and
LayerNorm before its attention and feedforward sublayers.
There is no final LayerNorm after the last block.
An MLP maps the final CLS token to class logits.

There are no neuron-index embeddings or sequence
positional encodings. At evaluation, when dropout is
disabled, permuting the neurons within a hidden layer
only permutes their tokens and leaves the prediction
unchanged. Probe order, in contrast, is retained in both
the trajectories and final-output vector.

\paragraph{Relation to the theory.}
The neuron tokens preserve correspondence across probes,
as in the theoretical trajectory representation.
The implementation processes the layers jointly rather
than using separate layerwise set encoders; layer
embeddings retain their identities. It also observes
preactivations, from which the theoretical postactivation
responses can be computed.
Sine is analytic but not injective, so the
parameter-identification theorem stated for injective
activations does not apply directly to these targets.
The learned probes and finite-width token projections
are not assumed to satisfy exact separation conditions.
The universality results concern capacity-growing
architecture families under their stated assumptions,
not every fixed-width encoder or trained parameter choice.

\subsubsection{\our{} for CNN-zoo accuracy regression}
\label{app:expt:hiddenprobe-cnn}

\paragraph{Target networks and learned probes}
The regression targets are image-classification CNNs.
The grayscale zoos use three convolutional layers with
channels $1\to16\to16\to16$, kernel size $3$ and stride
$2$, followed by global average pooling and a ten-output
affine classifier. The Wild Park zoo contains targets
with different depths, channel counts, convolutional
settings and activations.

Here, we use $N=128$ learned image probes. Each probe is generated from a trainable code in $\R^{32\times1\times1}$ and has the appropriate image dimensions. A final $\tanh$ bounds the image values to $(-1,1)$. The same learned probe bank is used for every target network. The output and hidden-response branches described below share this probe bank and the same target forward pass. We follow a protocol similar to that of~\cite{kahana} for learning probes in classification and regression tasks.

\paragraph{Hidden aggregation and Prediction}
The regressor stacks the probes' output at each channel and then pools the channels within each layer. Like in the case of INR classification, we also use additional global statistics like channelwise means, maxima, standard deviation etc
over probes to design extra features to aggregate all the information from hidden layers.

One self-attention block mixes the layer vectors using
a learned relative-layer distance bias, followed by a
residual pre-LayerNorm feedforward block with GELU.
A final attention pool over layers, funal output and a projection
give $z_H\in\R^{d_z}$.
The attention projections in this aggregation have
inner dimension $16$; large hidden-branch projections
use factorized weights to reduce the number of parameters, and match the parameter count with that of ProbeGen.

The predictor is an MLP that taken in the hidden combined representation and gives a scalar that predicts the accuracy.

\paragraph{Relation to the theory}
The regressor follows the shared-probe principle and
uses hidden responses without additional target-network
queries. Its channel pooling respects channel
permutations. It is an empirical adaptation of hidden
probing in the context of CNN Model Zoo.

\subsection{\ourmha for Transformers}
\label{app:expt:hiddenprobe-mha}
\paragraph{Details on the learned probes.}
For MNIST, we use $N=256$ probes divided into two families.
The first contains $192$ learned sequences, each consisting
of $17$ token vectors in $\R^{32}$. The remaining
$64$ probes are learned $28\times28$ images.

Each encoder probe is parameterized by independently learned
$32$-dimensional token codes. Each image is parameterized by
$49$ learned $16$-dimensional codes, corresponding to its
$4\times4$ patches. Within each route, a shared residual
two-layer MLP transforms these codes: $G(z)=z+W_2\operatorname{GELU}(W_1z+b_1)+b_2.$
The transformed image patches are assembled into a
$28\times28$ image.

The same learned bank is presented to every target:
generation is not conditioned on the target's weights
or its responses.

\paragraph{Observed responses and channel trajectories.}
For MNIST and AGNews datasets, the Transformers have two blocks.
For every probe, we record the outputs of the target's two
transformer blocks, its final normalized token states,
and its classifier logits. Denote the three hidden layers
by $X_1$, $X_2$, and $X_{\mathrm{norm}}$.
Each hidden layer has $32$ channels, and each probe produces
a $10$-dimensional logit vector.

For each layer $i\in\{1,2,\mathrm{norm}\}$ and channel
$c\in[32]$, we concatenate that channel's responses across
all probes and token positions:
\[
v^{i,c}(\param)
=
\left(
[X_i(P_j;\param)]_{t,c}
\right)_{j\in[N],\,t\in[n_j]}
\in\R^T,
\,
\]
The concatenation uses a fixed order: encoder probes first,
then image probes, with token positions kept in order
within each probe. Consequently, each channel is represented
by its responses to the complete bank.

\paragraph{Encoding and combining channel features.}
One shared trajectory encoder $E$ is applied independently
to every channel in all three blocks: $u^{i,c}=E(v^{i,c})\in\R^{192}.$
The encoder consists of
$\operatorname{Linear}$, GELU,
$\operatorname{Linear}$, and LayerNorm.
It produces $32$ feature vectors for each view; it does
not pool across channels.

We then combine each block-2 feature with the
final-normalized feature of the same channel:
$\widetilde u^{2,c}=
\operatorname{Fuse}
\left([u^{2,c};u^{\mathrm{norm},c}]\right)
\in\R^{192}.$ This is concatenated with the $32$ block-1 features, giving
$64$ hidden-response tokens in total. 

\paragraph{Joint readout and pooling.}
Each probe's classifier logits are projected from
$\R^{10}$ to $\R^{192}$. Together with a learned CLS token, the readout input
contains $1+32+32+256=321$ tokens of dimension $192$. Two pre-Layernorm Transformer
blocks process these tokens jointly, a final LayerNorm is applied to their
outputs. 

Write the resulting tokens as
$z_{\mathrm{CLS}},z_1,\ldots,z_{320}\in\R^{192}$.
We form
\[
r_\param
=
\left[
z_{\mathrm{CLS}};
\frac{1}{320}\sum_{a=1}^{320}z_a;
\max_{a\in[320]}z_a
\right]
\in\R^{576},
\]
where the maximum is coordinatewise.
Thus, mean and maximum pooling act jointly over all
non-CLS tokens, including the logit tokens, rather than
separately within each layer.

A prediction head with dimensions $576\to192\to1$ and
a GELU activation maps $r_\param$ to standardized
accuracy. The predictor receives only the recorded responses and
fixed semantic identifiers, not target weights, target
identities, or generated probe coordinates.

\paragraph{Relation to the theory.}
The theoretical constructions motivate a shared bank
with different sequence lengths and the use of intermediate
responses. The experiments apply these principles to
complete transformer classifiers containing feedforward
and normalization layers. They do not assume that the
learned probes satisfy the exact reconstruction conditions
or identify the target function. Likewise, the universality
results concern capacity-growing architecture families,
not a guarantee for this particular finite-width
configuration or its learned parameters.

\section{Experimental details}
\subsection{Function-level reconstruction on synthetic neural networks}
\label{app:expt:synthetic}

To illustrate the benefits of hidden activations, we study function-level reconstruction of frozen neural networks from a limited number of input probes. We compare \emph{output-only regression}, which observes only the final network outputs, with \emph{intermediate-layer regression}, which independently fits each layer using observed teacher activations as its inputs and targets. Our goal is to examine how well each approach reconstructs the target function, both between the observed probes and outside the training interval.

\paragraph{Common Setup}
We consider function-level reconstruction of MLPs with different activations and stacks of MHA blocks, using either final outputs alone or complete hidden-layer observations. Let $\bparam\coloneqq(\param_1,\ldots,\param_L)$ denote the parameters of a sequential model
$f_{\bparam}=f_{\param_L}\circ\cdots\circ f_{\param_1}$.
Given a set of probes $\mP=(p_1,\ldots,p_N)$, we observe either the final outputs $\final(\bparam,\mP)$ or the complete hidden-layer observations $\all(\bparam,\mP)$.

For the output-only case we do MSE regression to interpolate on the probes. If $f_{\hat{\bparam}}$ is the ``student'' network trying to reconstruct the ``teacher'' network, we minimize the following losses: (i) For learning from $\final$, we minimize $\min_{\hat{\bparam}}\sum_{i=1}^N \|f_{\hat{\bparam}}(p_i) - f_{{\bparam}}(p_i)\|^2$ (ii) For learning from $\all$, we fit sum of MSE on each of the $L$ layers, given by $\sum_{\ell}\sum_{i=1}^N \|f_{\hat{\param_\ell}}(X^{\ell-1}_i) -X^{\ell}_i \|^2$, where $X^{\ell}_i \coloneqq f_{\param_\ell} \circ\cdots\circ f_{\theta_1}(p_i)$, i.e $\{X^{\ell}_i\}$ are the observed outputs from the $\ell$-th layer. Here we fit each hidden layer independently but simultaneously using MSE regression.

\paragraph{Analytic MLP:} We used a neural network with $\tanh$ activation of widths $(1,4,4,1)$, the training interval was $[-1,1]$, and the evaluation
interval is $[-3,3]$. As shown in \autoref{fig:combined_analytic}, access to the hidden activations enables essentially exact reconstruction once sufficiently many observations are available, whereas reconstruction from final outputs alone improves much more gradually and retains a substantially larger error even as the number of observed samples increases.

\begin{figure}[H]
    \centering
    \begin{subfigure}{0.32\textwidth}
        \centering
        \includegraphics[width=\linewidth]{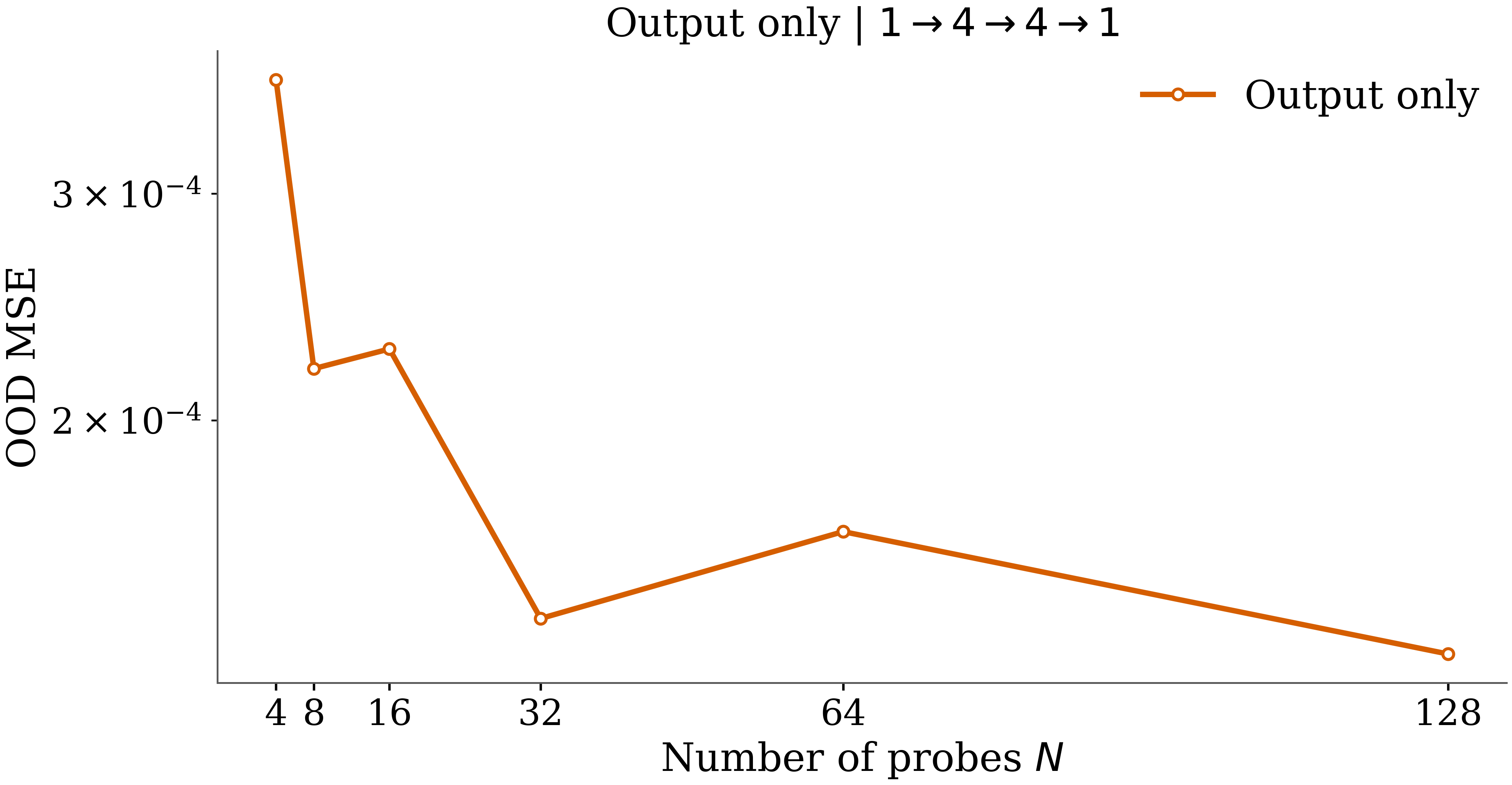}
        \caption{OOD MSE from $\final$}
        \label{fig:analytic-mse-out}
    \end{subfigure}\hfill
    \begin{subfigure}{0.32\textwidth}
        \centering
        \includegraphics[width=\linewidth]{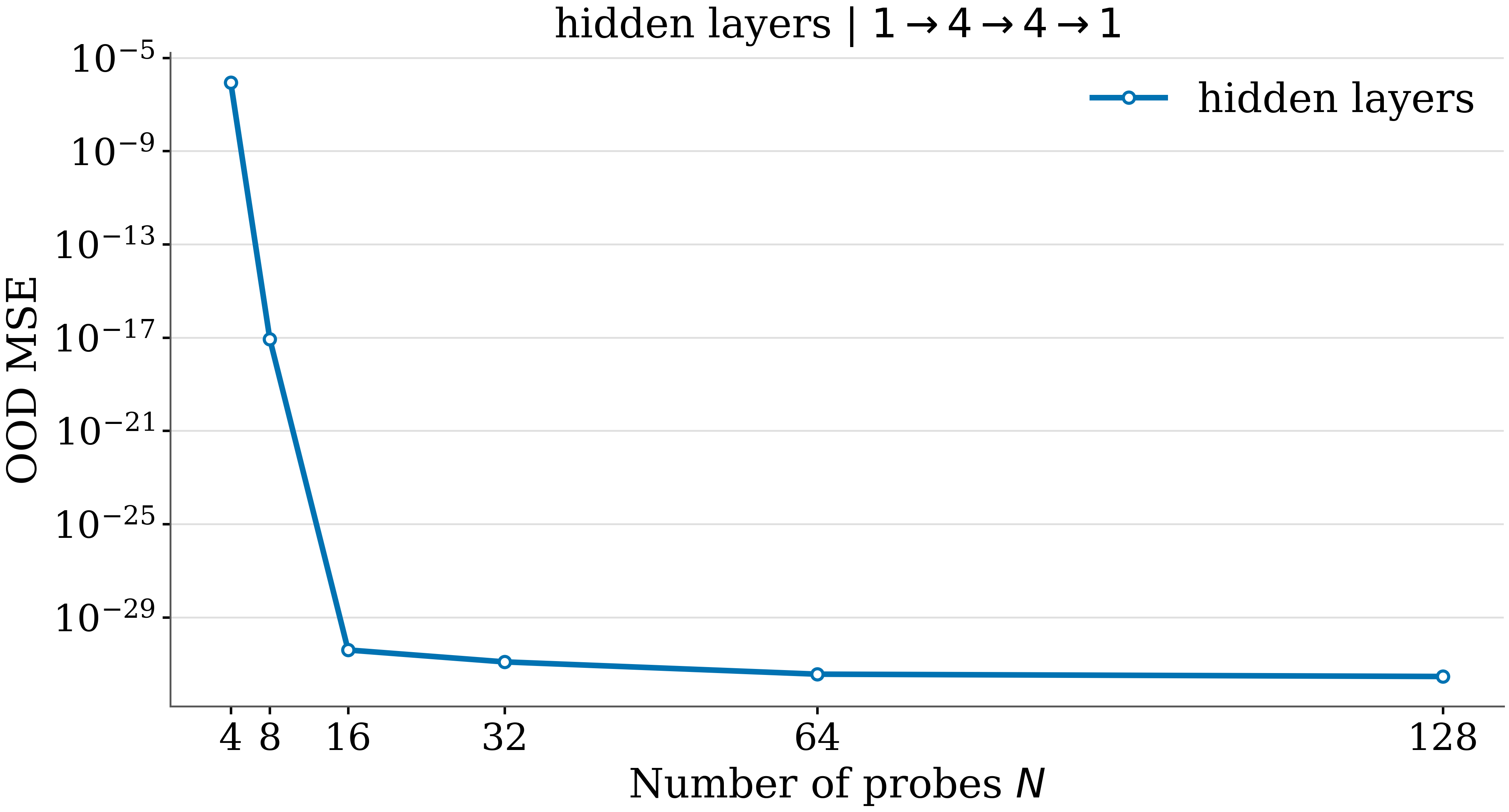}
        \caption{OOD MSE from $\all$}
        \label{fig:analytic-mse-all-1}
    \end{subfigure}
    \begin{subfigure}{0.32\textwidth}
        \centering
        \includegraphics[width=\linewidth]{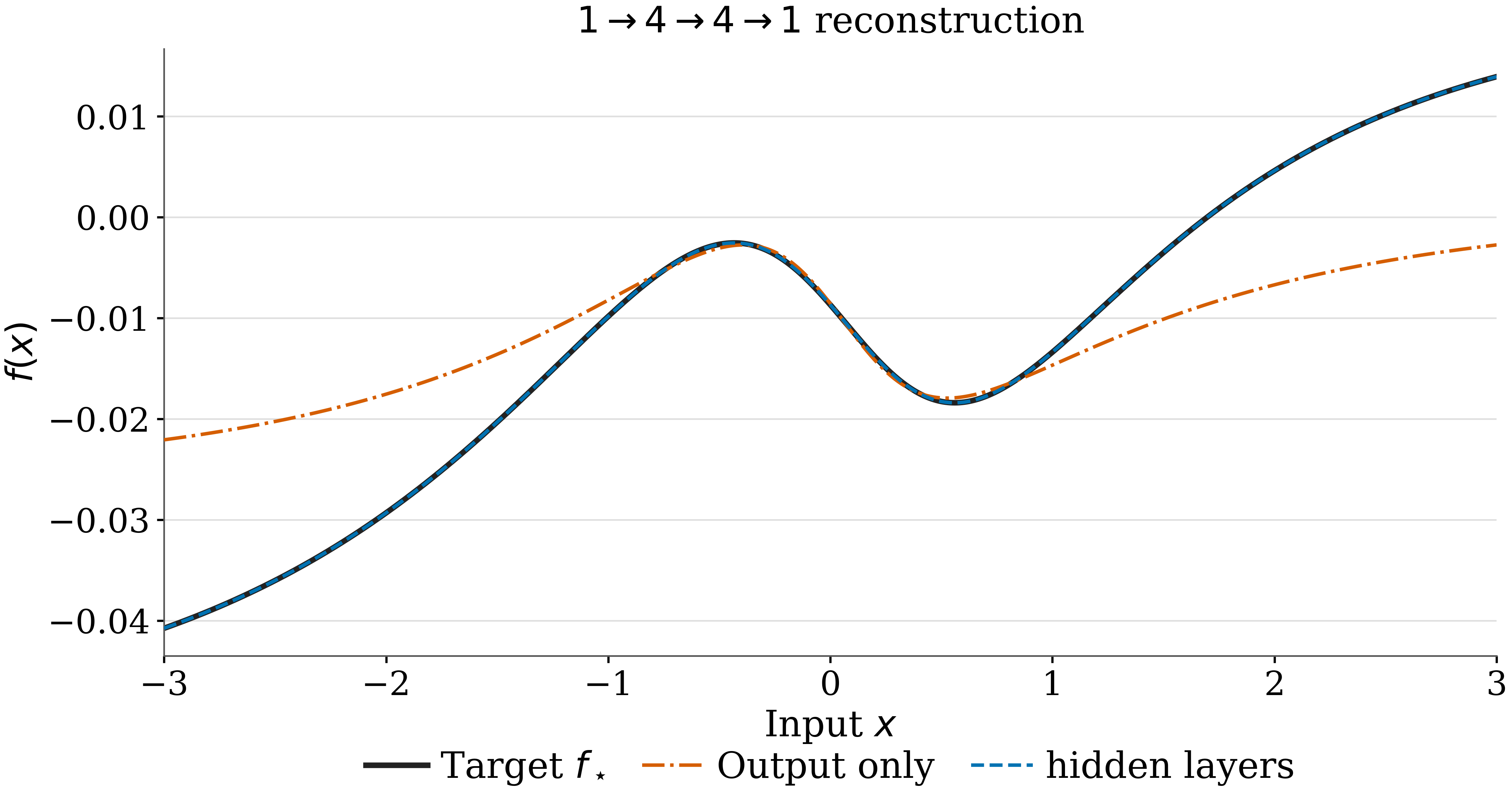}
        \caption{Reconctruction from $\all, \final$}
        \label{fig:analytic-mse-all-2}
    \end{subfigure}
    
    \caption{Reconstruction of Analytic Net from $\final, \all$.}
    \label{fig:combined_analytic}
\end{figure}

\paragraph{ReLU MLP:} As  in the analytic activation case, Theorem~\ref{thm:relu-pac-informal} suggests that hidden activations can reduce the number of probes necessary to (almost) identify a ReLU network.  We take a target ReLU network with 2 hidden layers, width 4 and weights that represent a member of the ``sawtooth'' function class, having a large number of breakpoints. As discussed in \cite{haninrolnick}, such a large number of breakpoints is atypical. This makes the target a challenging example for reconstruction from final outputs alone using MSE regression from random initialization. This matches the observation: we can't reconstruct the target with just the final output. But with hidden activations, we can reconstruct the target almost exactly with far fewer probes. The same is also reflected in the out-of-distribution MSE plots

\begin{figure}[H]
    \centering
    \begin{subfigure}{0.32\textwidth}
        \centering
        \includegraphics[width=\linewidth]{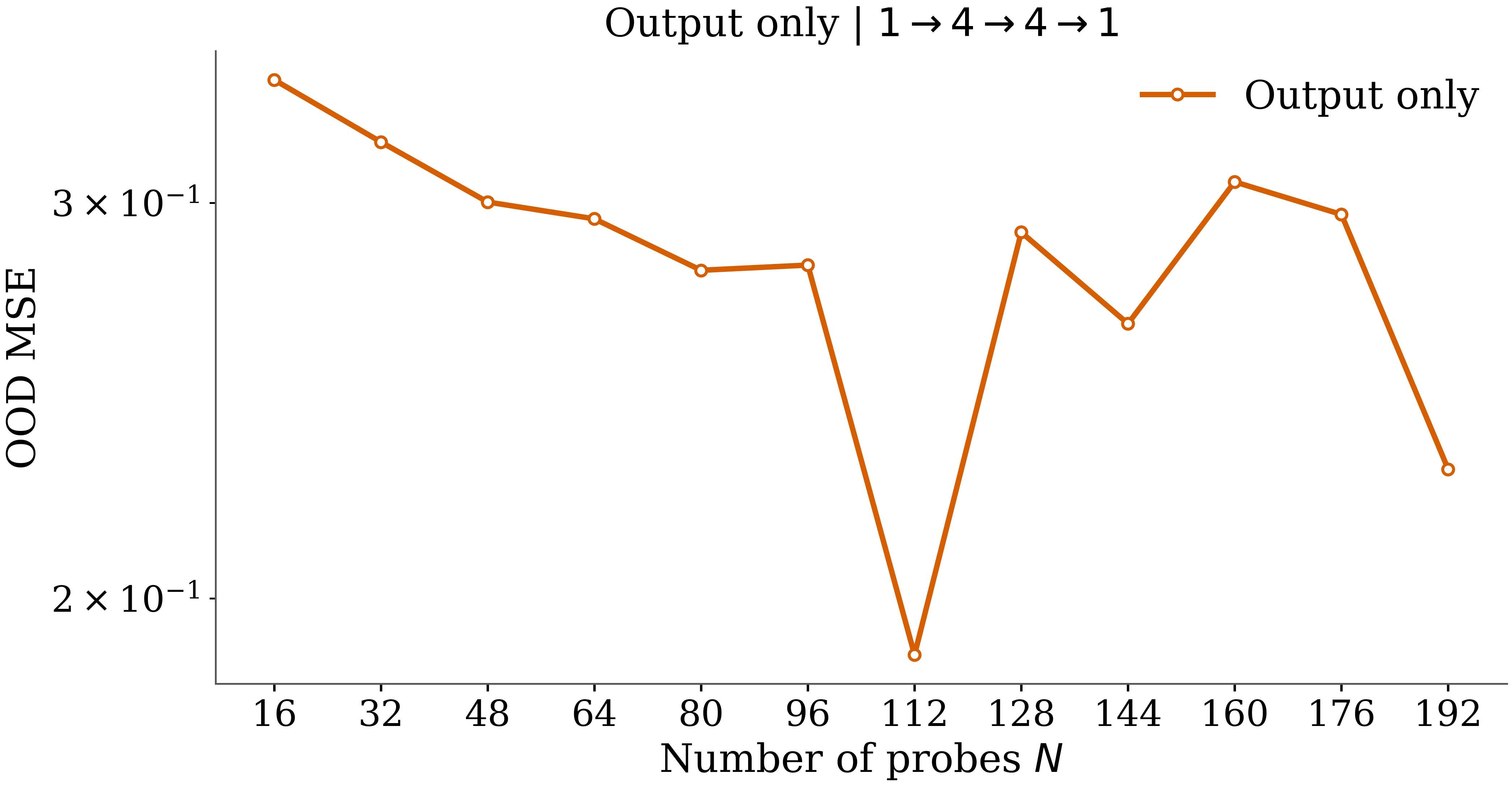}
        \caption{OOD MSE from $\final$}
        \label{fig:relu-mse-out}
    \end{subfigure}\hfill
    \begin{subfigure}{0.32\textwidth}
        \centering
        \includegraphics[width=\linewidth]{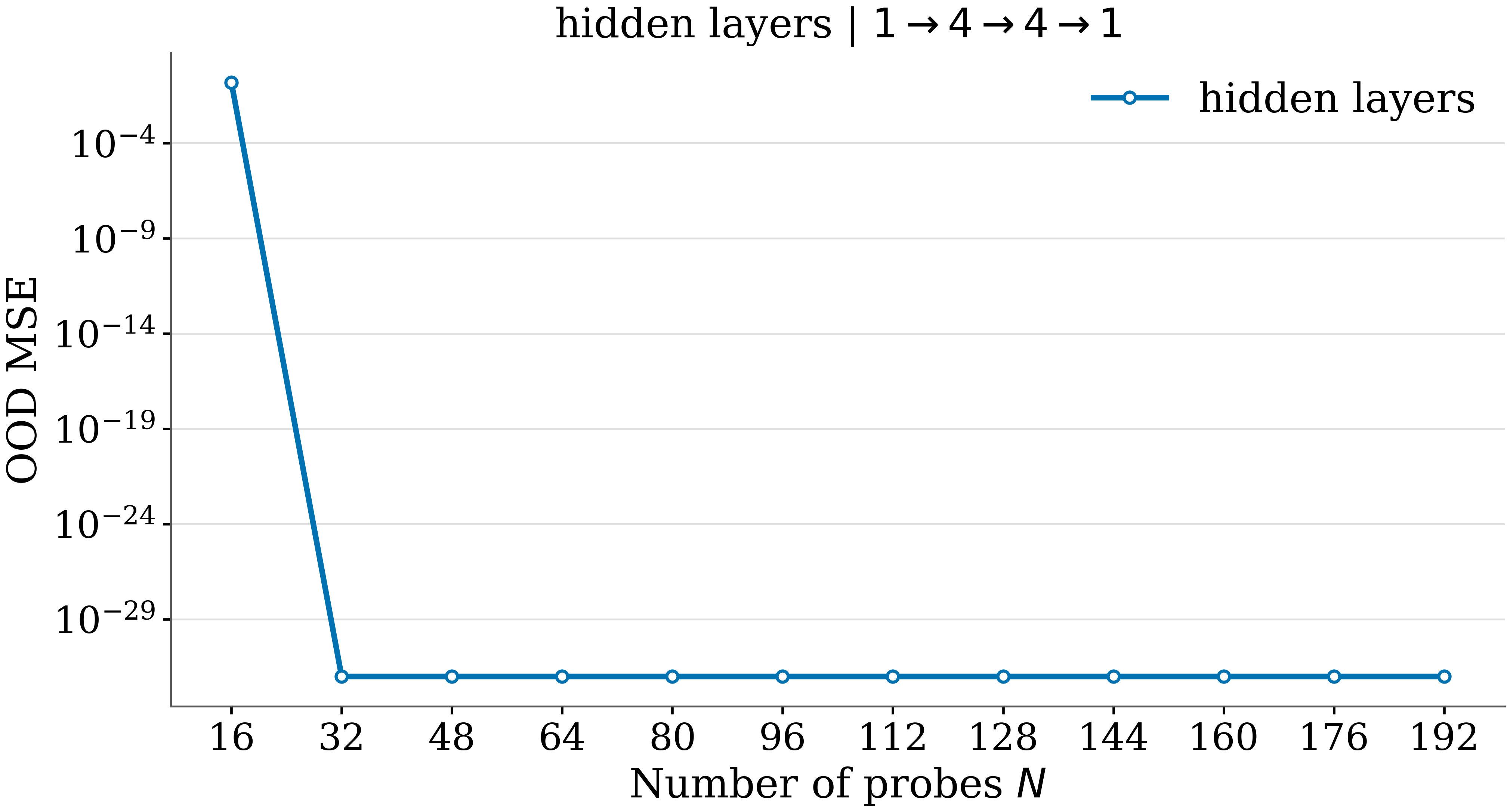}
        \caption{OOD MSE from $\all$}
        \label{fig:relu-mse-all-1}
    \end{subfigure}
    \begin{subfigure}{0.32\textwidth}
        \centering
        \includegraphics[width=\linewidth]{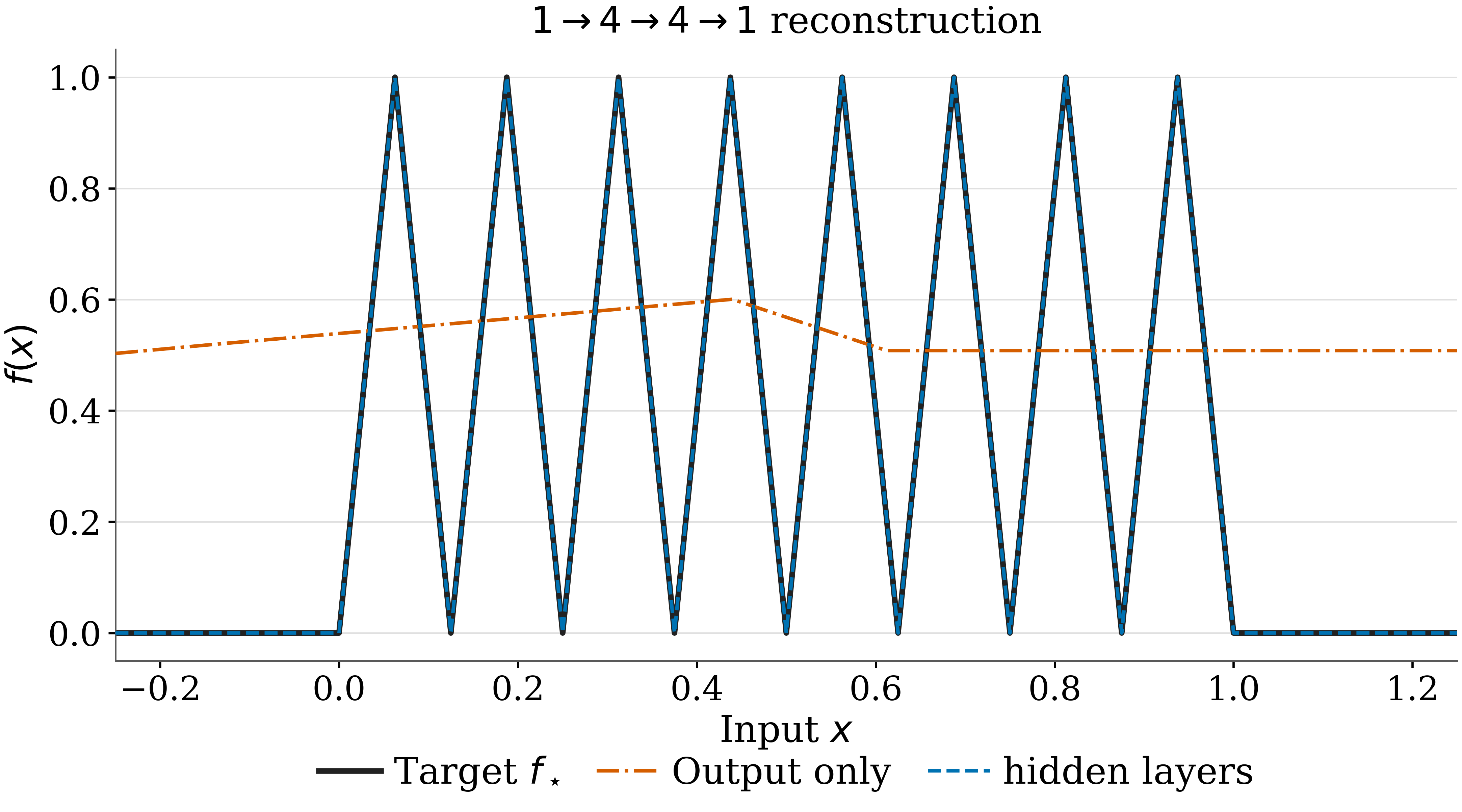}
        \caption{Reconctruction from $\all, \final$}
        \label{fig:relu-mse-all-2}
    \end{subfigure}
    
    \caption{Reconstruction of ReLU Net from $\final, \all$.}
    \label{fig:combined_relu}
\end{figure}

\paragraph{Multi Head Attention Stack} Here, we want to reconstruct an MHA stack. We randomly initialize a two-layer multi-head attention network with input width $d=4$, two heads per layer
and query/key and value widths $k=r=2$ (bias-free softmax attention, without residual connections, normalization or feed-forward blocks). We sample four single-token inputs
followed by generic three-token inputs, and a budget of $N$ probes uses the first $N$ of them, for
$N$ from 5 to 40. We fit the ``student'' MHA block jointly using $\final$, and each layer independently using $\all$, both using MSE regression as described above. 

\begin{figure}[H]
    \centering
    \includegraphics[width=0.5\linewidth]{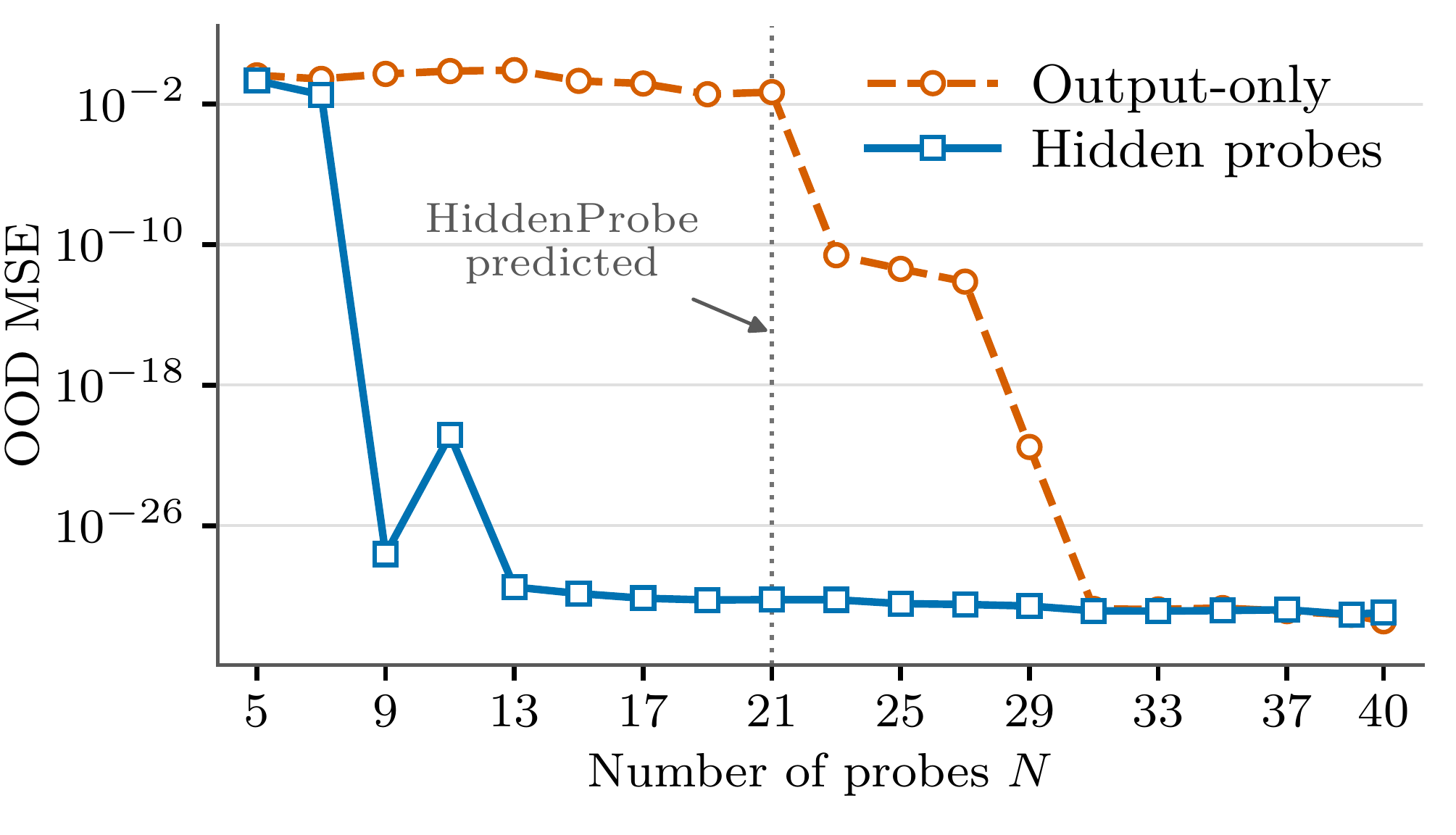}
    \caption{Reconstruction of MHA Stack from $\final, \all$.}
    \label{fig:mha}
\end{figure}

As shown in \autoref{fig:mha}, access to the hidden
activations enables essentially exact reconstruction from as few as nine probes, below the $N=21$ probes that Theorem~\ref{thm:att-fixed-informal} predicts, whereas reconstruction from final outputs alone becomes exact only at about 31 probes.

Together, these three experiments suggest that hidden activations enable more accurate function reconstruction with fewer probes than final outputs alone. This observation is consistent with our empirical results on neural functional learning, where \our, which uses hidden activations, outperforms ProbeGen, which relies only on final outputs.

\label{app:hyperparameters}
\subsection{Full INR Classification Comparison}
\label{app:inr_full}

Table~\ref{tab:inr_classification_full} provides an extended comparison on the INR classification benchmarks, including both recent weight-space methods and several earlier baselines omitted from the main table for brevity. The MLP, MLP with permutation augmentation, MLP with alignment, INR2Vec, Transformer, and DWS baselines are taken from the DWSNets evaluation~\citep{dws_paper}; these results were originally reported on the MNIST and Fashion-MNIST INR benchmarks. For methods evaluated on CIFAR-10 and augmented CIFAR-10, we additionally use the comparison reported by ScaleGMN~\citep{kalogeropoulos2024scale}, which includes DWS, NFN, NG-GNN, ScaleGMN, and ScaleGMN-B on all four INR benchmarks. We further include results from Neural Graphs~\citep{kofinas}, NFT~\citep{zhou2023neural}, Monomial-NFN~\citep{tran2024monomial}, Quasi-Equivariant Monomial-NFN~\citep{tran2026quasi}, MAGEP-NFN~\citep{vo2025magep}, and ProbeGen~\citep{kahana}. Results unavailable in the corresponding source are marked with ``--''.
\begin{table*}[t]
\centering
\caption{Classification accuracy ($\uparrow$) on INR weight-space benchmarks.}
\label{tab:inr_classification_full}
\resizebox{\textwidth}{!}{
\begin{tabular}{lcccc}
\toprule
\textbf{Method}
& \textbf{MNIST}
& \textbf{FMNIST}
& \textbf{CIFAR-10}
& \textbf{CIFAR-10 Aug} \\
\midrule

StatNN
& $0.398 \pm 0.001$
& $0.418 \pm 0.002$
& --
& -- \\

DWS
& $0.857 \pm 0.006$
& $0.671 \pm 0.003$
& $0.3445 \pm 0.004$
& $0.4127 \pm 0.001$ \\

NFN
& $0.791 \pm 0.008$
& $0.689 \pm 0.006$
& $0.3341 \pm 0.001$
& $0.4660 \pm 0.001$ \\

ScaleGMN
& $0.966 \pm 0.002$
& $0.808 \pm 0.001$
& $0.3882 \pm 0.001$
& $0.5695 \pm 0.005$ \\

NG-GNN
& $0.914 \pm 0.006$
& $0.680 \pm 0.002$
& $0.3604 \pm 0.004$
& $0.4570 \pm 0.002$ \\

NG-T
& $0.924 \pm 0.003$
& $0.727 \pm 0.006$
& --
& -- \\

NFT
& --
& --
& --
& \best{$0.6340 \pm 0.000$} \\

Neural Graphs(128 probes)
& $0.976 \pm 0.001$
& $0.745 \pm 0.008$
& -
& -\\

Monomial-NFN
& --
& --
& $0.3423 \pm 0.003$
& -- \\

Monomial-NFN Quasi
& --
& --
& $0.3532 \pm 0.005$
& -- \\

MAGEP-NFN
& --
& --
& $0.3718 \pm 0.003$
& -- \\

\midrule

ProbeGen
& \second{$0.984 \pm 0.001$}
& \second{$0.877 \pm 0.003$}
& \second{$0.573 \pm 0.007$}
& $0.563 \pm 0.003$ \\

HiddenProbe
& \best{$0.986 \pm 0.0006$}
& \best{$0.886 \pm 0.005$}
& \best{$0.580 \pm 0.002$}
& \second{$0.606 \pm 0.002$} \\

\bottomrule
\end{tabular}}
\end{table*}

\subsection{Full Test-Accuracy Prediction Comparison}
\label{app:regression_full}

Table~\ref{tab:regression_full} provides an extended comparison on the neural-network test-accuracy prediction benchmarks. We include a broad range of approaches for learning from neural-network weights. MLP is a simple baseline operating directly on flattened network parameters, while StatNN~\citep{unterthiner2020predicting} predicts performance from summary statistics of the weights and biases. SNE~\citep{andreis2024set} represents neural networks using a hierarchical set-based encoder. We additionally include the permutation-equivariant NFN variants, NFN$_{\mathrm{NP}}$ and NFN$_{\mathrm{HNP}}$~\citep{zhou2023permutation}, as well as the scale- and permutation-aware ScaleGMN and ScaleGMN-B architectures~\citep{kalogeropoulos2024scale}.

We further compare with graph-based weight-space models. NG-GNN and NG-T~\citep{kofinas} represent the input network as a graph and process it using a graph neural network or relational Transformer, respectively. DNG-Encoder~\citep{wu2026dynamic} extends this approach by constructing a dynamic neural graph that models inference-time information flow. We also report the probe-enhanced Neural Graphs variants with 64 and 128 probes, as well as vanilla probing with 64 and 128 probes, following the evaluation of ProbeGen~\citep{kahana}. Finally, we include NFT~\citep{zhou2023neural}, ProbeGen~\citep{kahana}, and our HiddenProbe model. Results that were not reported for a particular benchmark are marked with ``--''.

\begin{table}[H]
\centering
\small
\setlength{\tabcolsep}{4pt}
\resizebox{\textwidth}{!}{
\begin{tabular}{lccccc}
\toprule
\textbf{Method}
& \textbf{MNIST}
& \textbf{FMNIST}
& \textbf{SVHN}
& \textbf{CIFAR-WP}
& \textbf{CIFAR-GS} \\
\midrule

MLP
& $0.878 \pm 0.001$
& $0.874 \pm 0.001$
& $0.809 \pm 0.003$
& --
& $0.880 \pm 0.000$ \\

StatNN
& $0.926 \pm 0.000$
& $0.915 \pm 0.000$
& $0.843 \pm 0.000$
& $0.719 \pm 0.010$
& $0.914 \pm 0.000$ \\

SNE
& $0.941 \pm 0.000$
& $0.928 \pm 0.001$
& $0.858 \pm 0.003$
& --
& $0.927 \pm 0.000$ \\

\midrule

NFN$_{\mathrm{NP}}$
& $0.937 \pm 0.000$
& $0.922 \pm 0.001$
& $0.856 \pm 0.001$
& --
& $0.922 \pm 0.001$ \\

NFN$_{\mathrm{HNP}}$
& $0.942 \pm 0.001$
& $0.935 \pm 0.000$
& \second{$0.931 \pm 0.005$}
& --
& $0.934 \pm 0.001$ \\

NFT
& --
& --
& $0.858 \pm 0.000$
& --
& $0.926 \pm 0.001$ \\

ScaleGMN
& --
& --
& --
& --
& $0.941 \pm 0.006$ \\

ScaleGMN-B
& --
& --
& --
& --
& $0.941 \pm 0.000$ \\

\midrule

NG-GNN
& --
& --
& $0.863 \pm 0.002$
& $0.804 \pm 0.009$
& $0.930 \pm 0.001$ \\

NG-T
& --
& --
& $0.872 \pm 0.001$
& $0.817 \pm 0.007$
& $0.935 \pm 0.000$ \\

DNG-Encoder
& --
& --
& $0.867 \pm 0.002$
& $0.8743 \pm 0.0021$
& $0.936 \pm 0.000$ \\

Neural Graphs (64 probes)
& --
& --
& --
& $0.888 \pm 0.009$
& $0.938 \pm 0.001$ \\

Neural Graphs (128 probes)
& --
& --
& --
& $0.885 \pm 0.005$
& $0.938 \pm 0.000$ \\

\midrule

Vanilla Probing (64 probes)
& --
& --
& --
& $0.885 \pm 0.008$
& $0.933 \pm 0.001$ \\

Vanilla Probing (128 probes)
& --
& --
& --
& $0.889 \pm 0.008$
& $0.936 \pm 0.001$ \\

\midrule

ProbeGen
& \second{$0.955 \pm 0.000$}
& \second{$0.945 \pm 0.004$}
& $0.873 \pm 0.005$
& \second{$0.932 \pm 0.006$}
& \second{$0.957 \pm 0.001$} \\

HiddenProbe
& \best{$0.966 \pm 0.004$}
& \best{$0.957 \pm 0.002
$}
& \best{$ 0.957 \pm 0.003
$}
& \best{$0.940 \pm 0.006$}
& \best{$0.965 \pm 0.000$} \\

\bottomrule
\end{tabular}}
\caption{Full comparison of test-accuracy prediction performance measured by Kendall's $\tau$ ($\uparrow$). Results that were not reported for a particular benchmark are denoted by ``--''.}
\label{tab:regression_full}
\end{table}

\subsection{Transformer Accuracy Prediction Across Thresholds}
\label{app:transformer_thresholds}
Following the evaluation protocol of~\citet{tran2026quasi}, we additionally evaluate \ourmha on subsets of networks obtained by applying accuracy thresholds of $20\%$, $40\%$, $60\%$, and $80\%$. Tables~\ref{tab:mnist_transformers} and~\ref{tab:agnews_transformers} report the results for MNIST-Transformers and AGNews-Transformers, respectively. \ourmha consistently outperforms the previous state-of-the-art method, Transformer-NFN Quasi, achieving the highest Kendall's $\tau$ across all evaluated thresholds on both benchmarks.
\begin{table}[H]
\centering
\resizebox{\textwidth}{!}{
\begin{tabular}{lccccc}
\toprule
\textbf{Accuracy threshold}
& \textbf{No threshold}
& \textbf{20\%}
& \textbf{40\%}
& \textbf{60\%}
& \textbf{80\%} \\
\midrule

MLP
& $0.866\pm0.002$
& $0.873\pm0.001$
& $0.874\pm0.003$
& $0.874\pm0.006$
& $0.873\pm0.007$ \\

STATNN \citep{unterthiner2020predicting}
& $0.881\pm0.001$
& $0.872\pm0.001$
& $0.868\pm0.001$
& $0.860\pm0.001$
& $0.856\pm0.001$ \\

XGBoost \citep{chen2016xgboost}
& $0.860\pm0.002$
& $0.839\pm0.004$
& $0.869\pm0.003$
& $0.846\pm0.001$
& $0.884\pm0.001$ \\

LightGBM \citep{ke2017lightgbm}
& $0.858\pm0.002$
& $0.835\pm0.001$
& $0.847\pm0.001$
& $0.822\pm0.001$
& $0.830\pm0.001$ \\

Random Forest \citep{breiman2001random}
& $0.772\pm0.002$
& $0.758\pm0.004$
& $0.769\pm0.001$
& $0.752\pm0.001$
& $0.759\pm0.001$ \\

\midrule

Transformer-NFN \citep{tran2025b}
& $0.905\pm0.002$
& $0.899\pm0.001$
& $0.895\pm0.001$
& $0.895\pm0.002$
& $0.888\pm0.002$ \\

Transformer-NFN large \citep{tran2026quasi}
& $0.907\pm0.001$
& $0.904\pm0.002$
& $0.897\pm0.002$
& \second{$0.897\pm0.002$}
& $0.890\pm0.001$ \\

Transformer-NFN Quasi \citep{tran2026quasi}
& \second{$0.911\pm0.001$}
& \second{$0.905\pm0.001$}
& \second{$0.898\pm0.002$}
& \second{$0.897\pm0.001$}
& \second{$0.892\pm0.001$} \\

\midrule

ProbeGen
& $0.898\pm0.001$
& $0.885\pm0.001$
& $0.873\pm0.001$
& $0.872\pm0.001$
& $0.863\pm0.001$ \\
HiddenProbe
& \best{$0.920 \pm 0.003$}
& \best{$0.910 \pm 0.002$}
& \best{$0.904 \pm 0.001$}
& \best{$0.900 \pm 0.002$}
& \best{$0.898 \pm 0.002$}
 \\
\bottomrule
\end{tabular}
}
\caption{Performance measured by Kendall's $\tau$ on the MNIST-Transformers dataset.
Uncertainties indicate the standard error over 5 seeds.}
\label{tab:mnist_transformers}
\end{table}

\begin{table}[H]
\centering
\resizebox{\textwidth}{!}{
\begin{tabular}{lccccc}
\toprule
\textbf{Accuracy threshold}
& \textbf{No threshold}
& \textbf{20\%}
& \textbf{40\%}
& \textbf{60\%}
& \textbf{80\%} \\
\midrule

MLP
& $0.879\pm0.006$
& $0.875\pm0.001$
& $0.841\pm0.012$
& $0.842\pm0.001$
& $0.862\pm0.006$ \\

STATNN \citep{unterthiner2020predicting}
& $0.841\pm0.002$
& $0.839\pm0.003$
& $0.812\pm0.003$
& $0.813\pm0.001$
& $0.812\pm0.001$ \\

XGBoost \citep{chen2016xgboost}
& $0.859\pm0.001$
& $0.852\pm0.002$
& $0.872\pm0.002$
& $0.874\pm0.001$
& $0.872\pm0.001$ \\

LightGBM \citep{ke2017lightgbm}
& $0.835\pm0.001$
& $0.845\pm0.001$
& $0.837\pm0.001$
& $0.835\pm0.001$
& $0.820\pm0.001$ \\

Random Forest \citep{breiman2001random}
& $0.774\pm0.003$
& $0.801\pm0.001$
& $0.797\pm0.001$
& $0.798\pm0.002$
& $0.773\pm0.001$ \\

\midrule

Transformer-NFN \citep{tran2025b}
& $0.910\pm0.001$
& $0.908\pm0.001$
& $0.897\pm0.001$
& $0.896\pm0.001$
& $0.890\pm0.001$ \\

Transformer-NFN large \citep{tran2026quasi}
& $0.913\pm0.001$
& $0.910\pm0.002$
& $0.898\pm0.002$
& $0.898\pm0.001$
& $0.893\pm0.002$ \\

Transformer-NFN Quasi \citep{tran2026quasi}
& \second{$0.914\pm0.001$}
& \second{$0.913\pm0.002$}
& \second{$0.901\pm0.001$}
& \second{$0.903\pm0.002$}
& \second{$0.896\pm0.001$} \\

\midrule

ProbeGen
& $0.885\pm0.001$
& $0.883\pm0.002$
& $0.860\pm0.002$
& $0.857\pm0.001$
& $0.854\pm0.001$ \\
HiddenProbe
& \best{$0.916 \pm 0.002$}
& \best{$0.918 \pm 0.002$}
& \best{$0.906 \pm 0.002$}
& \best{$0.904 \pm 0.002$}
& \best{$ 0.901 \pm 0.002$}
 \\
\bottomrule
\end{tabular}
}
\caption{Performance measured by Kendall's $\tau$ on the AGNews-Transformers dataset.
Error bars indicate the standard error over 5 seeds.}
\label{tab:agnews_transformers}
\end{table}

\subsection{Number of parameters comparison}
\label{num_param}
Here we show the comparison of the number of parameters for different popular models:
%
%
\begin{table}[H]
\centering
\small
\setlength{\tabcolsep}{4pt}
\resizebox{\textwidth}{!}{%
\begin{tabular}{lccccc}
\toprule
\textbf{Method}
& \multicolumn{3}{c}{\textbf{INR classification}}
& \textbf{CNN regression}
& \textbf{Transformer zoo} \\
\cmidrule(lr){2-4}\cmidrule(lr){5-5}\cmidrule(l){6-6}
& \textbf{MNIST} & \textbf{FMNIST} & \textbf{CIFAR-10}
& \textbf{CIFAR-10-GS} & \textbf{MNIST-Trans.} \\
\midrule
MLP (Monomial-NFN setup)   & $2.00$  & $2.00$  & $2.00$  & --     & --      \\
MLP (Transformer setup)   & --      & --      & --      & --     & $0.933$ \\
StatNN                    & --      & --      & --      & $1.06$ & $0.203$ \\
DWS                       & --      & $0.553$ & --      & --     & --      \\
NG-GNN (no probes)        & --      & $0.331$ & --      & --     & --      \\
NFN$_{\mathrm{NP}}$       & $15.0$ & $15.0$ & $16.0$ & $2.03$ & --      \\
NFN$_{\mathrm{HNP}}$      & $22.0$ & $22.0$ & $42.0$ & $2.81$ & --      \\
ScaleGMN                  & --      & $1.128$ & --      & --     & --      \\
ScaleGMN-B                & --      & $1.689$ & --      & --     & --      \\
Monomial-NFN              & $22.0$ & $20.0$ & $16.0$ & $0.25$ & --      \\
Monomial-NFN Quasi        & $22.2$ & $20.5$ & $16.3$ & $0.26$ & --      \\
Transformer-NFN           & --      & --      & --      & --     & $1.812$ \\
Transformer-NFN Large     & --      & --      & --      & --     & $2.857$ \\
Transformer-NFN Quasi     & --      & --      & --      & --     & $1.894$ \\
\midrule
ProbeGen                  & $0.4$ & $0.4$ & $0.45$ & $1.0$ & 1.8     \\
HiddenProbe               & $0.4$ & $0.4$ & $0.45$ & $1.0$ & 1.8      \\
\bottomrule
\end{tabular}%
}
\caption{Number of trainable parameters (millions, M) for published baselines
on three INR classification tasks, CIFAR-10-GS regression,
and MNIST-Transformers. INR and CNN counts are taken from
\citet{tran2024monomial,tran2026quasi} and the ScaleGMN supplement
\citep{kalogeropoulos2024scale}; MNIST-Transformers counts are from
\citet{tran2025b,tran2026quasi}. Published model configurations may
vary across sources and need not exactly match those used in our accuracy
comparisons. ProbeGen and HiddenProbe counts are provided by the authors.
A dash indicates that a count has not been verified for the specified task.}
\label{tab:parameter_counts_selected}
\end{table}

\subsection{Time measurement}
\label{sec:times}
Here we show the time comparison of popular deep weight space models and ProbeGen and \our on the MNIST INR benchmark. 

\begin{table}[H]
\centering
\caption{Runtime on the full MNIST-INR training split
(55{,}000 target INRs, batch size 32).}
\label{tab:mnist_inr_runtime}
\begin{tabular}{lrr}
\toprule
Method & Time (s) & Relative to MLP \\
\midrule
MLP         & 103.461 & 1.00$\times$ \\
NFN         & 90.755  & 0.88$\times$ \\
ProbeGen    & 109.122 & 1.05$\times$ \\
HiddenProbe & 155.508 & 1.50$\times$ \\
NFT         & 164.774 & 1.59$\times$ \\
DWS         & 174.766 & 1.69$\times$ \\
\bottomrule
\end{tabular}
\end{table}
\subsection{Tuning budget}
For benchmarks on which ProbeGen had not previously been evaluated, we ran ProbeGen ourselves. We used the same hyper-parameter selection procedure for ProbeGen and for \our, ensuring that both methods were tuned under the same validation-based protocol. In all cases, we used the original benchmark train/validation/test splits and selected hyper-parameters using only the validation split.

For both methods, we fixed the batch size to \(32\), the number of learned probes to \(128\), and the weight decay to zero. All models were optimized with Adam.

For Transformer experiments, we used $256$ probes.

For each dataset, we performed a 27-configuration sweep over the learning rate and learning-rate scheduler. We considered learning rates

$$
    \{3\times10^{-4},\,5\times10^{-4},\,7\times10^{-4}\}.
$$

For the ReduceLROnPlateau scheduler, we evaluated all combinations of

$$
    \text{patience}\in\{3,5\},
    \,
    \text{factor}\in\{0.2,0.3,0.5,0.7\},
$$

for each learning rate, resulting in \(24\) configurations. We additionally evaluated a cosine-annealing schedule for each of the three learning rates, yielding \(3\) further configurations and \(27\) configurations in total.

Each hyper-parameter configuration was first trained for \(20\) epochs using seed \(0\). For test-accuracy prediction, configurations were ranked exclusively according to validation Kendall's \(\tau\); test performance was not used for model selection. The best configuration for each method was then retrained using five seeds, \(0,\ldots,4\), for \(60\) epochs, and we report the mean and standard error of the resulting test Kendall's \(\tau\).

For CIFAR-10 INR classification, we use the same 27-configuration optimization grid for both ProbeGen and \our, but select the best configuration according to validation classification accuracy. The selected configuration is subsequently evaluated over five seeds. The final CIFAR-10 INR runs are trained for \(30\) epochs.

\end{document}